\documentclass{article}

\usepackage[preprint]{neurips_2026}

\usepackage[utf8]{inputenc} 
\usepackage[T1]{fontenc}    
\usepackage{hyperref}       
\usepackage{url}            
\usepackage{booktabs}       
\usepackage{amsfonts}       
\usepackage{nicefrac}       
\usepackage{microtype}      
\usepackage{xcolor}         

\usepackage{subfig}
\usepackage{amsmath}
\usepackage{amsthm}
\usepackage{cleveref}
\usepackage{algorithm}
\usepackage{algorithmic}
\usepackage{graphicx}
\usepackage{multirow}
\usepackage{mathtools}
\usepackage{wrapfig}
\usepackage{enumitem}
\theoremstyle{plain}
\newtheorem{theorem}{Theorem}[section]
\newtheorem{proposition}[theorem]{Proposition}

\newtheorem{corollary}[theorem]{Corollary}
\theoremstyle{definition}
\newtheorem{definition}[theorem]{Definition}

\theoremstyle{remark}
\newtheorem{remark}[theorem]{Remark}

\title{Recovering Physical Dynamics from Discrete Observations via Intrinsic Differential Consistency}

\author{%
  Yuxiang Luo, Andrew Perrault\\
  Department of Computer Science and Engineering\\
  The Ohio State University\\
  \texttt{\{luo.929,perrault.17\}@osu.edu}
}

\begin{document}
\maketitle

\begin{abstract}
  Recovering continuous-time dynamics from discrete observations is difficult because local supervision (e.g., pointwise regression targets, derivative approximations, or equation residuals) loses fidelity as the observation interval grows. We replace local supervision with a global structural constraint: any flow representing autonomous dynamics must satisfy the semi-group property under time translation. We train a time-conditioned secant velocity field whose deviation from this property, which we call Symmetry Rupture, serves two purposes. As a training regularizer, it confines the hypothesis space to flows that compose consistently across temporal scales. As an inference oracle, it lets the solver select the largest step size that preserves internal consistency, replacing the local truncation error that conventional adaptive solvers depend on. On the diffusion-reaction benchmark under time-informed inference, our method reduces rollout RMSE by 87\% while using 5x fewer function evaluations than a Neural ODE baseline. In the more demanding direct auto-regressive setting, where the model must predict distant future frames without intermediate temporal cues, our adaptive solver allocates compute based on local geometric complexity---maintaining the lowest rollout RMSE on two of three PDE benchmarks while baselines either diverge or require up to an order of magnitude more function evaluations to remain stable.
\end{abstract}

\section{Introduction}
Learning physical dynamics from data constitutes a cornerstone of scientific machine learning (SciML)~\citep{chen2018neural,raissi2019physicsinformed,li2021fourier,helwig2023group,zhang2024sinenet,trenta2026learning}.
While extensive efforts have been dedicated to optimizing network architectures and learning paradigms, a persistent challenge remains: bridging the representational gap between the continuous-time nature of physical dynamics and the real-world discrete-time observations.

This gap manifests in the dichotomy between prevailing approaches.
Discrete-time state regression schemes~\citep{chen2023implicit,zhang2024sinenet,khrabry2025hierarchicalembedding,ghanem2025learning} are bound by fixed time steps, resulting in temporal rigidity that fails to generalize across varying horizons.
Conversely, Neural ODEs~\citep{chen2018neural,huang2020learning,trenta2026learning} can learn dynamics that are numerically plausible but physically inconsistent, due to the absence of structural supervision between observations.
Physics-Informed Neural Networks (PINNs)~\citep{raissi2019physicsinformed} enforce physical compliance through explicit equation-based supervision, but require knowledge of the governing equations, which is often unavailable in practice.

These paradigms are asymptotically equivalent as the sampling interval approaches zero ($\Delta t \to 0$).
However, with the sparse and noisy data typical of real-world settings, this theoretical equivalence breaks down.
The shared limitation is a reliance on local constraints---discrete regression targets, derivative approximations, or point-wise equation residuals---whose effectiveness degrades as the observation interval grows.

Motivated by this observation,
we propose to treat the dynamic evolution not as a sequence of discrete states, but as a continuous flow, and to learn models that recover such flows from discrete data.
We formalize this by constructing a hypothesis space $\mathcal{H}$ of flows satisfying \textbf{time-translation symmetry} (TTS) on an expanded state-temporal manifold ($\psi_{\theta}(s_{t},\Delta t)\to \frac{s_{t+\Delta t}-s_{t}}{\Delta t}$) such that the learned dynamics are consistent across varying temporal scales.
For implementation,
we train a time-conditioned secant velocity field ($\psi_{\theta}(s_{t}, \Delta t)$) with a triangle-consistency loss (Symmetry Rupture $\mathcal{R}_{3}$) that is based on the temporal semi-group property of autonomous system (derived from TTS); and reuse the same loss as an adaptive step-size criterion at inference:

\textbf{1.  Learning Representational Consistency:} We explicitly penalize SR during training to constrain the hypothesis space $\mathcal{H}$ to flows that satisfy strict semi-group property. This optimization forces the learned representations across different $\Delta t$-strata to compose a TTS-compatible space---promoting that the learned dynamics remain consistent under sequential composition of flows across varying temporal scales. SR combined with the temporal expansion of the velocity field $\psi_{\theta}(s_{t},\Delta t)$ unfolds the geometric structure of the dynamics to a multi-scale bundle of flows to enable large-step Euler's method for extrapolation while achieving faithful approximation of the true dynamics, hence results in superior efficiency and accuracy compared to baseline models.

\textbf{2. Inference (Consistency-Aware Oracle):}
Existing adaptive solvers dictate step sizes based on \textit{local truncation error}---a metric that assumes the exactness of the governing differential equations and focusing on the extrinsic physical geometry. However, for neural surrogate dynamics, the primary failure mode often stems from representational collapse.
By repurposing our SR as an intrinsic consistency-aware oracle, our solver explicitly monitors the network's internal representational consistency. It dynamically searches for the largest admissible macroscopic step size that safely remains within the inconsistency-suppressed envelope (where the learned representation holds consistent between step sizes). This enables highly efficient long-horizon rollouts while preserving the semi-group property, thereby mitigating the compounding errors characteristic of standard extrapolation.

Our contributions are summarized as follows:
\begin{enumerate}[leftmargin=*, itemsep=0pt, parsep=0pt]
  \item \textbf{Consistency-Based Hypothesis Space:} We propose grounding the learning of physical dynamics on a TTS-guided hypothesis space $\mathcal{H}$, enabling the recovery of continuous dynamics from discrete data without explicit equation supervision.
  \item \textbf{Temporal Expansion for Unbiased Learning and Representational Consistency:} We leverage a temporal expansion of the velocity field $\psi_{\theta}(s_{t},\Delta t)$ to eliminate the systematic bias in tangent velocity approximation and creating the representational capacity for building TTS condition into the model, thus mitigating the conflicting learning objectives between learning the true dynamics and learning a flat surrogate dynamics for extrapolation.
  \item \textbf{Unified Training and Inference Framework:} We show that the same consistency metric serves dual roles: as a training regularization that promotes physically compatible flows, and as a consistency-based error measure for adaptive step-size scheduling during long-horizon inference.
  \item \textbf{Adaptive Consistency Solver:} Using the model's own consistency estimate as a local error indicator, we introduce an adaptive solver that dynamically adjusts integration step sizes, improving both accuracy and efficiency by allocating computation based on the local complexity of the learned dynamics.
\end{enumerate}

By constraining the hypothesis space to TTS-compatible flows, our method incurs a marginal $O(1)$ training overhead yet yields up to a significant reduction in long-horizon prediction error and decrease in function evaluations compared to existing continuous-time baselines. Ablation studies further isolate the contribution of each component---in particular, distinguishing our composition-consistency approach from naive multi-scale regularization.

\section{Hypothesis Space $\mathcal{H}$ and Time-translation Symmetry}
Our method conceives two core principles:
\begin{enumerate}[leftmargin=*, itemsep=0pt, parsep=0pt]
  \item Construct a hypothesis space $\mathcal{H}$ that follows the continuous-time structure of physical dynamics
  \item Leverage discrete observations $\mathcal{D}$ as ground truth to collapse this space towards the true dynamics.
\end{enumerate}
Let $\Phi_{\Delta t}(s)$ be the evolution operator that maps the system state $s$ to its future state after a time duration $\Delta t$, i.e., $\Phi_{\Delta t}(s):\mathcal{M} \to \mathcal{M}$.
We now define $\mathcal{H}$ as the set of all $\Phi$s supporting some natural physical dynamics on $\mathcal{M}$.
Following the principle of constructing a physics-compatible hypothesis space, the purpose of learning shifts from fitting discrete observations to identifying the best-fitting structure ($\Phi^{\star}$) within the hypothesis space $\mathcal{H}$.

\textbf{Time-translation Symmetry} describes the fundamental property of natural physical systems, that the laws of physics are invariant under shifts in time, leading to the conservation of energy in isolated systems~\citep{noether1918invariant}.
Since $\Phi_{t}$ supports the temporal evolution of such autonomous systems, we discuss the group properties following~\citet{olver1993applications} in~\Cref{theorem:group}, which leads to the generalized time-translation symmetry condition (in short, Symmetry Condition) as~\Cref{eq:groupcondition}:
\begin{equation}
  \forall s\in \mathcal{M}, t_1, t_2 \in \mathbb{R}^{+},\Phi_{(t_{1}+t_{2})}(s) - \Phi_{t_2}(\Phi_{t_1}(s)) = 0
  \label{eq:groupcondition}
\end{equation}
where the semi-group property is explicitly enforced for all autonomous systems, and the inverse map condition is implicitly satisfied for conservative systems but relaxed for dissipative systems.

We define the temporal-evolution-operator-based Symmetry Rupture $\mathcal{R}_{k}^{\Phi}$ over the path-independence residual between the multi-step composed flow and the direct flow prediction:
\begin{equation}
  \label{eq:rupture_error}
  \begin{split}
    \mathcal{R}_{k}^{\Phi} & \coloneqq \underbrace{\Phi_{\frac{\Delta t}{k-1}}\circ\cdots\circ\Phi_{\frac{\Delta t}{k-1}}}_{\text{Integral of Vector Field}} - \underbrace{\Phi_{\Delta t}}_{\text{Direct Flow Prediction}}
  \end{split}
\end{equation}
A Symmetry conditioned flow would yield closure of the loop as $\mathcal{R}_{k}^{\Phi} \equiv 0$.
Therefore, we constrain the hypothesis space $\mathcal{H}$ of our learned model $\psi_{\theta}$ by enforcing this temporal closure:
\begin{equation}
  \label{eq:topo_constraint}
  \mathcal{H} = \left\{ \Phi \mid \|\mathcal{R}_{k}^{\Phi}\|_2 = 0 \right\}
\end{equation}
where the exactness of this constraint is relaxed in practice to allow for optimization and generalization.
This ensures that the learned discrete steps $\{\Phi_{\Delta t}(\cdot)\}$ generate a consistent one-parameter semi-group flow, rather than a disjoint collection of mappings. In practice, this objective is penalized over the training distribution. Consequently, we rely on the model's empirical generalization to maintain consistency across the broader   state-temporal manifold.
We show in \Cref{sec:triangle_consistency_sufficiency} that $k=3$ is sufficient to enforce the Symmetry Condition. Therefore, we formulate the problem of learning continuous dynamics from discrete observations as minimizing Symmetry Rupture of the induced transport along closed temporal loops (particularly over triangular loops $\mathcal{R}_{3}$) to enforce the time-translation symmetry.

\section{Temporal Expansion for Representational Consistency}
\label{sec:cvf}
We follow~\Cref{eq:topo_constraint} to shape the hypothesis space $\mathcal{H}$, however, we show in~\Cref{sec:tangent_velocity_learning_with_nre,sec:learning_secant_velocity_with_nre} that the representational inconsistency between discrete observations and continuous dynamics poses a significant challenge for learning a surrogate model $v_{\theta}$ for tangent velocity field; and the conflicting learning objectives between accurate tangent velocity approximation for the true dynamics and flat surrogate dynamics for extrapolation further exacerbate this challenge.
We resolve this by performing a temporal expansion of the velocity field $\psi_{\theta}(s_{t},\Delta t)$,
\begin{equation}
  \begin{split}
    \psi_{\theta}(s_{t},\Delta t) & \coloneqq \frac{\Phi_{\Delta t}(s_{t})-s_{t}}{\Delta t}
  \end{split}
\end{equation}
where we exploit the ground truth secant velocity field ($v_{\Delta t}$) calculated from discrete observations to provide direct supervision for learning the continuous-time dynamic. With the temporal expansion, the model learns building blocks of the dynamics at multiple temporal scales, which reduces learning of true dynamics as a sub problem ($\Delta t=0$) of learning a consistent cross-temporal-bundle representation, and thus mitigating the conflicting learning objectives and representational inconsistency~(\Cref{sec:tangent_velocity_learning_with_nre}).
Such expansion leads to a reformulation of the Symmetry Rupture $\mathcal{R}_{k}$ over the velocity field and the temporal decompositions ($\Delta t = \sum_{i=1}^{k-1} \Delta t_i$):
\begin{equation}
  \mathcal{R}_{k}(s_{t},\Delta t) =\sum_{i=1}^{k-1} \frac{\Delta t_i}{\Delta t}\psi_{\theta}\left(s_{t}^{i},\Delta t_i\right)-\psi_{\theta}\left(s_{t},\Delta t\right)
  \label{eq:topo_error_cvf}
\end{equation}
where $s_t^{i} = s_t + \sum_{j=1}^{i-1} \Delta t_j\cdot \psi_{\theta}(s_t^{j},\Delta t_j)$. \Cref{appendix:symmetry_rupture_velocity_field,appendix:taylor_expansion} show detail derivation.

The training objective of the velocity field $\psi_{\theta}$ is defined in~\Cref{eq:cvf_loss} to match the secant velocity field ($\psi_{\theta}(s,\Delta t)$) and to minimize the $\mathcal{R}_{3}$ over the randomly sampled decomposition $\Delta t_{i}$s:
\begin{equation}
  \mathcal{J}_{\theta} =  \mathbb{E}\bigg[ \| \psi_{\theta}(s_{t}, \Delta t) - v_{\Delta t} \|_{2}^{2}+ \|\mathcal{R}_{3}(s_t,\Delta t)\|_{2}^{2}\bigg]
  \label{eq:cvf_loss}
\end{equation}
We further show in~\Cref{sec:tangent_supervision_ablation} that our method achieves superior performance compared to methods that learn with spline-based tangent velocity supervision, providing evidence for our structured consistency enforcement as a more accurate and robust supervision signal than the biased tangent velocity from spline approximation.

\textbf{Cascaded Normalization for Stable Training}
Now that the learning process involves processing of original state $s$ and the velocity field $\psi_{\theta}(s,\Delta t)$, we apply a unique cascaded normalization scheme to the velocity field that aligns state and velocity scaling via a pushforward transformation to stabilize the training and inference, which is detailed in~\Cref{appendix:normalization_scheme}. Our ablation studies in~\Cref{fig:rollout_normalization_dtest1} show that this normalization scheme is critical for stabilizing the training and inference, therefore significantly improves the performance of the model.

\section{Representational Consistency-Driven ODE Solver}
\label{sec:adaptive_solver}

One critical difference between equation-based numerical solvers and learning-based neural surrogates is that
the latter is only trained on a finite distribution of time steps $\Delta t$ and thus may not maintain its structural integrity when extrapolating far beyond the training distribution.
The Symmetry Condition~(\Cref{eq:groupcondition}) constructs robust representational blocks within the temporal bounds of $\left[\delta_{min},\mathop{\max}_{\Delta t\sim\mathcal{D}}\left[\Delta t\right]\right]$. Within this temporal envelope, the \textbf{representational inconsistency} (Term I in~\Cref{eq:secant_rupture_decomposed}) is suppressed to create a well-structured region where the model can make large temporal leaps without violating the Symmetry Condition.

Capitalizing on this insight, we repurpose the Symmetry Rupture $\mathcal{R}_{k}$ into a consistency-aware error measure for dynamic step-size scheduling during inference to explicitly monitors the internal integrity of the learned flow.
By dynamically searching for the maximal step size $\Delta t$ that maintains the Symmetry Rupture below a defined tolerance ($\mathcal{R}_{k} \le \epsilon$), the solver balances between inference efficiency via large macroscopic steps and remaining within the representationally consistent regime to mitigate error accumulation.
We follow two principles for the design of the step control mechanism:
\begin{enumerate}[leftmargin=*, itemsep=0pt, parsep=0pt]
  \item \textbf{Minimum Step Size Constraint:} The model cannot meaningfully resolve dynamics below the temporal resolution of its training data, denoted as $\delta_{min}$. The step control mechanism should not reduce the time step below $\delta_{min}$.
  \item \textbf{Dynamic Relative Tolerance:} A static tolerance is ill-suited for the generalization of neural operators. Instead, the error tolerance should adapt to the current operational scale of the solver to ensure meaningful error control across different scales of dynamics.
\end{enumerate}
A fundamental shift from spatial error control to temporal error control is proposed and leads to the step control mechanism (\Cref{eq:step_size_update}) that assumes a second-order error convergence ($p=2$) with respect to the step size:
\begin{equation}
  \tau_{new} = \max\bigg(\delta_{min}, \sqrt{\frac{\delta_{min}\times t_{curr}}{\hat{\mathcal{R}}(s_{t},t_{curr})}}\bigg)
  \label{eq:step_size_update}
\end{equation}
where $\hat{\mathcal{R}}(s_t, t_{curr})$ is the normalized symmetry error~(\Cref{sec:inference_with_normalized_symmetry_error}) acting as a dimensionless measure of representational inconsistency.
The novel step control principles and \Cref{alg:grcs} are shown in \Cref{appendix:step_control_mechanism}.
With the proposed learning objective and adaptive solver, we formally define the Consistent Vector Flow (CVF) model as our main method that adopts the Symmetry Condition as a training regularization and the Symmetry Rupture as a representational inconsistency measure for step-size control during inference.
\section{Evaluations}
For evaluations, we compare our proposed CVF model with previous state-of-the-art methods and on public datasets used in~\citet{takamoto2022pdebench}.

\paragraph{Datasets and Evaluation Settings} The datasets used in the evaluations include: 2D diffusion reaction (DR), 2D Shallow Water (SW), and 2D incompressible Navier-Stokes equation in turbulence (CFD Turb.). We additionally include 2D wave equation (WE) that is strictly conservative in our ablation studies to evaluate the effect of semi-group and full-group constraints in dissipative and conservative systems.
We present the tasks and datasets details in~\Cref{appendix:pdebench_datasets}.

The evaluation metrics includes the root mean squared error of rollout ($\mathcal{L}_\tau$) predictions and the average number of function evaluations (NFE) during rollout predictions to measure the computational efficiency. \Cref{appendix:evaluation_metrics} provides the detailed definitions of these metrics.

We include two different inference modes in evaluation of the baselines. Time-Informed Inference~(\Cref{appendix:time_informed_inference}) setting is widely used in literature; the other setting, Direct Auto-regressive Inference, requires the model to predict a distant future frame without intermediate temporal information (~\Cref{appendix:direct_autoregressive_inference}).

\textbf{Baseline Models}
We consider below baseline models that implement various learning paradigms, with details discussion in~\Cref{appendix:baseline_implementations}:
\begin{enumerate}[leftmargin=*, itemsep=0pt, parsep=0pt,topsep=0em,partopsep=0em]
  \item Local Derivative Matching: Forward Euler Matching (SM)~\citep{liu2025learning}; Spline-Smoothed Gradient Matching (TM)~\citep{hou2025cfo}.
  \item Trajectory Unrolling: Auto-regressive $k$-step Pushforward (PF-$k$)~\citep{brandstetter2022message}.
  \item Integration Matching: Adjoint-based Trajectory Matching (N-ODE)~\citep{chen2018neural}.
  \item Generative Paradigm: Continuous-Time Flow Matching (CFM)~\citep{lipman2023flow}.
\end{enumerate}
\subsection{Time-Informed Setting Analysis}
\label{sec:sota_comparison}
\begin{figure}[htpb]
  \centering
  \subfloat[DR]{\includegraphics[width=.33\linewidth]{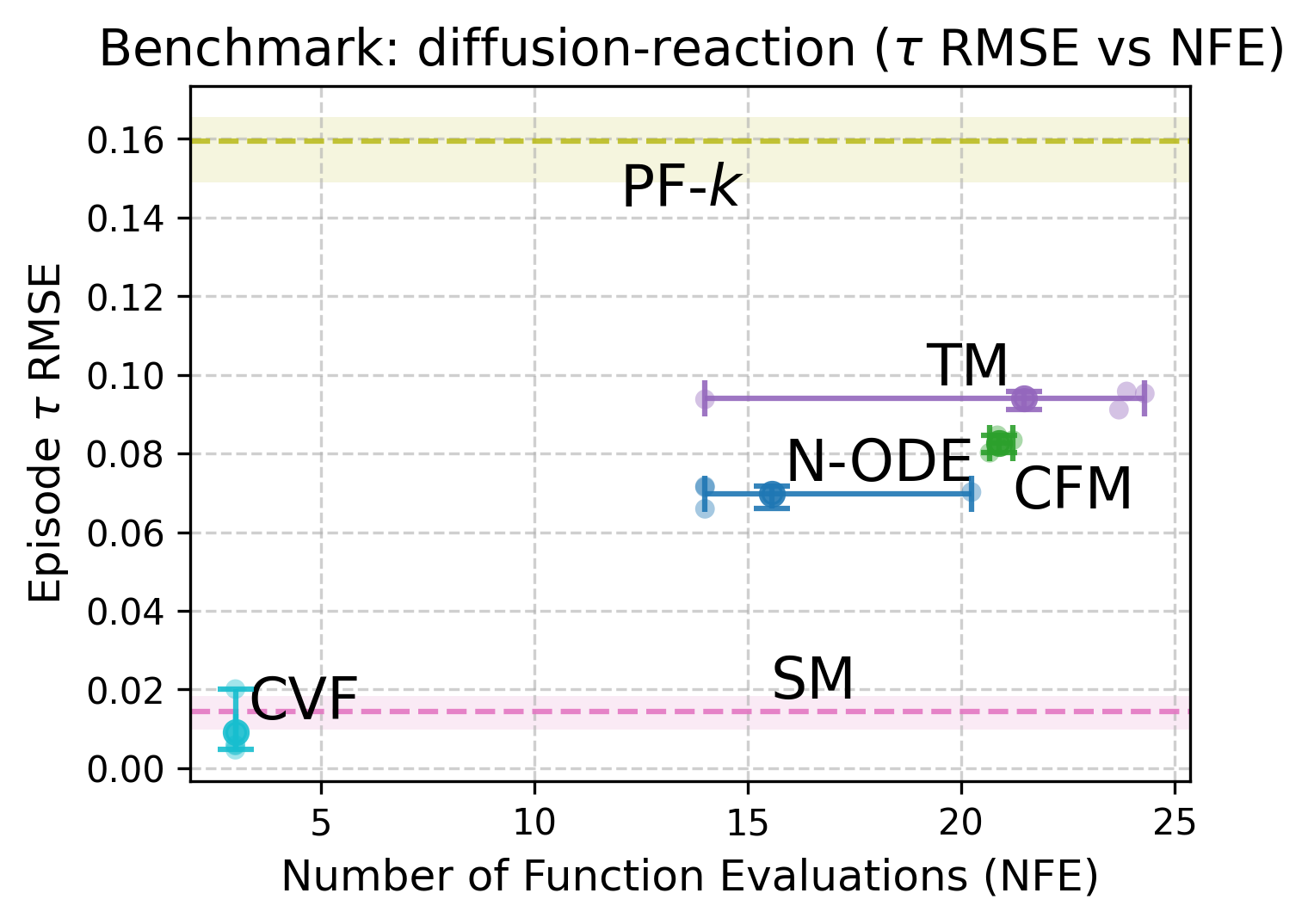}}
  \subfloat[SW]{\includegraphics[width=.33\linewidth]{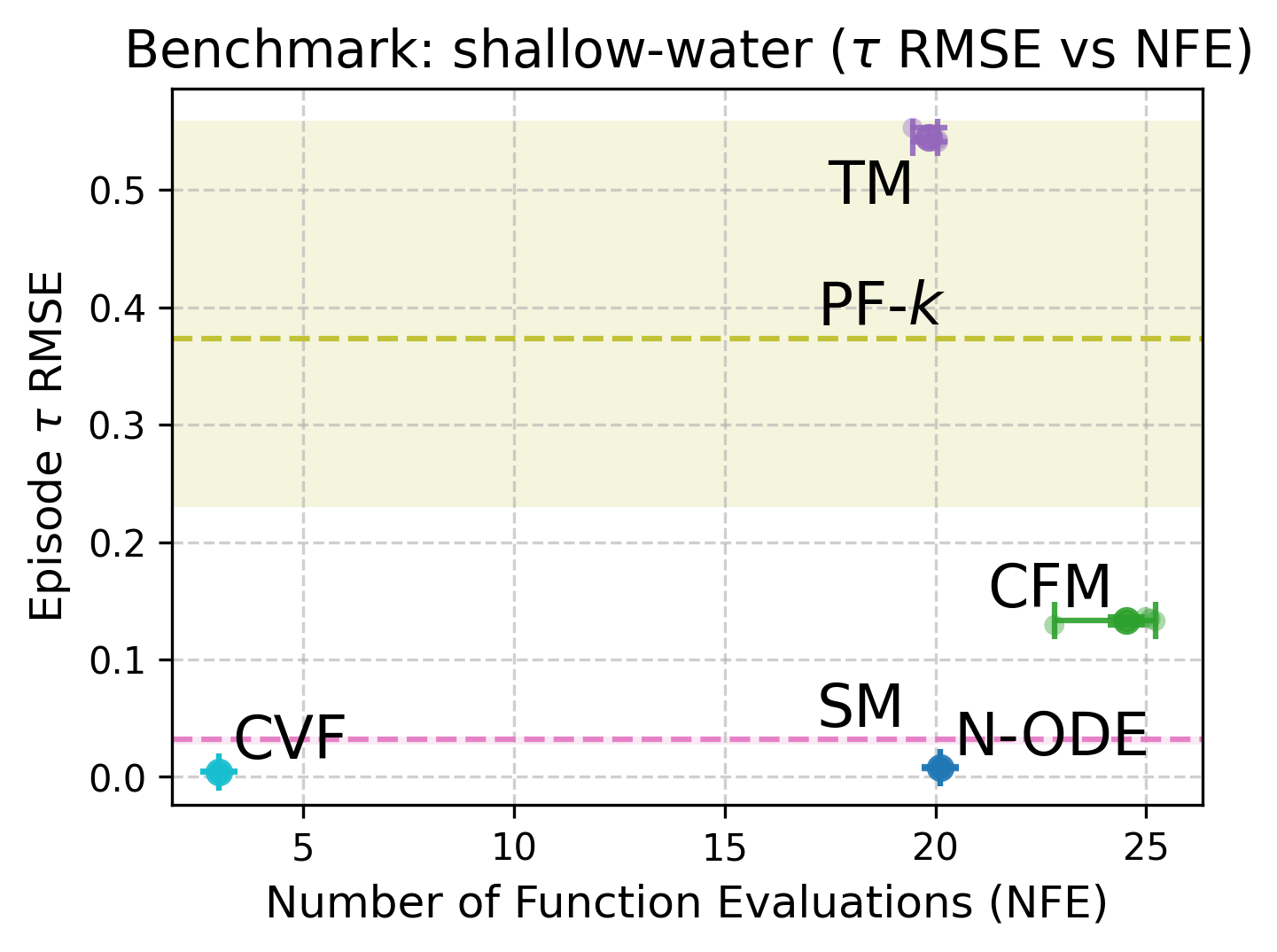}}
  \subfloat[CFD Turb.]{\includegraphics[width=.33\linewidth]{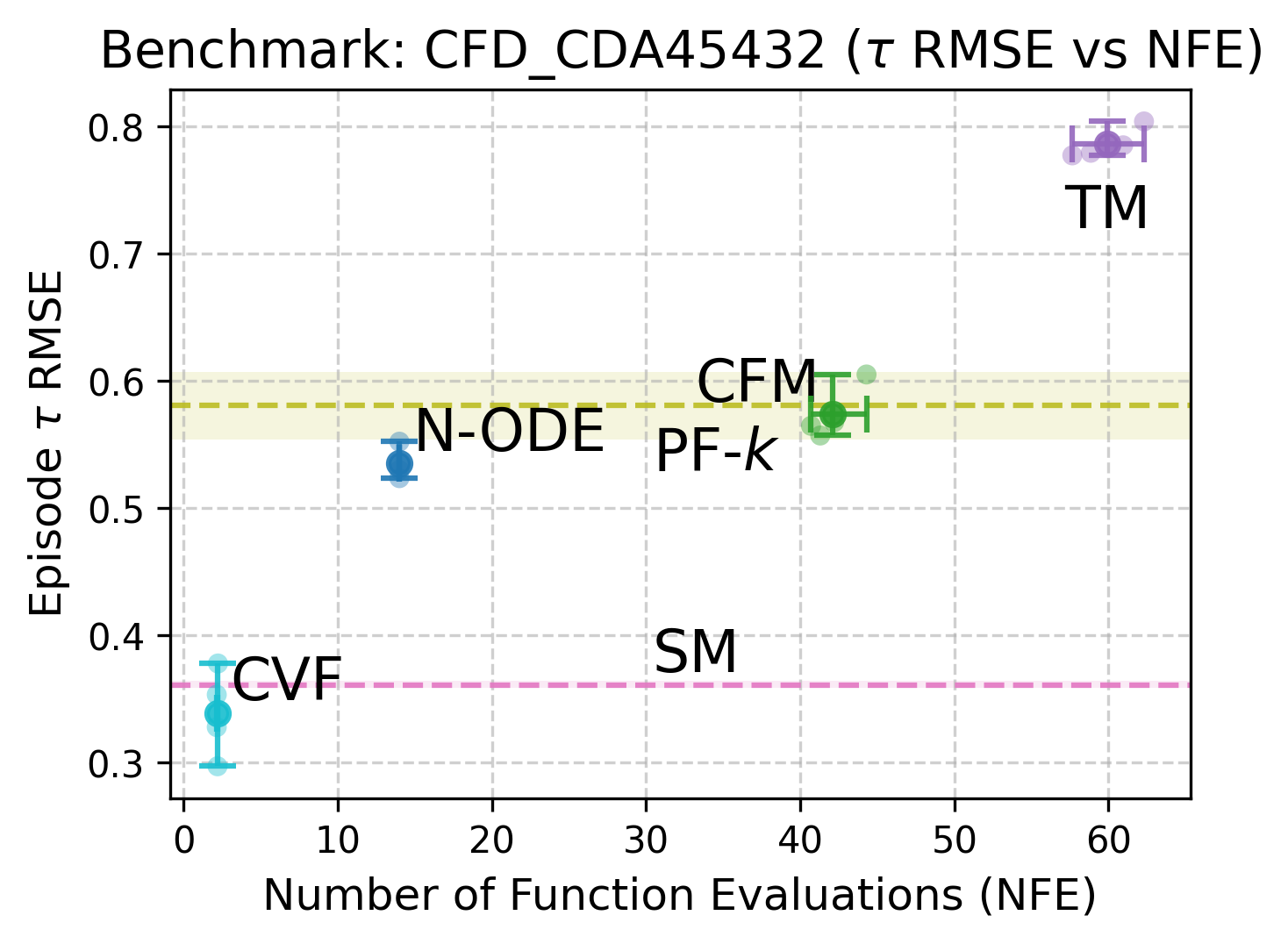}}
  \caption{Accuracy-Efficiency Trade-off on the Time-Informed Inference setting.
    The x-axis is the average NFE during rollout, and the y-axis is the rollout error $\mathcal{L}_{\tau}$. Detail matrices are in \Cref{tab:sota_comparison_pin_1}.
  }
  \label{fig:accuracy_efficiency_tradeoff}
\end{figure}

\textbf{CVF: Accuracy \& Efficiency}, CVF achieves substantially superior long-term accuracy and efficiency compared to all baselines.
Comparing to the CFM and TM baselines, CVF achieving an order of magnitude reduction in RMSE error ($\mathcal{L}_{\tau}$ of $0.009$ vs $0.083$ and $0.094$) in the DR dataset~(\cref{fig:rmse_nfE_scatter_rollout_dr,tab:sota_comparison_pin_1}). The efficiency gain of CVF is also significant--the NFE of CVF is $\leq3$ across all datasets, which signals a rejection-free, single-step rollout behavior in this time-informed setting, while the NFE of CFM and TM are $\geq20$ across all datasets, indicating a heavy reliance on high-order solvers to mitigate their structural bias and numerical inconsistency.

\textbf{Exploiting Training Priors:}
The \textbf{SM and N-ODE} models achieve the second lowest RMSE on the DR (SM vs CVF: $0.014/0.009$) and SW (N-ODE vs CVF: $0.008/0.004$) datasets, their success here is largely attributed to exploiting the training priors:
SM model exploits the temporal grid of data sampling rate ($\Delta t = t_{i+1}-t_{i}$) that perfectly aligns with its training prior, allowing it to perform naive one-step predictions with low error on the DR dataset; but for the SW dataset, SM's Euler state-regression strategy is vulnerable to highly non-linear dynamics.
N-ODE model exploits auto-regressive rollouts during training (RK4 with $\Delta t=0.1\times (t_{k+1}-t_{k})$), which implicitly regularizes the model to be more robust to compounding errors during inference, thus achieving the second lowest RMSE on the SW dataset. But the trade-off between accuracy and efficiency is notable--the NFE of N-ODE increases from $15.6$ in DR to $20.1$ in SW, indicating a trend of increasing reliance on high-order solvers to mitigate the compounding errors in more complex dynamics.
\subsection{Direct Auto-regressive Analysis}
\label{sec:direct_autoregressive_analysis}
\Cref{fig:rollout_accuracy_efficiency_tradeoff,tab:sota_comparison_rollout_1} illustrates the accuracy-efficiency trade-off under the Direct Auto-regressive Inference setting.
\begin{figure}[htpb]
  \centering
  \subfloat[DR]{\includegraphics[width=.33\linewidth]{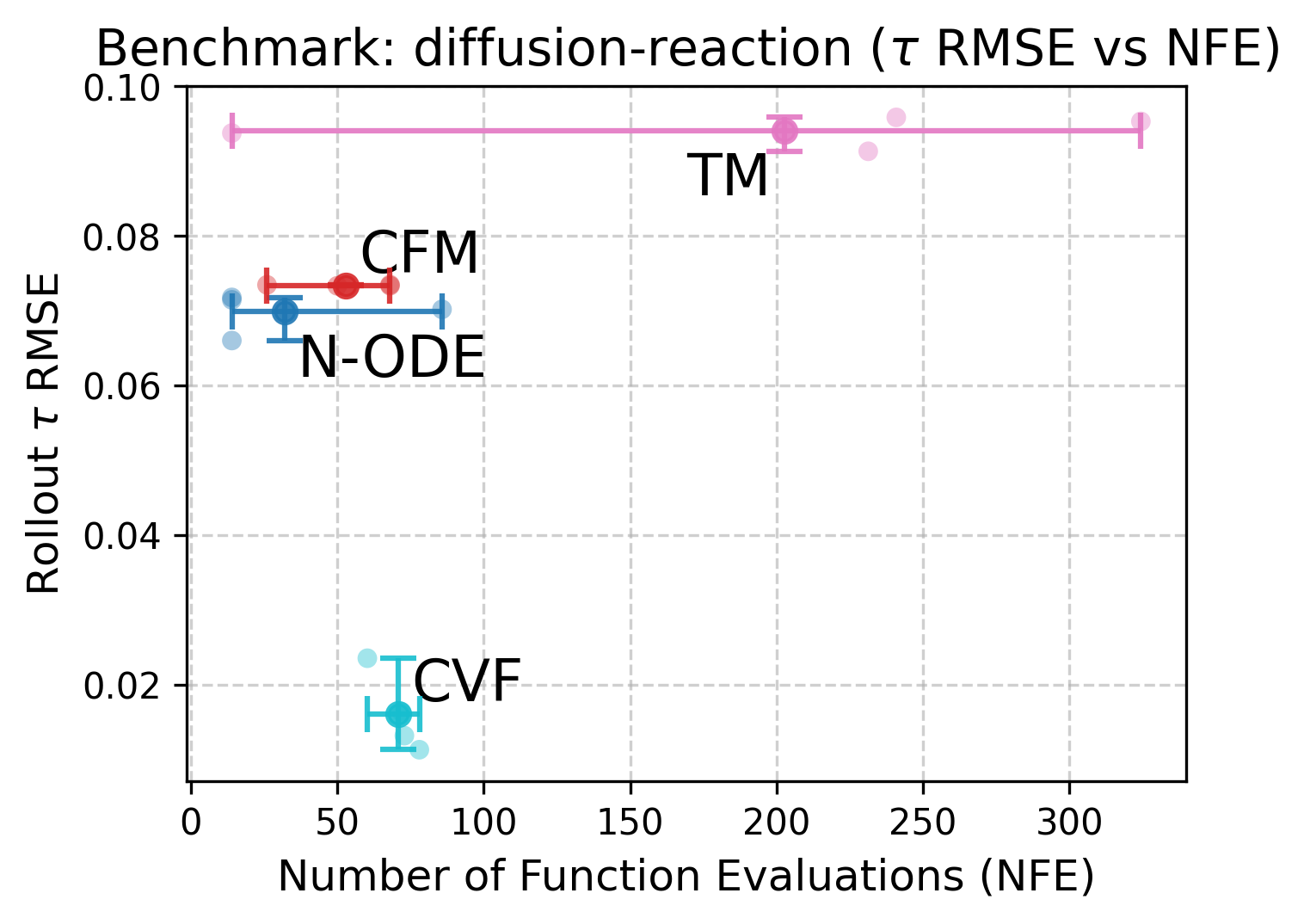}\label{fig:rmse_nfE_scatter_rollout_dr}}
  \subfloat[SW]{\includegraphics[width=.33\linewidth]{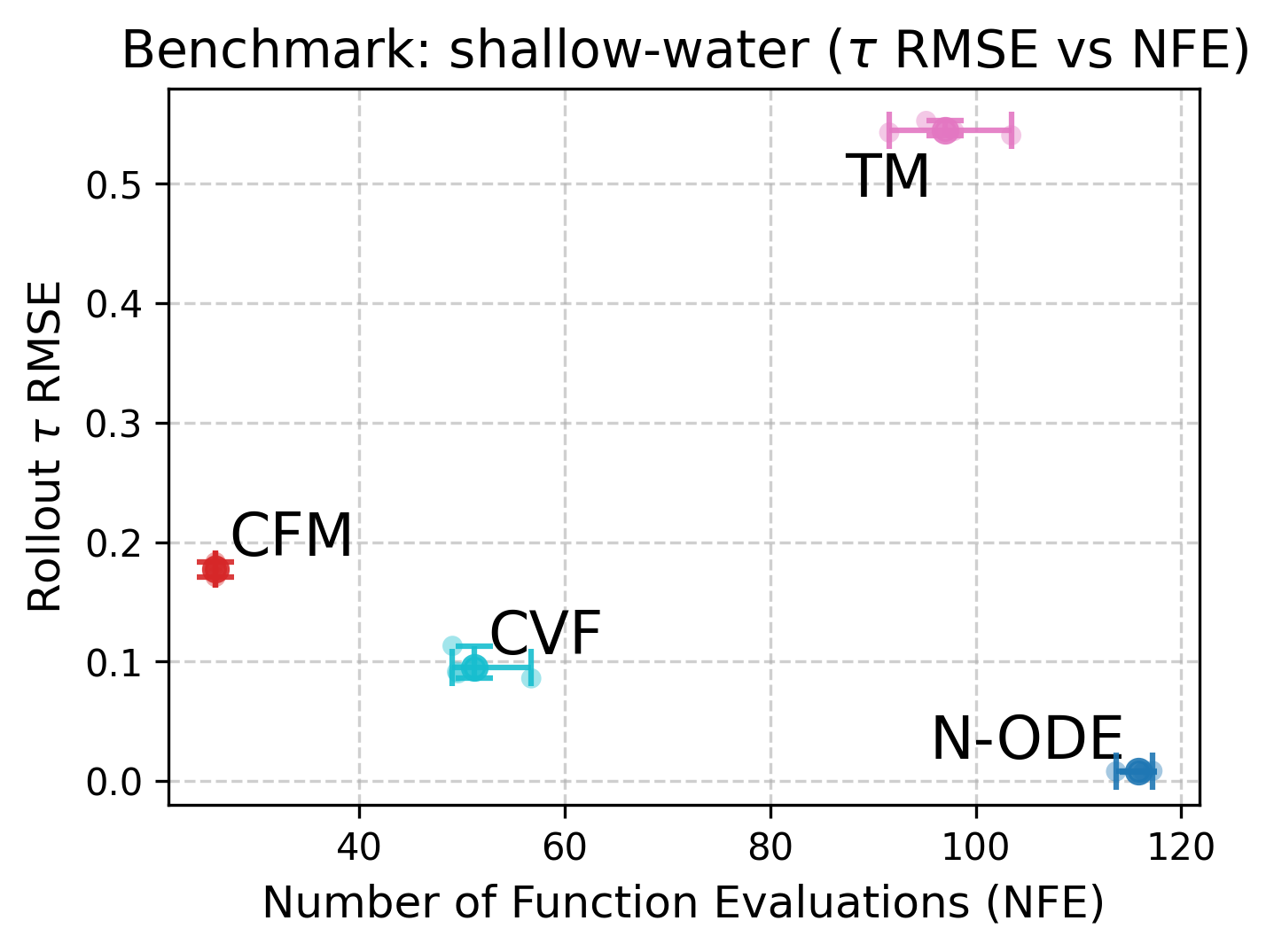}\label{fig:rmse_nfE_scatter_rollout_sw}}
  \subfloat[CFD Turb.]{\includegraphics[width=.33\linewidth]{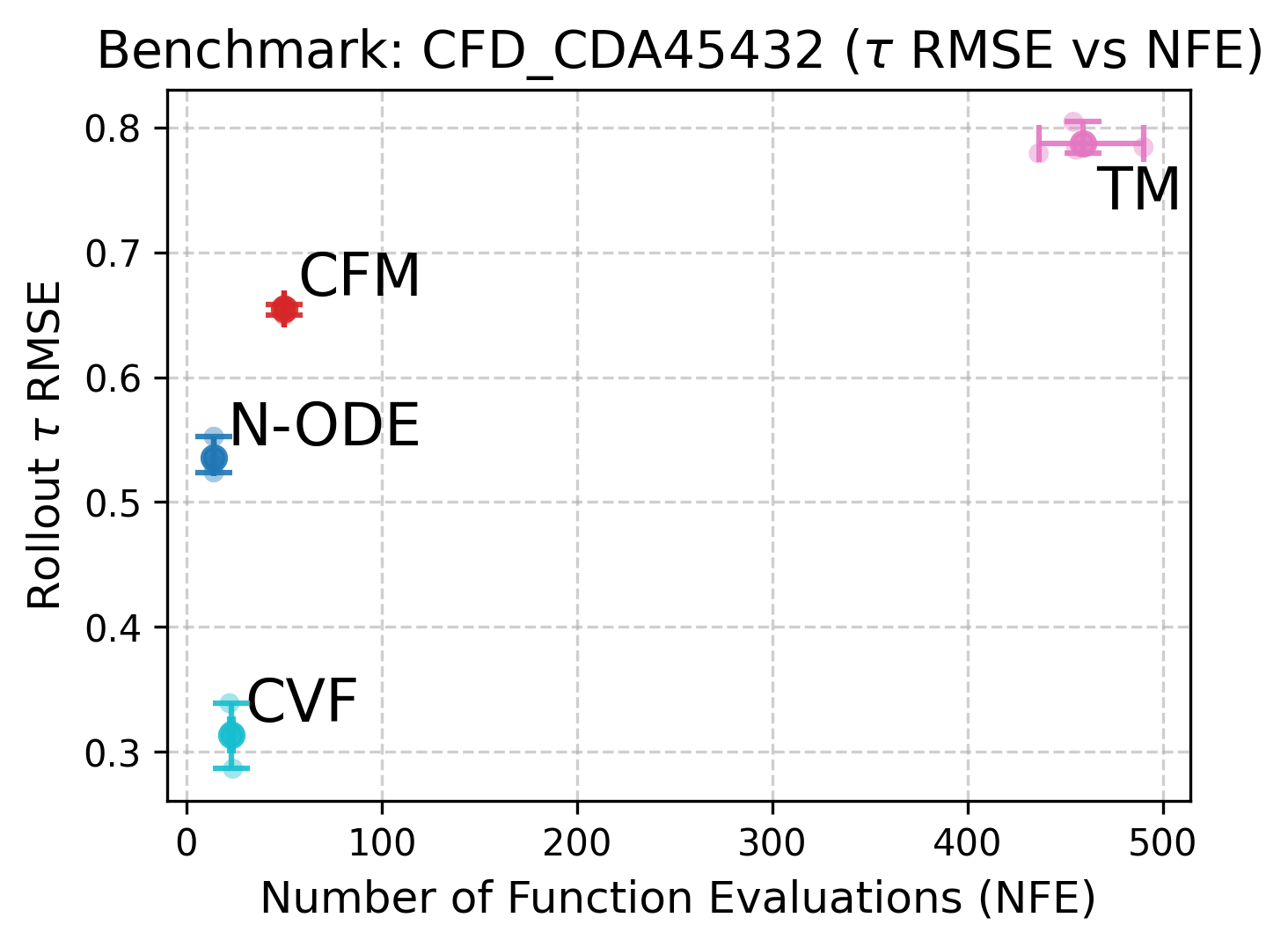}}
  \caption{Accuracy-Efficiency Trade-off on the Direct Auto-regressive Inference setting.}
  \label{fig:rollout_accuracy_efficiency_tradeoff}
\end{figure}
Deprived of the dataset's fine-grain temporal grid, models are forced to navigate the extended integration horizons ($\Delta t = t_{k}-t_{1}$) relying entirely on their own step-size scheduling capabilities.

Under this highly demanding setting, CVF consistently maintains the best accuracy-efficiency trade-off by autonomously dividing the large time span, dynamically scaling its NFE based on the local geometric complexity of the manifold rather than prescribed time-step schedules, and tightly locking the rollouts within its physically accurate trajectory attractor.

\textbf{False Confidence and Pathological Landscapes: N-ODE's Failure Beyond SW.}
The performance of N-ODE is notably better than CVF on the SW dataset under the Direct Auto-regressive setting, yet this comes at an exorbitant computational cost: its NFE skyrockets from $20.1$ in the time-informed setting to $115.9$. This indicates a profound reliance on high-order solvers and the training prior to forcefully mitigate compounding errors during long-horizon extrapolation. \Cref{appendix:n_ode_vs_cvf} provides a detailed analysis of this phenomenon.

The robust superiority of CVF is fundamentally a two-fold success:

\textbf{Learning Structurally Compatible Dynamics}: Methods like CFM and TM often learn biased or incomplete representations of the underlying dynamics that deviate from the true physical phase space.
Consequently, applying heavy, high-order solvers like dopri5 merely integrates these erroneous vector fields more precisely—resulting in systematically incorrect trajectory rollouts, as evidenced by their high RMSE regardless of high NFE (TM) and low NFE (CFM, benefiting from the Optimal Transport regularization).
By contrast, CVF avoids this by promoting the base vector field to be geometrically consistent with valid physical flows before the solver even steps in.

\textbf{Overcoming Learned Stiffness: N-ODE vs. CVF.}
CVF substantially mitigates the learned stiffness by explicitly embedding semi-group properties, executing massive macroscopic steps safely and preserving the true acceleration purpose of a neural surrogate. This strongly supports our core thesis: a sophisticated solver cannot rescue a structurally flawed flow. Without the rigorous semi-group property constraints naturally embedded in CVF, models construct confidently wrong phase spaces, whereas CVF exclusively restricts learning to the correct physical geometry, enabling both low NFE and state-of-the-art accuracy.

\section{Ablation Study}
\label{sec:tri_ablation}
We conduct ablation studies by varying the downsampling strategy ($k$,~\Cref{subsec:ablation_downsampling}), integral solver \& normalization schema~(\Cref{subsec:ablation_solver_normalization}), and dissipativity-semi-group interaction~(\Cref{subsec:ablation_semi_group}) on the DR dataset.
\subsection{Training Dynamics and Downsampling Strategies.}
\label{subsec:ablation_downsampling}

\textbf{Core Takeaway:} Uniform downsampling is provably beneficial for the model to learn a consistent geometric prior. Furthermore, training with larger uniform intervals (e.g., $k=-16$) equips the model to confidently take larger macroscopic steps, effectively breaking the trade-off between computational cost (NFE) and error accumulation.
We break down these dynamics into three key observations:

\begin{wrapfigure}[14]{r}{0.6\textwidth}
  \centering
  \subfloat[Time-Informed]{\includegraphics[width=.5\linewidth]{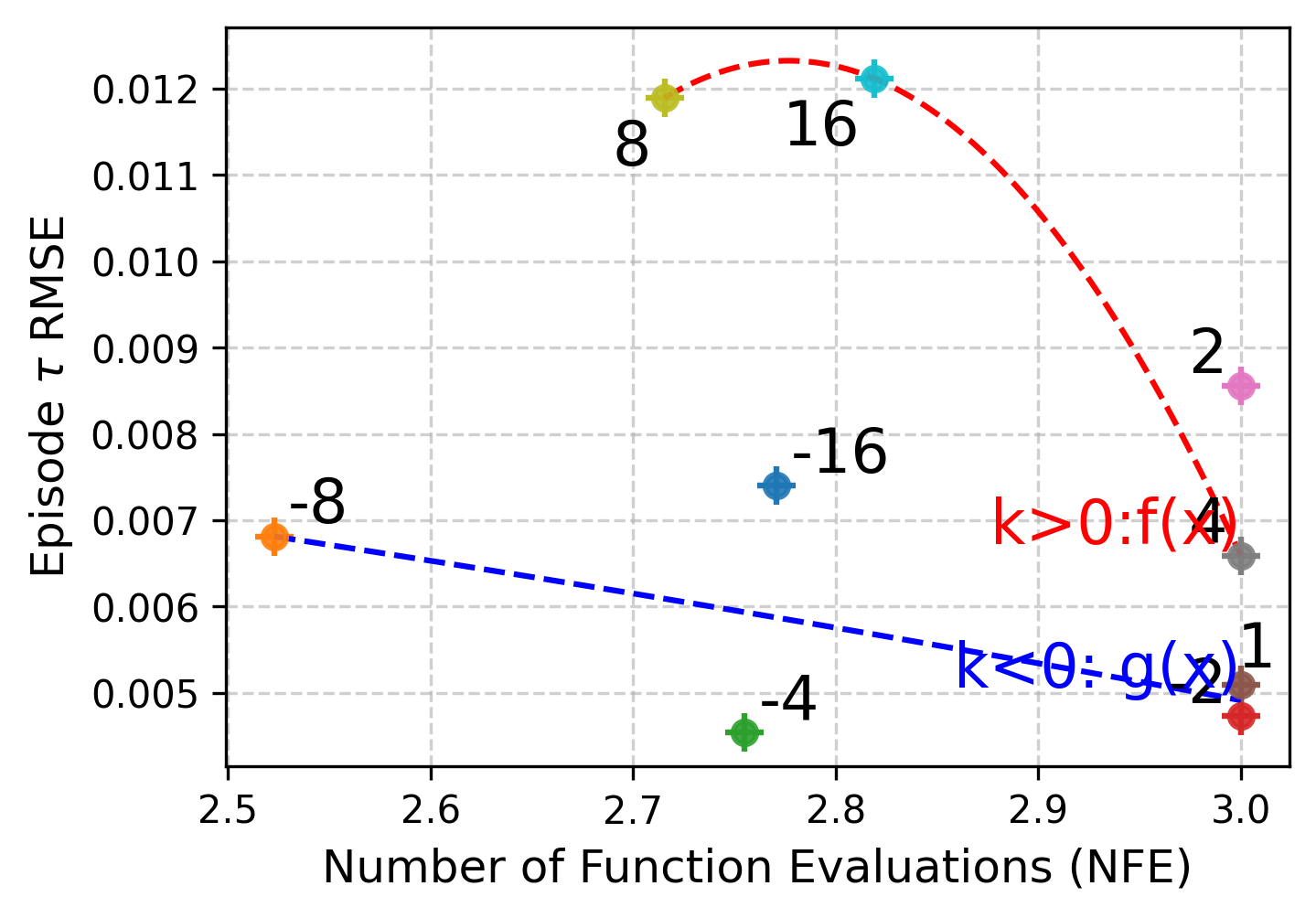}}
  \subfloat[Direct Auto-regressive]{\includegraphics[width=.5\linewidth]{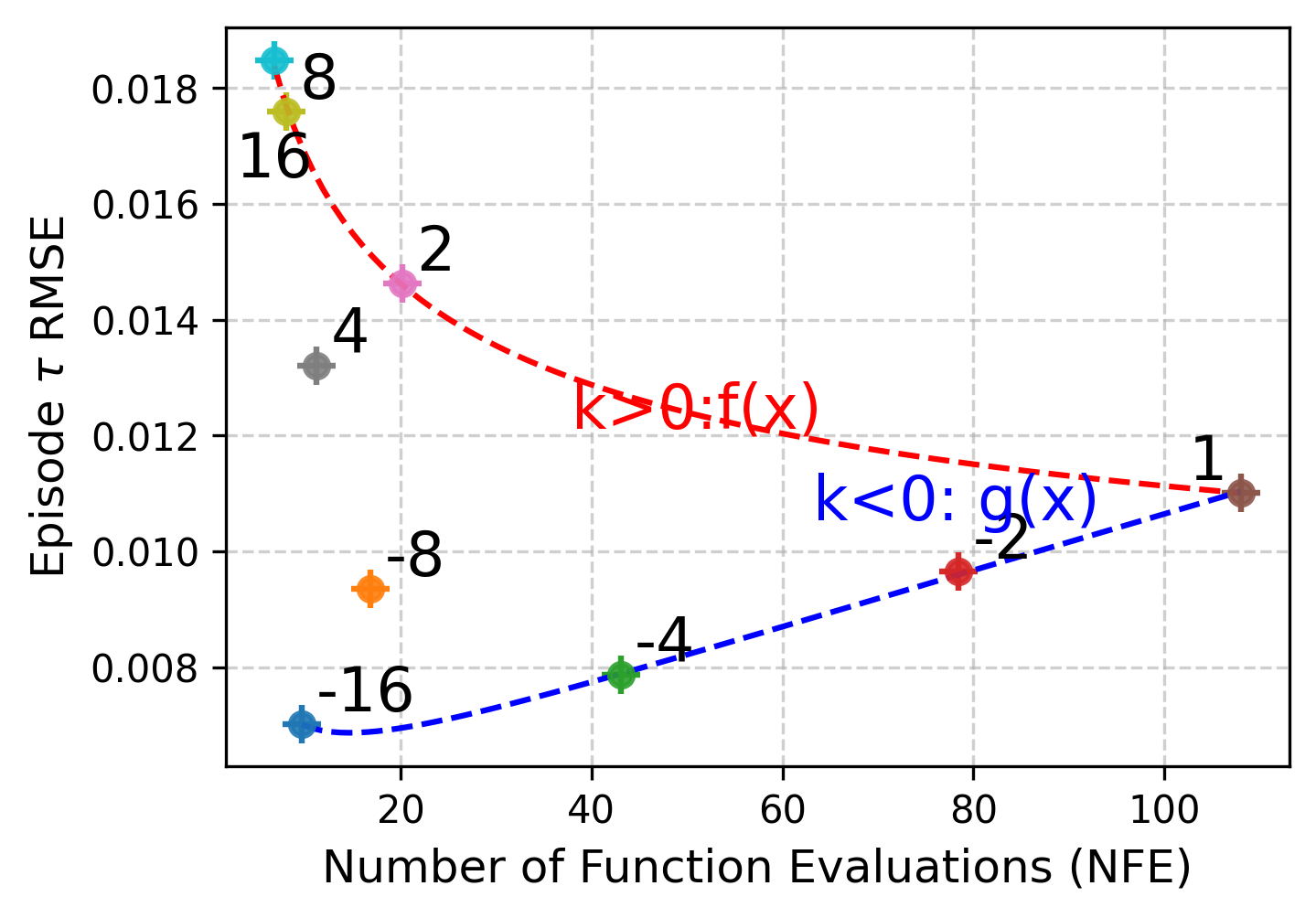}\label{fig:ablation_dynamic_downsampling_rollout}}
  \caption{\small Ablation Study on Uniform Downsampling ($k<0$~\Cref{appendix:even_downsampling}) and Random ($k>0$,~\Cref{appendix:dynamic_downsampling}) Downsampling during Training.}
  \label{fig:ablation_dynamic_downsampling}
\end{wrapfigure}
\textbf{1. The Failure of Dynamic Downsampling ($k>0$):}

Across both evaluation settings, models trained with naive random downsampling exhibit severe overconfidence. They perform almost no step cuts (resulting in low NFE) yet suffer from significantly higher rollout errors compared to their uniformly downsampled counterparts. This starkly distinguishes our explicitly structured framework from simple multi-scale regularization techniques, which mistakenly assume that mere exposure to a diverse range of time steps is sufficient to learn temporal consistency.

The observed performance degradation under random downsampling highlights a profound optimization pathology. Even though the step size $\Delta t$ is explicitly provided as a conditioning input to the network $\psi_{\theta}(s_t,\Delta t)$, the highly stochastic nature of random intervals subjects the model to severely conflicting gradient signals. To navigate this chaotic loss landscape, the network resorts to a degenerate solution: it undergoes \textit{temporal feature collapse}. Rather than establishing robust \textbf{representational consistency} on the expanded temporal bundles, the model renders its output largely insensitive to the temporal scale.
Therefore, it collapses the rich, $\Delta t$-dependent trajectory bundle into a single, overly smooth, linearized vector field (i.e., forcing $\psi_\theta(s_t,\Delta t)$ to be proportional to $\Delta t$).

Crucially, this temporal insensitivity perfectly explains the phenomenon of ``overconfident extrapolation" during inference. Because the predicted vector field behaves strictly linearly with respect to $\Delta t$, the representation inconsistency (Term I,~\Cref{appendix:rupture_theory}) is artificially masked. As a result, the Symmetry Rupture estimator ($\hat{\mathcal{R}}_3$) naturally evaluates to near zero, falsely suggesting to the solver that the learned representations are structurally consistent and the local geometry is flat. Deceived by this false indicator of \textbf{representational integrity}, the adaptive solver waives necessary step cuts and executes massive macro-steps. These steps become completely untethered from the true non-linear dynamics on the base manifold, ultimately compounding into increased rollout errors.

\textbf{2. Constrained Rollout: The $\delta_{min}$ Boundary:}
The time-informed setting reveals an intriguing contrasting behavior. When evaluated on this fine-grained temporal grid, models trained with larger $|k|$ behave more conservatively, resulting in a higher NFE without yielding a proportional drop in rollout error (compared to smaller $|k|$ like $k=-4$). This plateau occurs because the time-informed step duration is already at $\delta_{min}$---the minimum fundamental resolution of the training data. The model cannot reduce its step size beyond this physical limit to further smooth out errors, demonstrating that macro-trained models attempting to reconcile with micro-scale observations will naturally exhaust their adjustment capacity against the data's inherent noise floor.

\textbf{3. Unconstrained Rollout with Macro-Stepping:}
The true advantage of uniform downsampling emerges in the direct auto-regressive setting. Here, a clear linear correlation exists: increasing the training downsampling factor $|k|$ simultaneously reduces both NFE and rollout error. By training on a larger macro-scale (e.g., $k=-16$), the model successfully embeds the long-horizon trajectory into its latent space. Consequently, during inference, it confidently executes large temporal leaps, directly bypassing the compounding local truncation errors that plague smaller-step models, thus mitigating the error accumulation driven by a high number of auto-regressive iterations.
This demonstrates that the model has effectively learned a consistent geometric prior that allows it to break the traditional trade-off between computational cost and error accumulation, if it is not constrained by the fundamental resolution of the data ($\delta_{min}$).
\begin{figure}[htpb]
  \centering
  \subfloat[Normalization Ablation]{\includegraphics[width=.245\linewidth]{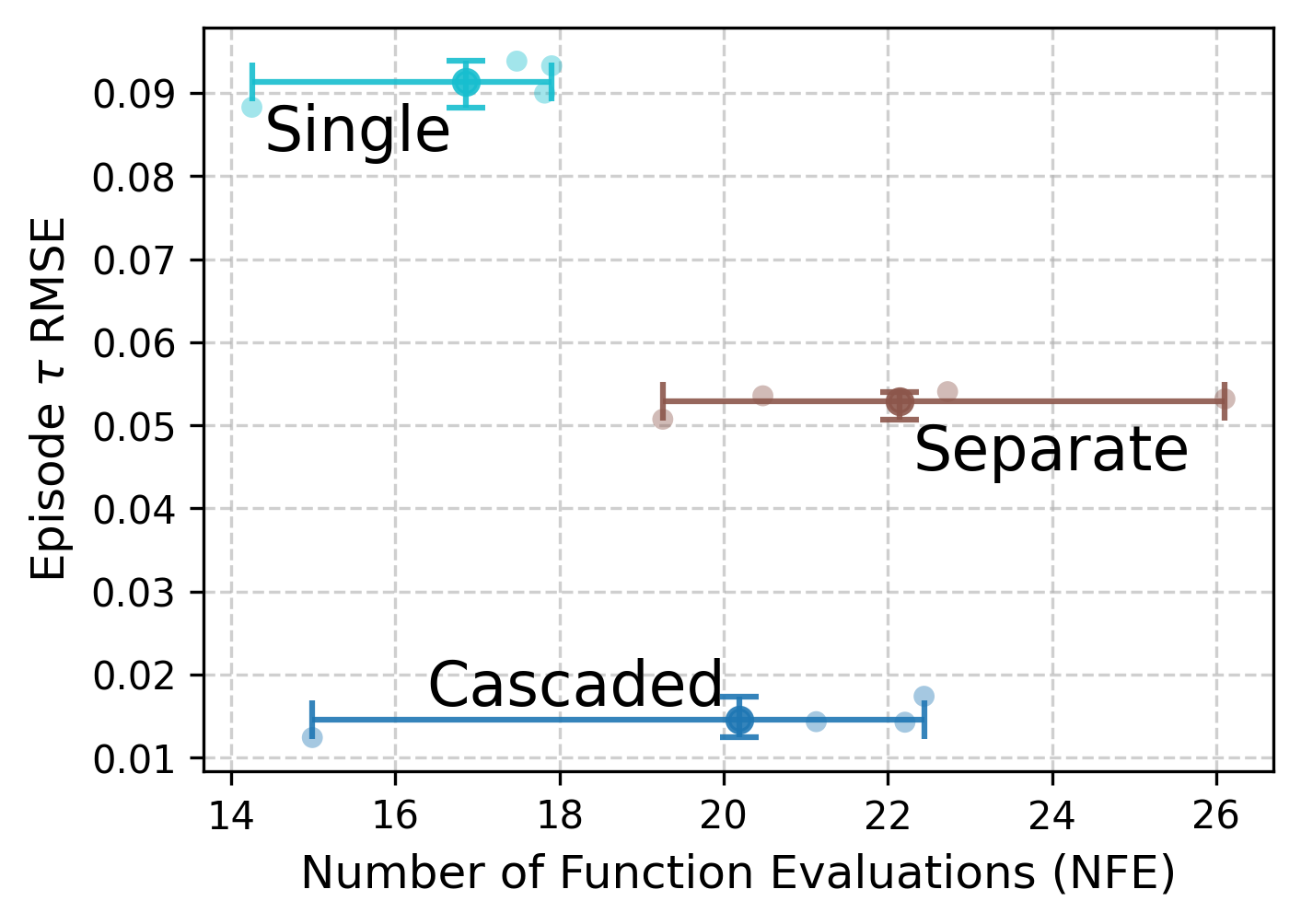}\label{fig:rollout_normalization_dtest1}}
  \subfloat[Solver Ablation]{\includegraphics[width=.245\linewidth]{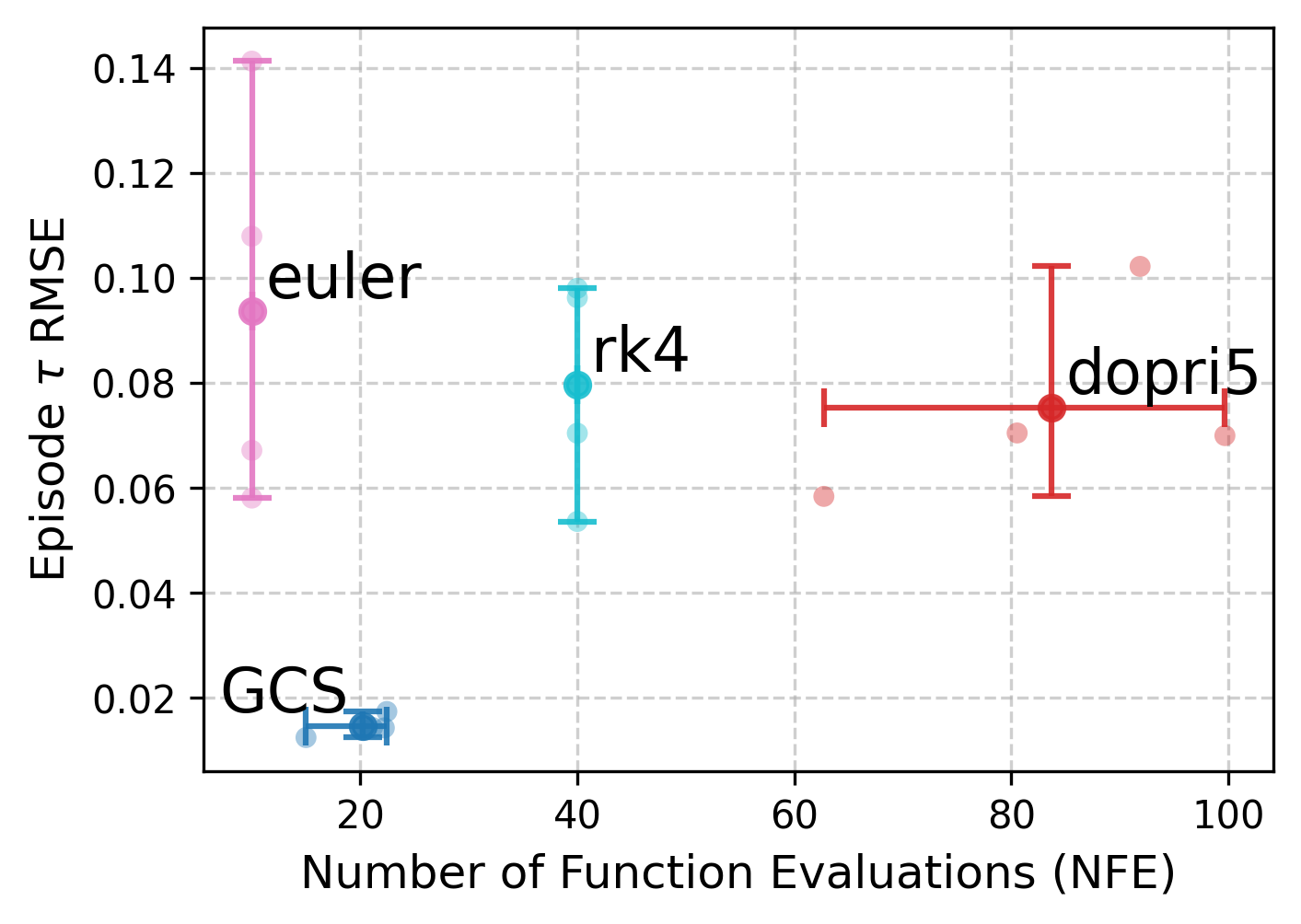}\label{fig:rollout_solver_dtest1}}
  \subfloat[Dissipative System]{\includegraphics[width=.245\linewidth]{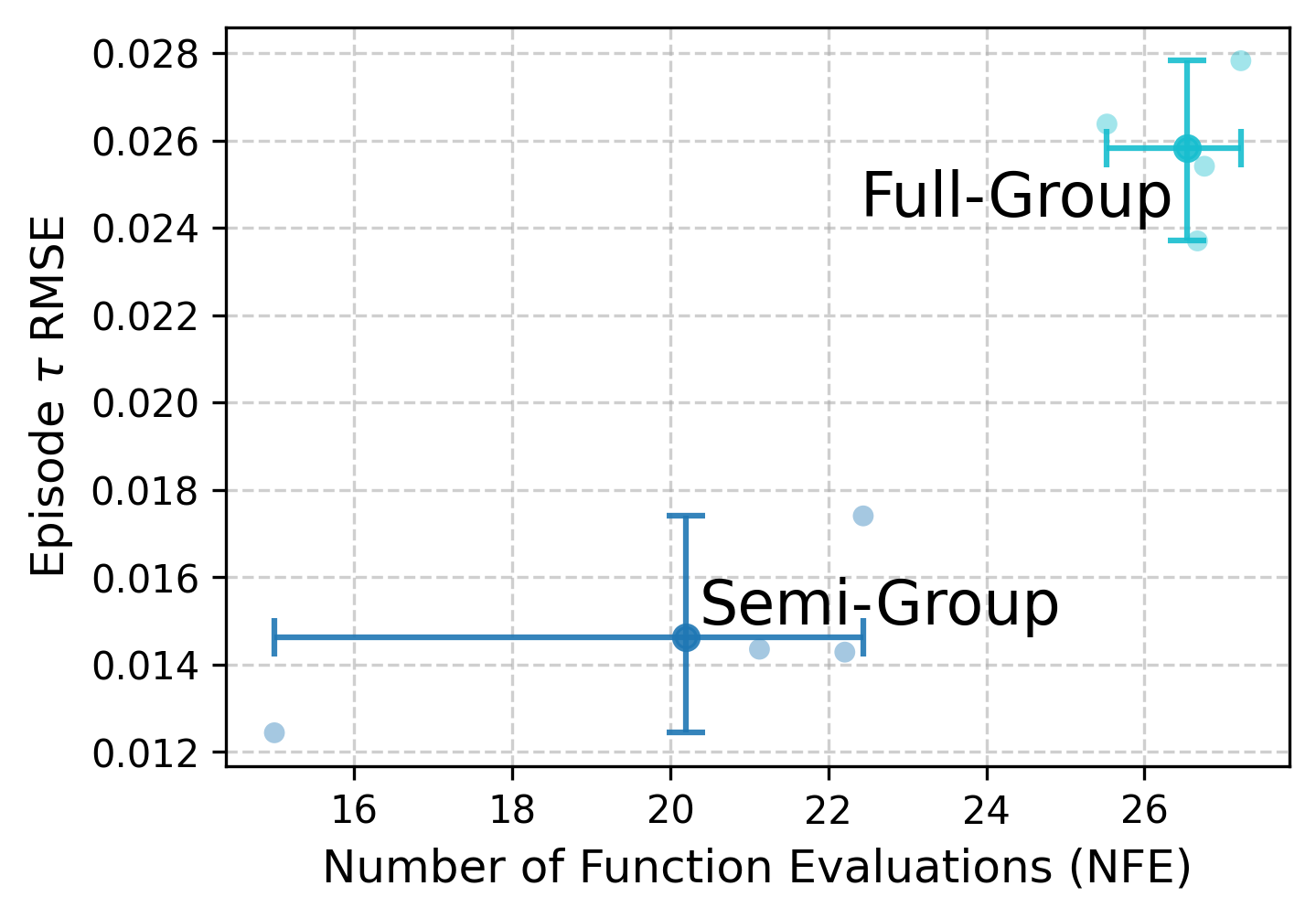}\label{fig:rollout_bidirectional_dtest1}}
  \subfloat[Conservative System]{\includegraphics[width=.245\linewidth]{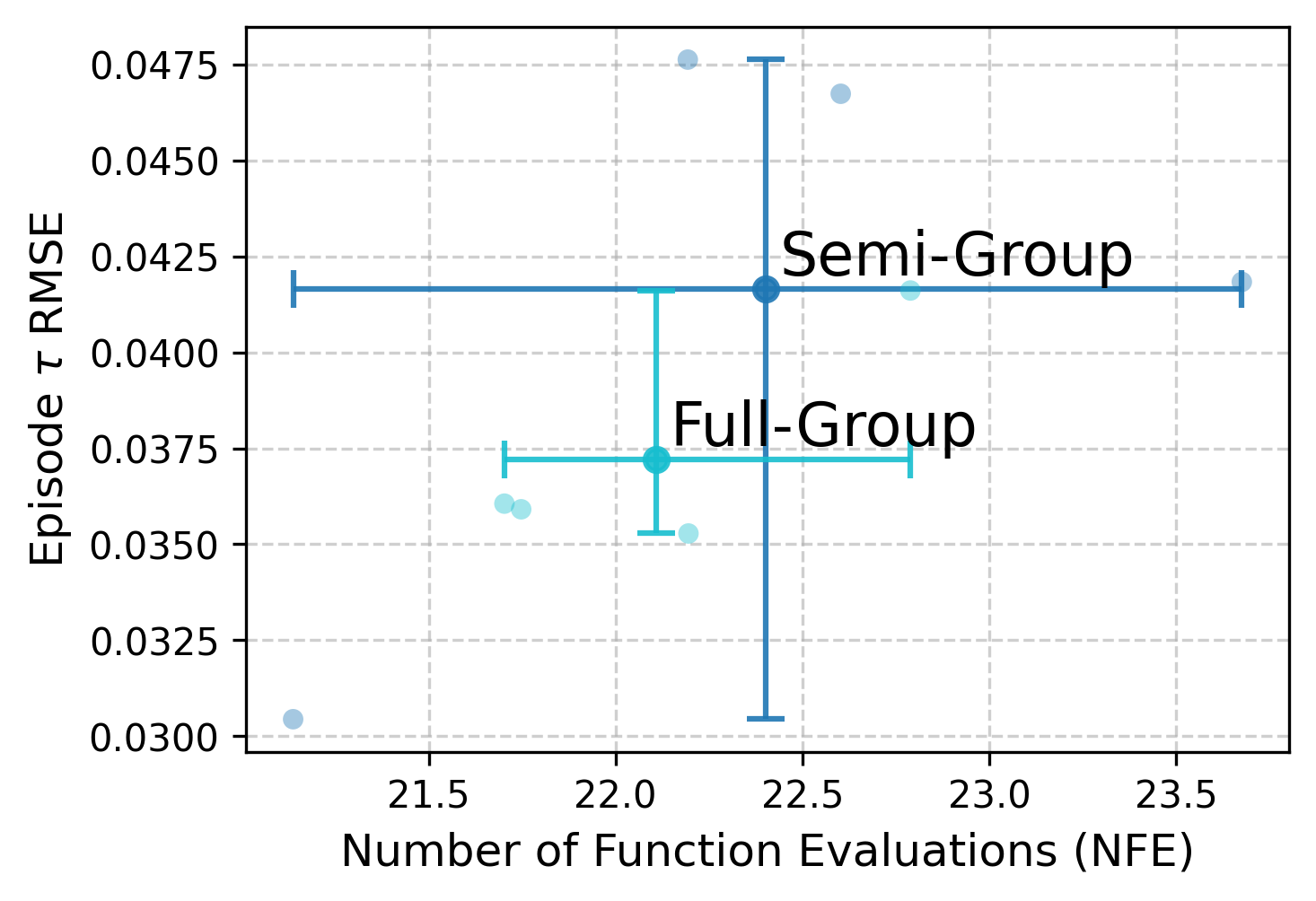}\label{fig:rollout_bidirectional_conservative_dtest1}}
  \caption{Accuracy-Efficiency Trade-off of Border Ablations under the Direct Auto-regressive Inference setting.}
\end{figure}
\subsection{Solver Design and Normalization Scheme}
\label{subsec:ablation_solver_normalization}
Using a uniform downsampling factor of $k=-2$ during training and original step size for the direct auto-regressive setting for testing, we further ablate the solver design and normalization scheme in our framework.
\Cref{fig:rollout_solver_dtest1,fig:rollout_normalization_dtest1} compare the normalization methods used in training, and the different solvers used in inference on the Diffusion-Reaction dataset.
Our approach (Cascaded Normalization + GCS) beats other normalization schemes (Single and Separated normalization) and solvers (Dopri5, RK4, Euler's) in terms of the prediction accuracy. As the cascaded normalization helps in stable learning and GCS helps in faithfully integrating the learned vector field, their contributions are complementary and both are necessary to achieve the best performance.
\subsection{Dissipativity and Semi-Group Property}
\label{subsec:ablation_semi_group}
\Cref{fig:rollout_bidirectional_dtest1,fig:rollout_bidirectional_conservative_dtest1} contrast the effects of enforcing full-group~(\Cref{eq:bi_rupture_loss}) versus semi-group properties~(\Cref{eq:semi_group_rupture}) in dissipative and conservative systems, respectively.
In the dissipative system~(\Cref{fig:rollout_bidirectional_dtest1}), the unidirectional Rupture loss (semi-group~\Cref{eq:semi_group_rupture}) significantly outperforms the bidirectional variant (full-group~\Cref{eq:bi_rupture_loss}).
This confirms that enforcing time-reversibility on an intrinsically irreversible process is detrimental.
In contrast, for the conservative system~(\Cref{fig:rollout_bidirectional_conservative_dtest1}), both settings yield comparable performance.
We attribute this to the capacity of the unidirectional loss to implicitly capture the target's time-reversal symmetry ($-\Phi_{\Delta t}$).
Consequently, while bidirectional consistency mirrors the Hamiltonian nature of conservative dynamics, the forward semi-group property remains practically sufficient.

Above all, these findings highlight the necessity of physics-aware regularization.
Imposing the correct temporal constraint is critical: bidirectional consistency acts as the preferred prior for conservative systems, yet a fatal over-constraint for dissipative ones.
Our framework effectively exemplifies ``inductive bias engineering"-embedding physical domain knowledge directly into the loss landscape—as a promising direction for enhancing model generalization beyond standard architecture or optimization design.
\section{Conclusion}
In this work, we present Consistent Vector Flow (CVF), a novel framework for learning consistent physical dynamics from discrete-time observations.
The CVF framework utilizes TTS as a fundamental inductive bias to learn consistent dynamics; cascaded normalization and a consistency-guided adaptive solver are further introduced to ensure stable training and improve inference accuracy and efficiency.
Our work provides a new perspective on learning consistent dynamics from discrete observations and highlights the importance of correct inductive bias in learning physical systems. Based on the evaluations, we exemplify the critical role of learning structurally compatible dynamics and consistency-guided step-size control in achieving robust long-term predictions, which are often overlooked in existing frameworks.

\textbf{Limitations and Future Work}
While our framework demonstrates significant improvements in learning consistent dynamics, there are several limitations that warrant further investigation. First, our current implementation focuses on autonomous systems, and extending the framework to handle systems with external forcing is an important direction for future work.
Second, while our consistency-guided adaptive solver shows promising results in improving inference efficiency, revoking the reliance on absolute truncation error thresholds causes less inference consistency across time step scales than dopri5 solver.
Finally, the TTS prior is not the only inductive bias that can be leveraged to learn consistent dynamics, and exploring other forms of inductive bias, such as symplectic structure or conservation laws, could be a promising direction for future research in learning physical systems.
\section{Related Works}
\paragraph{Learning Paradigms}

\begin{enumerate}[leftmargin=*, itemsep=0pt, parsep=0pt]
  \item \textbf{Discrete-Time State Regression}
        One common practice in neural surrogate modeling of physical systems learns discrete mapping from the current state to the future state or their residual~\citep{hao2023gnot,li2023geometryinformed,li2020neural,pfaff2021learninga}, assuming that the dynamics operates in a constant time step.
        Despite the two different learning objectives in the discrete-time mapping paradigm, they are essentially equivalent in that the discrete time step $\Delta t$ is assumed constant.
  \item \textbf{Neural ODEs} Neural ordinary differential equation models learn the continuous transformation as a velocity field ordinary differential equations (ODEs) with neural surrogate models~\citep{chen2018neural,sanchez-gonzalez2020learning}.
        The inference process of neural ODEs only relies on external numerical solvers~\citep{dormand2018numerical},
        \citet{chen2018neural,kidger2020neurala,mei2024controlsynth} address the challenges related to such training and inference process~; \citet{norcliffe2020second} application of high-order neural ODEs.
  \item \textbf{Generative Paradigm}
        Recently, generative models~\citep{lipman2023flow,li2025generative,wang2025fourierflow,souveton2025hamiltonian} became the efficient alternative to learn continuous-time flows that follow the optimal transport between the current and future state distributions.
        Works~\citep{zhang2026geoworld,qiao2026lie} have proposed to learn the underlying geometric structure of the dynamics in generative tasks. \citet{song2023consistency} enforce self-consistency along a probability flow to facilitate single-step generation by mapping any point on the trajectory back to its origin, which is different from our approach that enforces semi-group consistency on the learned vector field to ensure the geometric compatibility of the learned dynamics.
\end{enumerate}
Our evaluation operator $\Phi_{\theta}$ that is used in theoretical discussions adopts the same formation as these above paradigms, but our CVF moves beyond this paradigm with the TTS prior, temporal expansion and adaptive solver design.
\citet{xu2025timeseries,matinkia2024learning} also learns a semi-group enforcement from discrete observations in biological systems, with the core idea of multi-scale regularization, which coincidentally shares a similar formulation but is overall a subset of our approach.

\textbf{Auxiliary Works}
Operator learning methods~\citep{li2020neural,lu2021learning,li2021fourier}, encoding methods~\citep{mukhopadhyay2025controllable} and physics-informed neural networks (PINNs)~\citep{raissi2019physicsinformed,lau2024pinnacle} are the orthogonal approaches to the above learning paradigms.

\textbf{Numerical Solvers}
Models that implement fixed-step explicit methods like Euler's method~\citep{lipman2023flow,mei2024controlsynth} extrapolate linearly along the tangent space, inherently ignoring the manifold's curvature and diverging rapidly in non-linear regions.
Traditional adaptive solvers that follow Runge-Kutta methods~\citep{dormand2018numerical} rely on monitoring local truncation errors that
assume a smooth and well-behaved manifold, which violates the neural surrogate modeling setting where the learned velocity field is dominated by the training data distribution and low-level noise as time step deviates from the training time step~(\Cref{appendix:step_control_mechanism}).

\textbf{Post-Hoc Regularization of Lie Symmetries.}
Recent literature has increasingly focused on embedding continuous Lie group symmetries into neural surrogates for dynamic systems (e.g., \citep{liu2026liedynnet,ko2024learning}). A dominant paradigm within this thread treats symmetry discovery as a \textit{posterior} regularization problem. These methods employ generic neural network estimators and attempt to force physical compliance by augmenting the training objective with complex algebraic penalties—such as the infinitesimal invariance condition (IIC) or computationally heavy ``validity scores'' derived from trajectory numerical errors.

This stands in stark contrast to our approach. Rather than relying on heavy regularization to retroactively ``recycle'' valid Lie symmetry pieces from chaotic, unconstrained functional spaces, we directly construct a heavily constrained hypothesis space \textit{a priori}. Because our approach embeds differential consistency and ground truth anchors naturally, it mitigates the computational bottlenecks often encountered in a posterior symmetry discovery, strongly promoting the preservation of the underlying semi-group geometry.

\newpage
\bibliography{neurips2026}
\bibliographystyle{icml2026}

\newpage
\appendix
\section{Notations and Preliminaries}
\label{appendix:notations}

\subsection{Flow, Trajectory, and Manifold}
Let $T_{s}(t)$ be the temporal trajectory of the system state $s$ over time in an unknown system $\mathcal{M}$, i.e., $T_{s}(t): \mathbb{R} \to \mathcal{M}$ where $\mathcal{M}$ can also be viewed as the manifold of all possible system states and their trajectories.
The goal is to recover the TTS compatible dynamics $\mathcal{T}^{\star} = \{ T_{s}^{\star}(t) \mid s \in \mathcal{M} \}$ from a set of discrete observations $\tilde{s} \in \mathcal{D}$ sampled from these trajectories at discrete time points.
Let $\Phi^{\star}$ support the true $\mathcal{T}^{\star}$,
the learning objective is then to identify the optimal $\Phi^{\star}$ within the space $\mathcal{H}$.
\[
  \Phi^{\star} = \arg\min_{\Phi \in \mathcal{H}} \mathbb{E}_{s\sim\mathcal{D}} \left[ \mathcal{L}(\Phi,s) \right]
\]

\begin{table}[htbp]
  \centering
  \begin{tabular}{c|l}
    \hline
    $\mathcal{M}$          & The manifold of all possible system states and their trajectories                                              \\
    $\Phi$                 & The arbitrary evolution operator used in theoretic analysis                                                    \\
    $T_{s}(t)$             & The temporal trajectory of the system state $s$ over time                                                      \\
    $\mathcal{T}^{\star}$  & The true TTS compatible dynamics, i.e., the set of all trajectories $T_{s}^{\star}(t)$ for $s \in \mathcal{M}$ \\
    $\Phi^{\star}$         & The true flow map that supports the true TTS compatible dynamics $\mathcal{T}^{\star}$                         \\
    $\mathcal{H}$          & The hypothesis space of candidate flow maps                                                                    \\
    $\mathcal{D}$          & The dataset of discrete observations sampled from the trajectories in $\mathcal{T}^{\star}$                    \\
    $\mathcal{L}(\Phi,s)$  & The measure of the discrepancy between the flow map $\Phi$ and the true trajectory at state $s$                \\
    $\psi_{\theta}$        & The learned flow map parameterized by neural network parameters $\theta$                                       \\
    $\mathcal{R}_k^{\Phi}$ & The $\Phi$-based Symmetry Rupture used in theoretic analysis                                                   \\
    $\mathcal{R}_k$        & The Symmetry Rupture estimator used for training model                                                         \\
    $\hat{\mathcal{R}}_k$  & The normalized Symmetry error used for step control during inference                                           \\
    $\theta$               & The parameters of the learned flow map $\psi_{\theta}$                                                         \\
    $\mathcal{J}_{\theta}$ & The training objective for learning the flow map $\psi_{\theta}$                                               \\
    $\delta_{min}$         & The minimum fundamental resolution of the training data                                                        \\
    $\Delta t$             & The time step used for training and inference                                                                  \\\hline
    NFE                    & Number of Function Evaluations during inference, a measure of computational cost                               \\
    TTS                    & Time-Translation Symmetry                                                                                      \\
    SR                     & Symmetry Rupture                                                                                               \\\hline
    SM                     & Baseline model trained to learn the secant velocity field                                                      \\
    TM                     & Baseline model trained to learn the tangential velocity field                                                  \\
    N-ODE                  & Baseline model trained with the Neural ODE paradigm                                                            \\
    CFM                    & Baseline model trained with the Continuous Flow Matching paradigm                                              \\
    CVF                    & Our proposed Consistent Vector Flow framework                                                                  \\
    \hline
  \end{tabular}
  \caption{Summary of Notations and Abbreviations}
\end{table}
\section{Time-Translation Symmetry and Group Structure}
\label{appendix:group_structure}

Time-translation symmetry (TTS) is a fundamental property of many physical systems, reflecting the invariance of the system's behavior under shifts in time.
In the context of learning dynamics from discrete observations, TTS implies that the evolution of the system can be described by a flow that is consistent as time progresses, regardless of the specific time points at which observations are made. Note that TTS is a significantly weaker constraint than the PDE-based regularization used in PINNs, as it does not require explicit knowledge of the underlying equations governing the dynamics, but rather requires only that the physical system follows a consistent rule of temporal evolution regardless of when the observations are made.

Mathematically, this manifests as a time-independent ordinary differential equation (ODE) as $\frac{ds}{dt} = v(s)$,
where the rate of change is governed by a time-invariant velocity field $v(s)$.
\begin{theorem}[Group Property of Temporal Evolution]
  \[
    \forall s\in \mathcal{M}, t_1, t_2 \in \mathbb{R}^{*},\begin{cases}
      \Phi_{t_2}(\Phi_{t_1}(s)) =  \Phi_{(t_{1}+t_{2})}(s) \\
      (\Phi_{t})^{-1} = \Phi_{-t}
    \end{cases}
  \]
  \textbf{1. Path Independence}: The composition of flows satisfies the semi-group property, i.e., the flow over a combined time duration is equivalent to the sequential application of flows over individual durations.
  \textbf{2. Inverse Map}: The inverse of $\Phi_{t}$ is given by flowing backward in time.
  \label{theorem:group}
\end{theorem}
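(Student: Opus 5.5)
The plan is to prove both identities from uniqueness of solutions to the autonomous initial value problem $\frac{ds}{dt}=v(s)$. I assume $v$ is locally Lipschitz, so the Picard--Lindelöf theorem gives existence and uniqueness. I also assume solutions exist for all relevant times, for instance because $v$ is globally Lipschitz or the trajectories stay in a compact set. Under these assumptions, $\Phi_t(s)$ is well defined as $T_s(t)$, the value at time $t$ of the unique solution with $T_s(0)=s$. The one structural fact I need is time-translation invariance: if $T(t)$ solves the ODE, then so does $\tilde T(t)=T(t+c)$ for any constant $c$. This holds because $\frac{d}{dt}\tilde T(t)=v(T(t+c))=v(\tilde T(t))$, and it uses autonomy, since $v$ has no explicit $t$-dependence.

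\textbf{Path independence.} Fix $s$ and $t_1$, and set $u=\Phi_{t_1}(s)=T_s(t_1)$.
\begin{itemize}
  \item Consider $\gamma(\tau)=T_s(t_1+\tau)$. By translation invariance, $\gamma$ solves the ODE.
  \item Its initial value is $\gamma(0)=u$, so it solves the same initial value problem as $T_u$.
  \item By uniqueness, $\gamma\equiv T_u$ on the common interval of existence.
  \item Evaluating at $\tau=t_2$ gives $\Phi_{t_2}(\Phi_{t_1}(s))=T_u(t_2)=T_s(t_1+t_2)=\Phi_{t_1+t_2}(s)$.
\end{itemize}
The same argument works for $t_1,t_2$ of any sign. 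Restricting to $t_1,t_2>0$ gives the semi-group form in \Cref{eq:groupcondition}.

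\textbf{Inverse map.} Apply the composition law with $t_2=-t_1$. Then $\Phi_{-t}\circ\Phi_t=\Phi_0$, and likewise $\Phi_t\circ\Phi_{-t}=\Phi_0$. Since $\Phi_0=\mathrm{id}$ by the initial condition, $\Phi_t$ is a bijection with inverse $\Phi_{-t}$.

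\textbf{Main obstacle.} The hard part is not the algebra but the hypotheses on the domain. Backward existence and global existence of the flow are needed for $\Phi_{-t}$ to be defined on all of $\mathcal{M}$. This is exactly what can fail for dissipative PDE dynamics. Parabolic problems such as diffusion--reaction are typically well posed only forward in time, so $\Phi_{-t}$ may not exist as a map on the state space. In that case only the semi-group half of the theorem survives. I would state this explicitly, consistent with the paper's remark that the inverse condition is relaxed for dissipative systems.

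For the infinite-dimensional PDE case, I would replace Picard--Lindelöf with the standard well-posedness assumption. That is, the forward evolution is assumed to be a well-defined solution operator with unique solutions. The uniqueness-plus-translation argument above then goes through verbatim for $t_1,t_2\ge 0$.
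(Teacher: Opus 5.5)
Your proof is correct and follows the route the paper relies on: the paper gives no self-contained argument for this theorem, deferring to Olver and, in \Cref{prop:flow_diffeo}, to Picard--Lindel\"of uniqueness (``uniqueness implies bijection''), which is exactly the uniqueness-plus-time-translation argument you write out. Your explicit hypotheses (global forward and backward existence) and your caveat that the inverse half can fail for parabolic, forward-only dynamics are more careful than the paper, which asserts the diffeomorphism property without addressing backward well-posedness.
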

\section{Flows and Diffeomorphisms}
\label{appendix:diffeomorphisms}

According to the \textbf{Picard-Lindelöf Theorem} \citep{bensoussan1992ordinary, hastings1992differential}, if the vector field $f(t, s)$ is continuously differentiable ($C^1$) on an open set $D \subset \mathbb{R} \times \mathbb{R}^n$, then for any $(t_0, s_0) \in D$, there exists a unique solution $s(t)$.

\begin{proposition}[Flow as a Diffeomorphism]
  \label{prop:flow_diffeo}
  Let $\Phi_{\Delta t}^t: s(t) \mapsto s(t+\Delta t)$ be the flow generated by the ODE $\frac{ds}{dt} = f(s,dt)$.
  For any finite time interval where solutions are defined, the map $\Phi_{\Delta t}$ is a \textbf{diffeomorphism} (a smooth bijection with a smooth inverse).
  The inverse map is simply given by flowing backwards along the time-reversed vector field: $(\Phi_{\Delta t})^{-1} = \Phi_{-\Delta t}$.
  \begin{proof}
    See \citet{bensoussan1992ordinary}, Chapter 2, Section 2.
    The smooth dependence on initial conditions implies differentiability of $\Phi$, and uniqueness implies bijection.
  \end{proof}
\end{proposition}

\textbf{Relation to Divergence (Liouville's Formula).}
The geometric properties of the flow are governed by the divergence of the field $\nabla \cdot f$:
\begin{equation}
  \frac{d}{dt} \text{Vol}(\Omega_t) = \int_{\Omega_t} (\nabla \cdot f) \, dV
\end{equation}
This allows us to categorize physical systems into two classes relevant to our modeling assumptions:

\begin{itemize}
  \item \textbf{Conservative Systems (Groups):}
        When $\nabla \cdot f = 0$ (e.g., Hamiltonian dynamics), the phase volume is preserved.
        The flow $\Psi_{\Delta t}$ forms a \textit{one-parameter group of diffeomorphisms} defined for all $\Delta t \in \mathbb{R}$.
        The map is strictly reversible, aligning perfectly with the group structure.

  \item \textbf{Dissipative Systems (Semi-groups):}
        When $\nabla \cdot f < 0$ (e.g., Diffusion Reaction), phase volume contracts exponentially.
        Mathematically, the flow forms a \textit{semi-group} ($t \ge 0$) and is not globally a diffeomorphism as $t \to \infty$ due to information loss (merging into attractors)~\citep{mortensen1991infinitedimensional}.

\end{itemize}

As we approach the ideal map of the true underlying dynamics $\Phi_{\Delta t}$ with our learned flow $\psi_{\theta}$, the Lie group structure of diffeomorphisms in the conservative case and the semi-group structure in the dissipative case are both naturally embedded in the learning process, and the proposed Symmetry Rupture serves as a general consistency constraint that promotes the preservation of these geometric properties in the learned dynamics, which is critical for achieving accurate long-term predictions and stable rollouts.

\subsection{Learning Homomorphisms of Flows for Both Conservative and Dissipative Systems}
While the conservative systems and dissipative systems differ in their global geometric properties, the Symmetry Rupture~$\mathcal{R}_{k}$ is compatible with both classes of systems.
For conservative systems, the group property enforces that $-\Phi_{t}=\Phi_{-t}$,  therefore $\mathcal{R}_{k}$ naturally forms a closed loop that measures the Symmetry Rupture.
For dissipative systems, the semi-group property enforces that $\Phi_{t_1+t_2}=\Phi_{t_1}\circ \Phi_{t_2}$ for $t_1,t_2\ge 0$, $\mathcal{R}_{k}$ measures the semi-group inconsistency, which is the only meaningful temporal constraint for dissipative systems.
Therefore, the proposed Symmetry Rupture is a general consistency measurement that can be applied to both conservative and dissipative system, and does not require any modification to accommodate the differences in their geometric properties.
\begin{subequations}
  \begin{align}
    \mathcal{R}_{k}(s) & = \left\| \left( \circ_{i=1}^k \Phi_{\Delta t_i} \right)(s_0) - \Phi_{T}(s_{0}) \right\|_2
    \label{eq:semi_group_rupture}
  \end{align}
  where $s_{0}=s$, $s_{i} = \Phi_{\Delta t_i}(s_{i-1})$.
\end{subequations}
\subsection{Learning Diffeomorphisms for Conservative Systems with Bidirectional Consistency}
However, we explicitly define the bidirectional Rupture loss~$\mathcal{R}_{k}^{bi}$ for conservative systems to further enforce the reverse-time consistency, which is a necessary condition for learning consistent dynamics in conservative systems but not in dissipative systems.
Together, the semi-group-based forward, and the full-group-based bidirectional constraints ($\mathcal{R}_{k}$ and $\mathcal{R}_{k}^{bi}$) form a comprehensive consistency measurement that can effectively regularize the learning of dynamics in both classes of systems.
\begin{subequations}
  \begin{align}
    \mathcal{R}^{\text{bi}}_{k}(s) & = \left\| \left( \circ_{i=1}^{\frac{k}{2}} \Phi_{\Delta t_i} \right)(s_0) - \left( \circ_{i=0}^{\frac{k}{2}} \Phi_{-\Delta t_{k-i}} \right)(s_k) \right\|_2
    \label{eq:bi_rupture_loss}
  \end{align}
\end{subequations}
\section{Reduction: Sufficiency of the Basic Triangle ($k=3$)}
\label{sec:triangle_consistency_sufficiency}

A key insight from geometric integration theory is that any such $k$-polygon error can be decomposed into a sum of local triangle errors via \textit{triangulation} \citep{hairer2006geometric}.
Thus, by strictly enforcing the vanishing of the local 2-step error (Symmetry Condition, $k=3$), we implicitly bound the global error $\mathcal{R}_{k}$ for any $k > 3$.

Directly optimizing \Cref{eq:rupture_error} is computationally intractable due to the infinite sum.
However, we propose that explicitly enforcing consistency on the smallest non-trivial loop ($k=3$) is sufficient to bound the error for any $k$.

\begin{proposition}[Triangle Sufficiency via Composition]
  If the flow map $\Phi$ satisfies the Symmetry Condition (i.e., vanishing rupture for $k=3$) for all $t_1, t_2$:
  \begin{equation}
    \Phi_{t_2} \circ \Phi_{t_1} = \Phi_{t_1 + t_2}
  \end{equation}
  then the flow satisfies the global path independence condition for any finite $k \ge 3$.
\end{proposition}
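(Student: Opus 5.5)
The plan is induction on $k$, using the two-step Symmetry Condition as the only ingredient. Following \Cref{eq:semi_group_rupture}, the global path-independence condition for $k$ says: for every decomposition $T=\sum_{i=1}^{k-1}\Delta t_i$ with $\Delta t_i\ge 0$ (or $\Delta t_i\in\mathbb{R}^{*}$ in the conservative case), we have $\Phi_{\Delta t_{k-1}}\circ\cdots\circ\Phi_{\Delta t_1}=\Phi_T$ pointwise on $\mathcal{M}$. The equal-step version in \Cref{eq:rupture_error} is then the special case $\Delta t_i=\Delta t/(k-1)$. I will therefore prove the stronger statement for arbitrary (not necessarily equal) decompositions, since that is what makes the induction close.

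\textbf{Base case.} For $k=3$ there are two sub-steps. Here $\Phi_{\Delta t_2}\circ\Phi_{\Delta t_1}=\Phi_{\Delta t_1+\Delta t_2}$ is exactly the hypothesis.

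\textbf{Inductive step.} Suppose the identity holds for every decomposition into $k-1$ parts, and take a decomposition into $k$ parts $\Delta t_1,\dots,\Delta t_{k}$. First group the leading $k-1$ maps and apply the inductive hypothesis:
\[
\Phi_{\Delta t_{k}}\circ\bigl(\Phi_{\Delta t_{k-1}}\circ\cdots\circ\Phi_{\Delta t_1}\bigr)=\Phi_{\Delta t_{k}}\circ\Phi_{\tau},\qquad \tau=\sum_{i=1}^{k-1}\Delta t_i .
\]
Then apply the hypothesis once more with $t_1=\tau$ and $t_2=\Delta t_k$, which gives $\Phi_{\tau+\Delta t_k}=\Phi_T$. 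Composition of maps is associative, so the regrouping is legitimate. The grouping also has a geometric reading as the triangulation cited from \citet{hairer2006geometric}: the $k$-polygon is fanned into triangles from the base point, and each triangle closes by the $k=3$ condition. This yields $\mathcal{R}_k^{\Phi}\equiv 0$ for all finite $k\ge 3$.

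\textbf{Main obstacle.} The step I expect to need care is the admissibility of the intermediate durations. The hypothesis must hold for \emph{all} $t_1,t_2$ in the relevant domain, including partial sums $\tau$ that may exceed any training horizon. In the dissipative case the domain is $t\ge 0$, and nonnegative partial sums of nonnegative steps stay in it. In the conservative case \Cref{theorem:group} supplies the inverse map, so the domain is all of $\mathbb{R}^{*}$. There one must also handle a partial sum that happens to equal $0$; I would set $\Phi_0=\mathrm{id}$, which is consistent with $\Phi_t\circ\Phi_{-t}=\Phi_0$.

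\textbf{Approximate version.} For the relaxed setting, where the rupture is only small, I would instead telescope. If $\Phi_{t_2}\circ\Phi_{t_1}-\Phi_{t_1+t_2}$ is bounded by $\varepsilon$ and each $\Phi_t$ is $L$-Lipschitz, the same fan decomposition bounds $\|\mathcal{R}_k^{\Phi}\|$ by $\varepsilon\sum_{j=0}^{k-3}L^{j}$. This matches the paper's claim that the triangle condition "implicitly bounds" $\mathcal{R}_k$. The Lipschitz assumption is the extra ingredient this version needs beyond the stated hypothesis.
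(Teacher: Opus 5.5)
Your proof is correct and is the paper's own argument: induction on the number of sub-steps, where the inductive hypothesis collapses the leading composition to a single map $\Phi_{\tau}$ and one more application of the two-map Symmetry Condition closes it to $\Phi_T$. Your indexing of a $k$-polygon as $k-1$ sub-steps agrees with \Cref{eq:rupture_error}, whereas the paper's proof calls the two-step base case ``$k=3$'' and then inducts on the number of steps; your domain remarks and the Lipschitz telescoping bound $\varepsilon\sum_{j=0}^{k-3}L^{j}$ are valid additions that the paper does not prove.
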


\begin{proof}
  We proceed by induction on $k$, the number of discrete steps.

  \textbf{Base Case ($k=3$):} The condition is given by hypothesis: $\Phi_{\Delta t_2}(\Phi_{\Delta t_1}(s)) = \Phi_{\Delta t_1 + \Delta t_2}(s)$.

  \textbf{Inductive Step:} Assume consistency holds for any sequence of $k$ steps.
  Consider a sequence of $k+1$ steps with total time $T_{k+1} = \sum_{i=1}^{k+1} \Delta t_i$.
  The composition is:
  \begin{align}
    S_{k+1} & = \Phi_{\Delta t_{k+1}} \circ \left( \Phi_{\Delta t_k} \circ \dots \circ \Phi_{\Delta t_1} \right)                 \\
            & = \Phi_{\Delta t_{k+1}} \circ \left( \Phi_{\sum_{i=1}^k \Delta t_i} \right) \quad \text{(by Inductive Hypothesis)}
  \end{align}
  Now, let $T_k = \sum_{i=1}^k \Delta t_i$.
  We are left with the composition of two terms: $\Phi_{\Delta t_{k+1}} \circ \Phi_{T_k}$.
  Applying the \textbf{triangle} Symmetry Condition (Base Case) to these two terms:
  \begin{equation}
    \Phi_{\Delta t_{k+1}} \circ \Phi_{T_k} = \Phi_{T_k + \Delta t_{k+1}} = \Phi_{T_{k+1}}
  \end{equation}
  Thus, consistency for $k=3$ implies consistency for all $k \in \mathbb{N}$.
\end{proof}

\textbf{Geometric Interpretation (Triangulation).}
This algebraic proof corresponds to the geometric intuition of \textit{Polygon Triangulation}.
Any closed loop formed by $k$ time steps (a $k$-gon) can be decomposed into $k-2$ triangles.

If the curvature vanishes on every fundamental triangle, Stokes' theorem (in the discrete sense) implies that the total curvature over the entire polygon surface must also vanish.
Therefore, constraining the model with the basic $k=3$ loss is structurally sufficient to enforce the global group property required for the $k \to \infty$ geodesic limit.

\subsection{Unified Bidirectional Isomorphism}
Standard approaches to promoting reversibility in dynamic learning typically rely on dual-model frameworks (e.g., CycleGANs using separate forward/backward generators) or architecturally constrained invertible flows.
The former doubles the parameter space and risks learning disjoint dynamics, while the latter sacrifices expressivity.
In contrast, we leverage the Lie group structure~\Cref{eq:bi_rupture_loss} of the temporal flow to unify the forward and backward dynamics within a single model $\psi_{\theta}$.

\paragraph{Isomorphic Consistency at k=3} This unification is most potent when applied to the simplest non-trivial topological loop: the triangle ($k=3$).
By exploiting the group inverse property $\Phi_{-t}=\Phi_{t}^{-1}$, we can structurally fold the loop closure constraint into a single forward pass.
Let the three edges of the temporal triangle be $\Phi_{\Delta t}$ (hypotenuse), $\Phi_{\Delta t_1}$ (partial forward), and $\Phi_{-\Delta t_2}$ (partial backward, where $\Delta t_1+\Delta t_2=\Delta t$).
The closure condition implies that the state reached by evolving $\Delta t_1$ forward from $s_t$ must be identical to the state reached by evolving $\Delta t_2$ \textit{backward} from $s_{t+\Delta t}$.
We operate this by constructing a composite query batch $(S,T)$ that simultaneously samples the forward, backward, and hypotenuse bundles:
\begin{equation}
  \mathbf{S} = [s_t, s_t, s_{t+\Delta t}], \quad \mathbf{T} = [\Delta t_1, -\Delta t_2, \Delta t]
\end{equation}
By feeding this batch into $\psi_{\theta}$, the model is forced to explicitly learn the inverse map logic:
\begin{equation}
  \frac{\Delta t_1}{\Delta t} \psi_{\theta}(s_t, \Delta t_1) \approx \psi_{\theta}(s_t, \Delta t)-\frac{\Delta t_2}{\Delta t}\psi_{\theta}(s_{t+\Delta t}, -\Delta t_2)
\end{equation}
Crucially, this design does not merely ``accelerate" training.
It enforces that the learned operator is a true element of the one-parameter semi-group flow, possessing a valid inverse.
While this bidirectional folding reduces the computational complexity of the consistency check from $\mathbb{O}(k)$ to $\mathbb{O}(\lceil k/2 \rceil)$ for general polygons, it achieves a singular \textbf{computational closure} at $k=3$.
In this special case, the entire topological constraint is resolved in $\mathcal{O}(1)$ (parallel) steps without any recursive rollout.

\section{Unified Tangent-Secant Velocity Fields}

The training objective~(\Cref{eq:cvf_loss_appendix}) of the Consistent Vector Flow (CVF) is to capture the secant velocity field ($\psi_{\theta}(s,\Delta t)$) and the Symmetry Rupture ($\mathcal{R}_{k}$) in a unified framework:
\begin{multline}
  \mathcal{J}_{\theta} =  \mathbb{E}\bigg[ \| \psi_{\theta}(s_{t}, \Delta t) - v_{\Delta t} \|_{2}^{2}+\| r\psi_{\theta}(s_{t}, rdt)+r^{\prime}\psi_{\theta}(s_{t+r\Delta t},r^{\prime}\Delta t) - \psi_{\theta}(s_{t},\Delta t) \|_{2}^{2}\bigg]
  \label{eq:cvf_loss_appendix}
\end{multline}
where $v_{\Delta t}$ is the time-difference secant velocity calculated from discrete-time observations, and $r$ and $r^{\prime}$ are two random scalars satisfying $r+r^{\prime}=1, r\sim \mathcal{U}[0,1]$.

\subsection{Symmetry Rupture in Velocity Field}
\label{appendix:symmetry_rupture_velocity_field}
Learning of tangent and secant velocity fields can find solid ground on the observed data points (knots of the spline interpolation), however, the velocity field in between the observed data points remains unobserved and unconstrained.
One straightforward solution is to sample from the spline trajectory for both tangent and secant velocity supervision, however, it would introduce a strong bias towards the spline-approximated velocity field instead of the true underlying physical dynamics.
The Symmetry Rupture $\mathcal{R}_{k}$ is used in this paper to support the unobserved region of the flow.

Following~\Cref{eq:rupture_error}, it is easy to find the correspondence in velocity fields:
\begin{equation}
  \begin{split}
    \forall [s_{t_1},                 & s_{t_2}, s_{t_3}]  \subset\tau \land t_1\leq t_2 \leq t_3,                                                                                \\
    s_{t_3}                           & =                             s_{t_1} + (s_{t_2}-s_{t_1}) + (s_{t_3}-s_{t_2})                                                             \\
    s_{t_3}-s_{t_1}                   & =                      (s_{t_2}-s_{t_1}) + (s_{t_3}-s_{t_2})                                                                              \\
    \frac{s_{t_3}-s_{t_1}}{t_3 - t_1} & =    \frac{t_2 - t_1}{t_3 - t_1}\frac{s_{t_2}-s_{t_1}}{t_2 - t_1} + \frac{t_3 - t_2}{t_3 - t_1}\frac{s_{t_3}-s_{t_2}}{t_3 - t_2}          \\
    \frac{s_{t_3}-s_{t_1}}{\Delta t}  & =    \frac{\Delta t_1}{\Delta t}\frac{s_{t_2}-s_{t_1}}{t_2 - t_1} + \frac{\Delta t_2}{\Delta t}\frac{s_{t_3}-s_{t_2}}{t_3 - t_2}          \\
    \psi_{\theta}(s_{t}, \Delta t)    & = \frac{\Delta t_1}{\Delta t}\psi_{\theta}(s_{t}, \Delta t_{1}) + \frac{\Delta t_2}{\Delta t}\psi_{\theta}(s_{t+r\Delta t}, \Delta t_{2})
  \end{split}
  \label{eq:state_additivity}
\end{equation}
where $\tau$ is the unknown trajectory of the physical system.
The Symmetry Rupture $\mathcal{R}_{k}$ in velocity field is then defined as the mean squared error between the two sides of \Cref{eq:state_additivity}:
\begin{equation}
  \begin{split}
    \|  r \psi_{\theta}(s_{t}, r\Delta t) + r^{\prime} \psi_{\theta}(s_{t+r\Delta t}, r^{\prime}\Delta t)-\psi_{\theta}(s_{t}, \Delta t)  \|_{2}^{2}
  \end{split}
\end{equation}
\subsection{United Velocity Field as Path Integration}
\label{sec:path_integration_as_cvf}
Ideally, we seek a learned model $\psi_\theta$ that satisfies path independence for any arbitrary discretizations $k \to \infty$.
Taking the limit as $\max(\Delta t_i) \to 0$, the composition of discrete steps converges to the integral of the vector field (the geodesic flow), while the direct term remains the neural operator prediction.
The condition for a consistent physical model is:
\begin{equation}
  \label{eq:geodesic_limit}
  \lim_{k \to \infty} \mathcal{R}_{k}(s; \Delta t) = \frac{1}{\Delta t}\underbrace{\int_0^{\Delta t} v_\theta(s_t) \, dt}_{\text{Geodesic Integral}} - \underbrace{\psi_\theta(s_t, \Delta t)}_{\text{Direct Jump}} \equiv 0
\end{equation}
Satisfying \Cref{eq:geodesic_limit} implies that the learned operator $\psi_\theta$ is the exact exponential map of the learned vector field $v_\theta$.
\section{Tangent-Bundle Consistent Normalization}
\label{appendix:normalization_scheme}

Standard normalization schemes treat state variables $s$ and velocity variables $v$ as independent statistical distributions, applying separate channel-wise scaling (e.g., $s\to \mathcal{N}(\bar{s}),v\to\mathcal{N}(\bar{v})$).
However, from a differential geometric perspective, this independence assumption violates the intrinsic structure of the Tangent Bundle $T\mathcal{M}$.
Velocity $v$ is not a free vector but lives in the tangent space anchored at $s$.
A coordinate transformation on the manifold (normalization of $s$) induces a precise Pushforward transformation on the tangent vectors.
Ignoring this coupling forces the neural network to implicitly learn the scaling ratios between independent normalization statistics, creating a ``conditioning barrier" that hinders convergence.

To resolve this, we propose \textbf{Cascaded Normalization}, a scheme designed to preserve the differential consistency between the base manifold and its tangent space during data pre-processing.

\paragraph{Preserving the Pushforward Map} Let $\Phi:\mathcal{M}\to\tilde{\mathcal{M}}$ be the normalization diffeomorphism acting on the state space, defined as $\tilde{s}=\Phi(s)=\frac{(s-\mu_{s})}{\sigma_{s}}$.
By the chain rule, the induced transformation on the velocity $v=dtds$ must be the pushforward $d\Phi$:
\begin{equation} \frac{d\tilde{s}}{dt} = d\Phi \cdot \frac{ds}{dt} = \frac{1}{\sigma_{s}} v.
\end{equation}
Standard independent normalization scales v by $\frac{1}{\sigma_{v}}$, which creates a dimensional mismatch if $\sigma_{v}\neq\sigma_{s}$.
Our Cascaded Normalization respects this geometric linkage by first applying the pushforward scaling ($\frac{1}{\sigma_{s}}$) and then applying a secondary affine standardization for numerical stability:
\begin{equation}
  \begin{split} \tilde{s} & = \frac{s - \mu_{s}}{\sigma_{s}}                                                                                                                                       \\
              \tilde{v} & = \frac{\frac{v}{\sigma_s} - \mu_{v}}{\sigma_{v}}, \quad \tilde{v}_{\Delta t} = \frac{\left(\frac{s(t+\Delta t)-s(t)}{\sigma_s \Delta t}\right) - \mu_{v}}{\sigma_{v}}
  \end{split}
\end{equation}
Here, $\mu_{v}$,$\sigma_{v}$ are the statistics of the pre-scaled velocity $\frac{v}{\sigma_{s}}$.
Crucially, this scheme ensures that the neural velocity field $\psi_{\theta}$ operates in a dimensionless latent space consistent with the state dynamics.
The reconstruction strictly follows the inverse map:
\begin{equation} \hat{s}_{t+\Delta t} = s_{t} + \Delta t \cdot \underbrace{\sigma_{s} \cdot \left[ \sigma_{v} \psi_{\theta}(\tilde{s}, \Delta t) + \mu_{v} \right]}_{\text{Inverse Pushforward} (d\Phi)^{-1}}.
\end{equation}

\paragraph{Analysis of Gradient Consistency} The advantage of this scheme becomes evident when analyzing the derivative limit.
Consider the independent normalization baseline where $\tilde{s}=\frac{(s-\mu_{s})}{\sigma_{s}}$ and $\tilde{v}=\frac{v-\mu_{v}^{\prime}}{\sigma_{v}^{\prime}}$.
The numerical derivative of the normalized state yields:
\begin{equation}
  \lim_{\Delta t \to 0}\frac{\tilde{s}(t+\Delta t) - \tilde{s}(t)}{\Delta t} = \frac{v}{\sigma_s}.
\end{equation} However, the network is trained to output $\tilde{v}=\frac{v}{\sigma_{v}^{\prime}}$.
This forces the network $\psi_{\theta}$ to effectively learn a linear gain factor $k=\frac{\sigma_{s}}{\sigma_{v}^{\prime}}$ to reconcile the physical derivative with the target label.
In contrast, our Cascaded Normalization ensures that the target $v$ is structurally aligned with $dtds$ up to the affine shift $\mu_{v}$,$\sigma_{v}$, decoupling the physical scaling $\sigma_{s}$ from the numerical conditioning.

\Cref{fig:rollout_normalization_dtest1} empirically validates that this geometry-aware normalization stabilizes the velocity field distribution, preventing the ``vanishing gradient" effect caused by mismatched scale factors in stiff dynamic systems.

\section{From Semi-Group Property to Symmetry Rupture}
\label{appendix:rupture_theory}
Let $\mathcal{R}_{k}^{\star}$ represent both the evolution-operator-based Symmetry Rupture $\mathcal{R}_{k}^{\Phi}$ and the Symmetry Rupture in the velocity field $\mathcal{R}_{k}$.
In this section, we discuss the geometric interpretation of the Symmetry Rupture~($\mathcal{R}_{k}^{\star}$) based on the semi-group property and the necessary and sufficient conditions for a non-vanishing Symmetry Rupture~($\mathcal{R}_{k}^{\star}> 0$).

We proceed by first defining the notions of the semi-group property and curvature in the context of dynamic system and then demonstrating that trajectory curvature is only a necessary, but not sufficient, condition for a non-vanishing Symmetry Rupture.
By showing the insufficiency of trajectory curvature for a non-zero $\mathcal{R}_{k}^{\star}$, we further establish the true necessary and sufficient condition for a non-vanishing Symmetry Rupture, which is the violation of the representational consistency of the learned flow across different time bundles.
Finally, we incorporate an expanded manifold-temporal space $\mathcal{E}=\mathcal{M}\times \mathbb{R}^{+}$ to show that $\mathcal{R}_{k}^{\star}$ is calculated across various time bundles. The curvature detected in $\mathcal{E}$ is the inconsistency of the representation across these different time bundles, and the vanishing of $\mathcal{R}_{k}^{\star}$ implies the flatness of the learned flow in $\mathcal{E}$, which is a stronger condition than the flatness of the trajectory in $\mathcal{M}$.

\subsection{Definitions: Semi-Group Property and Curvature on the Trajectory Manifold}
\label{sec:rupture_curvature_definitions}

Consider a discrete decomposition of a time horizon $\Delta t$ into a sequence of $k$ intervals $\{ \Delta t_i \}_{i=1}^k$ such that $\sum \Delta t_i = \Delta t$.
The two forms of Symmetry Rupture are shown below:
\begin{equation}
  \begin{split}
    \mathcal{R}_{k}^{\Phi}(s_t; \Delta t) & \coloneqq \left( \circ_{i=1}^k \Phi_{\Delta t_i} \right)(s_t) - \Phi_{\Delta t}(s_t)        \\
    \lim_{k\to\infty}\mathcal{R}_{k}      & \coloneqq \frac{1}{\Delta t}\int_{0}^{\Delta t} v_{\theta}(\cdot) dt - \psi(s_{t},\Delta t)
  \end{split}
\end{equation}
where $\circ$ denotes operator composition.

\begin{definition}[Temporal Semi-Group Property]
  The semi-group property of the flow requires that sequential composition of temporal transitions must be equivalent to the direct transition of equal duration. (i.e., $\Phi_{t_1} \circ \Phi_{t_2} \neq \Phi_{t_1+t_2}$)
\end{definition}
The semi-group property is translated into a vanishing condition of the Symmetry Rupture $\mathcal{R}_{k}^{\star}$.

\begin{corollary}[Symmetry Rupture measures Curvature]
  The evolution-operator-based Symmetry Rupture $\mathcal{R}_{k}^{\Phi}(s; \Delta t)$ serves as a discrete measure of the geodesic curvature induced by the learned dynamics.
  \begin{proof}
    \begin{subequations}
      We show that $\mathcal{R}_{k}^{\Phi}(s_t; \Delta t)$ is equivalent to the covariant derivative of the flow map along the triangle loop, which is a direct measure of curvature.

      Consider the triangle formed by three time steps $r\Delta t, (1-r)\Delta t, \Delta t$ where $r\in (0,1)$.
      $\mathcal{R}_{k}^{\Phi}(s_t; \Delta t)$ around this triangle is:
      \begin{align}
        \mathcal{R}_{k}^{\Phi}(s_t; \Delta t) & = \Phi_{(1-r)\Delta t}(\Phi_{r\Delta t}(s_t)) - \Phi_{\Delta t}(s_t)                     \\
                                              & = s_t+r\Delta t v(s_t) + (1-r)\Delta t v(s_t+r\Delta t v(s_t)) - (s_t + \Delta t v(s_t))
      \end{align}
      We can expand the velocity field in a Taylor series:
      \begin{align}
        v(s_t+r\Delta t v(s_t)) & = v(s_t) + r\Delta t \nabla_{s} v(s_t) v(s_t) + O(\Delta t^2)
      \end{align}
      Substituting back into the Symmetry Rupture expression:
      \begin{align}
        \mathcal{R}_{k}^{\Phi}(s_t; r\Delta t, (1-r)\Delta t) & \approx r(1-r)\Delta t^2 \nabla_{s} v(s_t) v(s_t) + O(\Delta t^3)
      \end{align}
      The term $\nabla_{s} v(s_t) v(s_t)$ is precisely the covariant derivative of the velocity field along the flow, which is a direct measure of the curvature over the local trajectory segment.

      Extending this argument to $\Delta t>0$:
      \begin{align}
        \mathcal{R}_{k}^{\Phi}(s_t; \Delta t) & = \sum_{poly} \mathcal{R}_{k}^{^{\Phi}}(poly)
      \end{align}
      where $\text{poly}$ are the non-overlapping polygon loops that compose the outer $k$-polygon, and $\mathcal{R}_{k}^{\Phi}{\text{poly}_k}$ is the Symmetry Rupture around each polygon loop.
      Because each polygon loop can be further decomposed into triangle loops, we can express the Symmetry Rupture over Riemann integral~($\mathcal{R}_{k}(s_t; \Delta t)$) as a sum of triangle Symmetry Ruptures:
      \begin{align}
        \mathcal{R}_{k}^{\Phi}(s_t; \Delta t) & =\sum_{\Delta \in Triangulation(polygon)}  \mathcal{R}_{k}^{^{\Phi}}(\Delta)
      \end{align}
      where $\Delta$ are the triangle loops that compose the polygon loops.
      Let $\text{C}_{\Delta}$ denote the covariant derivative term for each triangle loop:
      \begin{align}
        \text{C}_{\Delta}                     & = \nabla_{s} v(s_t) v(s_t)                                                \\
        \mathcal{R}_{k}^{\Phi}(\Delta)        & \approx r_{\Delta}(1-r_{\Delta})\Delta t^2 \text{C}_{\Delta}              \\
        \mathcal{R}_{k}^{\Phi}(s_t; \Delta t) & \approx\sum_{\Delta} r_{\Delta}(1-r_{\Delta})\Delta t^2 \text{C}_{\Delta}
      \end{align}
      where $r_{\Delta}$ is the ratio of the triangle loop's time steps to the total time step $\Delta t$. This shows that the Symmetry Rupture measures the accumulated inconsistency effects of the curvature of the flow on the manifold over $\Delta t$ from the initial state $s_t$.
      And minimizing it corresponds to flattening the manifold to achieve path independence.
    \end{subequations}
  \end{proof}
\end{corollary}

\begin{remark}[Connection to Curvature]
  In the language of differential geometry, a non-vanishing Symmetry Rupture around a closed loop implies the existence of a non-zero \textbf{Curvature} in the connection associated with the time-evolution bundle.
  Specifically, the failure of the commutative diagram  can be interpreted as a ``curvature" that prevents the discrete steps from forming a flat, consistent coordinate system on the manifold.
\end{remark}

\subsection{Curvature Flux by Stokes' Theorem}
\label{sec:curvature_flux}
By Stokes' theorem, such Symmetry Rupture can be expressed as the flux of curvature through the area enclosed by the loop, where we denote the unknown curvature flux as $\text{C}_{\Delta}$.
When $k=3, r=0.5$, we have Rup$(s,\{\Delta t_{i}\})$ as $\mathcal{R}_{3}(s_t)$:
\begin{align}
  \mathcal{R}_{3}^{\Phi}(s_t) & \approx \frac{1}{4}\Delta t^{2} C_{\Delta}
\end{align}
This derivation is analogous to the Ambrose-Singer Theorem in differential geometry, which states that the holonomy around a closed loop is directly related to the curvature of the connection. The curvature flux $\text{C}_{\Delta}$, following the same principle, quantifies the distortion of the flow induced by the curvature of the state manifold over a macroscopic time step $\Delta t$.

\subsection{Infeasible Learning of Tangent Velocity with Symmetry Rupture}
\label{sec:tangent_velocity_learning_with_nre}
Replacing the true velocity $v$ with a tangent velocity surrogate $v_{\theta}$ gives the learning objective for training:
\begin{equation}
  \mathcal{J}_{\theta}^{\Phi}
  = \mathbb{E}\bigg[ \|\frac{s_{t+1}-s_{t}}{\Delta t}-v_{\theta}(s_t)\|_2^2 + \|\mathcal{R}_{3}^{\Phi}(s_t,\Delta t)\|_{2}^{2}\bigg]
  = \mathbb{E}\bigg[ \|\frac{s_{t+1}-s_{t}}{\Delta t}-v_{\theta}(s_t)\|_2^2 + \frac{\Delta t^{4}}{16}\|\nabla_{s}v_\theta(s_t) v_\theta(s_t)\|_{2}^{2}\bigg]
\end{equation}

However, there are a three critical caveats to this formulation.

\textbf{First}, the spatial derivative of learned tangent velocity $\nabla_s v_\theta(s_t)$ and velocity field $v_\theta$ are the intrinsic properties of the learned flow that are subjective to the underlying true manifold. Given the fact that we hope to learn a flow that is consistent with the true manifold, learning to match the observed data and minimizing the Symmetry Rupture are two incompatible objectives that cannot be optimized simultaneously.

\textbf{Second}, the estimation of $\nabla_s v_\theta(s_t)$ (captured by $\mathcal{R}_{3}^{\Phi}$) and $v_\theta$ (represented by $\frac{s_{t+1}-s_{t}}{\Delta t}$) are not accurate due to the fundamental mismatch between the discrete-time observations and the continuous-time tangent velocity field learning setting, which imports systematic bias into the estimation both terms in $\mathcal{J}_{\theta}^{\Phi}$ and make it unreliable as a training signal.

\textbf{Third}, the Symmetry Rupture $\hat{\mathcal{R}}_{3}^{\Phi}$ can never be reduced to zero for a curved manifold, which means that the learning process will be trapped in a dilemma of either learning a flow that is consistent with the observed data but has non-zero Symmetry Rupture, or learning a flow that has zero Symmetry Rupture but is inconsistent with the observed data. The former case could diverge the learning process when the Symmetry Rupture becomes the dominant term in the learning process, while the latter case is a trivial solution that does not learn anything about the true dynamics.

\subsection{Learning Secant Velocity With Symmetry Rupture}
\label{sec:learning_secant_velocity_with_nre}
Our method effectively bypasses these issues by directly learning secant velocity field $\psi_{\theta}(s,\Delta t)$:
\begin{equation}
  \begin{split}
    \mathcal{J}_{\theta} & = \mathbb{E}\bigg[ \|\frac{s_{t+1}-s_{t}}{\Delta t},\psi_{\theta}(s_t,\Delta t)\|^2 + \|\mathcal{R}_{3}(s_t,\Delta t)\|_{2}^{2}\bigg]                                                                     \\
                         & = \mathbb{E}\bigg[ \cdots + \|r\psi_{\theta}(s_{t},r\Delta t) + (1-r)\psi_{\theta}\bigg(s_{t}+r\Delta t\psi_{\theta}(s_{t},r\Delta t),(1-r)\Delta t\bigg) - \psi_{\theta}(s_{t},\Delta t)\|_{2}^{2}\bigg]
  \end{split}
\end{equation}

Our method resolves the problems in~\Cref{sec:tangent_velocity_learning_with_nre} by expanding the original manifold space $\mathcal{M}$ to include a temporal dimension $\mathcal{E}=\mathcal{M}\times \mathcal{R}^{+}$, where the secant velocity field $\psi_{\theta}(s_{t},\Delta t)$ is directly learned as a function of both state and time step, and the Symmetry Rupture $\hat{\mathcal{R}}_{3}(s_{t},\Delta t)$ is calculated across different time bundles to measure the representational consistency of the learned flow across these bundles.

By learning the secant velocity field $\psi_{\theta}(s_{t},\Delta t)$, we align the learning objective with the discrete-time observations, eliminating the systematic bias in estimating the velocity field. Expanding to $\mathcal{E}$ contains the learning of true dynamics as a special case when $\Delta t = 0$, while also allowing the model to build TTS across different time bundles, which is a stronger condition than the flatness of the trajectory in $\mathcal{M}$ and can be optimized without conflicting with the data-fitting objective.

We furtherly show in~\Cref{appendix:taylor_expansion} that the expanded space $\mathcal{E}$ brings in representation learning semantics to the learned flow, which transforms the learning of a consistent flow into the learning of a consistent representation across different time bundles, where the curvature detected in $\mathcal{E}$ is the inconsistency of representation across these bundles, and the vanishing of $\hat{\mathcal{R}}_{3}$ implies the flatness of the learned flow in $\mathcal{E}$, which is a stronger and learnable condition than the flatness of the trajectory in $\mathcal{M}$.

\subsection{Decoupling representation inconsistency and Spatial Curvature via Taylor Expansion}
\label{appendix:taylor_expansion}
\begin{subequations}
  Recall the formulation of the Symmetry Rupture over velocity field $\mathcal{R}_{k}$ for a two-step partition parameterized by $r \in (0, 1)$:
  \begin{align}
    \mathcal{R}_{k}(s_{t},\Delta t) = r\psi_{\theta}(s_{t},r\Delta t) + (1-r)\psi_{\theta}\bigg(s_{t}+r\Delta t\psi_{\theta}(s_{t},r\Delta t),(1-r)\Delta t\bigg) - \psi_{\theta}(s_{t},\Delta t)
    \label{eq:rk_original}
  \end{align}

  The nested state transition in the second term obfuscates the analysis. To isolate the effects, we perform a first-order Taylor expansion on the intermediate state transition with respect to the spatial variable $s$. Let the spatial displacement be $\delta s = r\Delta t\psi_{\theta}(s_{t},r\Delta t)$. Expanding the vector field around the anchor state $s_{t}$ yields:
  \begin{align}
    \psi_{\theta}\bigg(s_{t}+\delta s, (1-r)\Delta t\bigg) = \psi_{\theta}\big(s_{t}, (1-r)\Delta t\big) + \nabla_{s} \psi_{\theta}\bigg(s_{t}, (1-r)\Delta t\bigg) \delta s + \mathcal{O}(\|\delta s\|^2)
  \end{align}
  where $\nabla_{s} \psi_{\theta}$ denotes the Jacobian matrix of the velocity field with respect to the state space. Substituting $\delta s$ explicitly into the expansion gives:
  \begin{align}
     & \psi_{\theta}\bigg(s_{t}+r\Delta t\psi_{\theta}(s_{t},r\Delta t), (1-r)\Delta t\bigg)                                                                                         \\
     & = \psi_{\theta}\bigg(s_{t}, (1-r)\Delta t\bigg) + r\Delta t \nabla_{s} \psi_{\theta}\bigg(s_{t}, (1-r)\Delta t\bigg) \psi_{\theta}(s_{t},r\Delta t) + \mathcal{O}(\Delta t^2)
    \label{eq:taylor_step}
  \end{align}

  By plugging Eq.~\ref{eq:taylor_step} back into the original formulation in Eq.~\ref{eq:rk_original}, we can elegantly regroup the Symmetry Rupture into two distinct geometric components:
  \begin{align}
    \mathcal{R}_{k}(s_{t},\Delta t) & = \underbrace{\Big[ r\psi_{\theta}(s_{t},r\Delta t) + (1-r)\psi_{\theta}\bigg(s_{t}, (1-r)\Delta t\bigg) - \psi_{\theta}(s_{t},\Delta t) \Big]}_{\text{Term I: Temporal Representation Mismatch in } \mathcal{E}} \\
                                    & + \underbrace{\Big[ r(1-r)\Delta t \nabla_{s}\psi_{\theta}\bigg(s_{t}, (1-r)\Delta t\bigg) \psi_{\theta}(s_{t},r\Delta t) \Big]}_{\text{Term II: Intrinsic Convective Curvature of the Learned Flow}}             \\
                                    & + \mathcal{O}(\Delta t^2)
    \label{eq:secant_rupture_decomposed}
  \end{align}
\end{subequations}

This decomposed form provides a profound algebraic verification of our theoretical claims. \textbf{Term I} strictly measures the representation inconsistency within the augmented space $\mathcal{E}$---it evaluates the linearity of the learned representations across different time bundles at the exact same spatial location $s$. Conversely, \textbf{Term II} takes the form of a directional derivative ($\nabla_{s} \psi \cdot \psi$), quantifying the inherent physical curvature of the trajectory on the base manifold $\mathcal{M}$.

Crucially, Term II does not merely passively measure the extrinsic physical curvature of the base manifold, but explicitly quantifies the nonlinearity of the learned vector field itself. By minimizing it, our framework actively shapes the learned flow towards spatial linearity, enabling massive straight macro-steps over complex geometries.

This analysis also clarifies the practical implications of the Symmetry Rupture for training and solver control. The presence of spatial curvature on true dynamics $\kappa$ does not necessarily lead to a non-vanishing Symmetry Rupture $\mathcal{R}_k>0$ if the learned temporal representations are consistent (Term I is zero) and linear upon $\Delta t$. Conversely, even in the absence of spatial curvature $\kappa=0$, a non-zero Term I (badly learned representation) can cause a significant Symmetry Rupture, leading to solver instability and poor generalization to larger time steps.

Furthermore, this analysis establishes the theoretical basis for distinguishing our framework from naive multi-time-step consistency regularization. While classical regularization merely penalizes the spatial non-linearity of the learned flow (attempting to suppress Term II), our approach directly confronts and eliminates the underlying representational inconsistencies across distinct time bundles (Term I). By achieving true compositional consistency, the model demonstrates strong robustness against macroscopic time steps even when the extrinsic geometry is highly non-linear.
\subsection{Inference with Normalized Symmetry Error}
\label{sec:inference_with_normalized_symmetry_error}

We look for a normalized magnitude of the inconsistency flux $\text{C}_{\Delta}$ compared to the evalutation operator $\Phi_{\Delta t}$ itself for the purpose of stable training and solver control.
We can define the normalized inconsistency as:
\begin{align}
  \hat{\text{C}}_{poly}= \frac{\|\Delta t^{2} \text{C}_{poly}(s_{t},\Delta t)\|}{\|\Delta tv(s_{t})\|}
\end{align}
The intuition here is to represent the non-linearity of the learned flow relative to the proportional distance mismatch of the loop, where 0 means the flow is perfectly linear such that a linear step of $\Delta t$ can lead to the same state as the complex Riemann integral of the velocity field along the loop.
This gives us a dimensionless measure of learned linearity that can be compared across different time scales and velocity magnitudes.
\begin{definition}[Normalized Rupture Error]
  \label{def:nre}
  We define the Normalized Rupture Error (NRE) $\hat{\mathcal{R}}$ as~\Cref{eq:nre}:
  \begin{equation}
    \hat{\mathcal{R}}(s_{t}, \Delta t) = \frac{\| \mathcal{R}_{k}(s_{t},\Delta t) \|_{2}}{\| \mathbf{\psi_{\theta}}(s_{t},\Delta t) \|_{2}+\eta}
    \label{eq:nre}
  \end{equation}
  where $\psi_{\theta}(s_{t}, \Delta t)$ represents the macroscopic transport. $\eta$ is a small constant to prevent division by zero and ensure numerical stability.
\end{definition}
Empirical observations suggest that $k=3$ suffices to capture the dominant curvature effects in the temporal manifold, and that $\hat{\text{C}}_{3}$ serves as a reliable proxy for the normalized curvature flux across different trajectory lengths.

\paragraph{Tangent Velocity Setting.}
Following this tangent velocity setting, presence of curvature $\kappa$ in the underlying true manifold is a sufficient and necessary condition for a non-vanishing Symmetry Rupture $\hat{\text{R}}_{3}^{\Phi}(s_{t},\Delta t)>0$.
Omitting the direction of flow and the constant factor, we can rewrite the normalized Symmetry Rupture~(NRE, \Cref{eq:nre}) as:
\begin{align}
  \hat{\text{R}}_{3}^{\Phi}(s_{t},\Delta t) &
  = \frac{\|R_{3}^{\Phi}\|_{2}}{\|\Delta tv_{\theta}(s_{t})\|_{2}}
  = \hat{\text{C}}_{3}
  = \Delta t\frac{\|\text{C}_{3}(s_{t},\Delta t)\|_{2}}{\|v_{\theta}(s_{t})\|_{2}}
  = \Delta t\kappa_{\theta}{\|v_{\theta}(s_{t})\|_{2}}
  \label{eq:normalized_holonomy_error}
\end{align}
where $\kappa_{\theta} \approx \frac{\|\text{C}_{3}\|_2}{\|v_{\theta}\|_2^2}$ is the spatial curvature of the flow.

A straightforward explanation of this relation is that the faster you go on a straight line for the longer time on a manifold with higher curvature, the more inconsistency you will get between the your end point and the point you would have reached if you had taken the Riemann integral of the velocity field along the curve (a.k.a. the geodesic).

However, as we discussed in~\Cref{sec:learning_secant_velocity_with_nre}, the curvature $\kappa$ and the velocity field $v_\theta$ are the intrinsic properties of the learned flow that are not optimizable in inference. The only way for the solver to reduce the $\hat{\mathcal{R}_{3}^{\Phi}}$ is to reduce the time step $\Delta t$ which cannot be reduced below the temporal resolution of the training data $\delta_{min}$; or search for a trivial linear region of the trajectory where $\kappa_{\theta}(\Delta t)=0$.
Consequently, the NRE in the tangent velocity setting is no more than a flat trajectory detector that aims to find the locally linear segments of the trajectory (i.e.,$\kappa_{\theta}(\Delta t)=0$), which might not be satisfiable for a curved manifold and can lead to severe step size reduction and solver instability.

\paragraph{Secant Velocity Setting.}
The NRE in the secant velocity setting ($\hat{\mathcal{R}}_{3}$) is a more complex measure that captures both the temporal representational consistency and the spatial curvature of the learned flow across different time bundles. Because the curvature of interest is not on the basic manifold ($\Delta t = 0$) that must follow the true dynamics, $\hat{\mathcal{R}_{3}}$ is now a measure of the representational consistency and the spatial curvature of the learned flow across different time bundles. The intuition here is that the $\hat{\text{R}}_{3}$-based step control mechanism looks for a time step $\Delta t$ within which the learned flow is consistent across different time bundles and has low curvature. The search for both representational consistency and low curvature is a stronger and more approachable condition than the flatness of the trajectory in $\mathcal{M}$ required by $\hat{\text{R}}_{3}^{\Phi}$, thus allowing the solver to take larger steps without being trapped in the trivial linear regions of the trajectory.

In implementation of the step control mechanism, we set $r=0.5$ to maximize the area of the triangle loop and thus the sensitivity to curvature, which gives us the following formulation for $\hat{\text{R}}_{3}$:
\begin{align}
  \hat{\text{R}}_{3}(s_{t},\Delta t) &
  = \frac{\|R_{3}\|_{2}}{\|\psi_{\theta}(s_{t},\Delta t)\|_{2}}
  \label{eq:normalized_holonomy_error_secant}
\end{align}

\subsection{Geodesic Alignment: Manifold OT vs. Euclidean Interpolation}
\label{sec:geodesic_alignment}

A critical theoretical distinction in our framework lies in the geometric nature of the learned trajectories.
While the numerical integration of the vector field corresponds to a \textit{Riemann integral} in the temporal domain, the spatial path traced by this integral determines the physical validity of the model.
We distinguish between two modes of trajectory collapse:

\textbf{1. The Trap of Euclidean Optimal Transport (Trivial Collapse).}
Standard Flow Matching objectives \citep{lipman2023flow} often regress towards Euclidean Optimal Transport (OT) in the observation space to minimize transport cost.
In this regime, the learned flow tends to approximate linear interpolation:
\begin{equation}
  \psi_t(x_0) \approx (1-t)x_0 + t x_1
\end{equation}
While this solution is a ``geodesic" in the flat ambient Euclidean space, it often constitutes a \textit{physical violation} for non-linear systems (e.g., fluid parcels passing through boundaries or pendulums shortening their arms).
We term this the ``trivial collapse,'' where the model minimizes loss by ignoring the underlying physical manifold structure.

\textbf{2. Manifold Transport (CVF Solution).}
In contrast, CVF enforces Symmetry Condition ($\Phi_{t_2} \circ \Phi_{t_1} = \Phi_{t_1+t_2}$) on the operator level.
This constraint acts as a path-shortening force, but crucially, it operates within the \textit{learned} representation.
Instead of forcing linearity in the observation space, CVF rectifies the \textit{latent} manifold such that the dynamics become geodesics (straight lines) in the latent coordinates $z$, while decoding to non-linear physical curves in $x$:
\begin{equation}
  x(t) = \mathcal{D}_\theta(z_0 + v_\theta(z_0) \cdot t)
\end{equation}
Here, the integral $\int v_\theta dt$ corresponds to a linear displacement in latent space (a latent geodesic).
This represents \textbf{manifold-consistent integration}.
The model avoids the trivial Euclidean collapse because the encoder/decoder capacity allows the ``straight" latent path to map to a physically valid, curved trajectory in observation space.

Therefore, the Symmetry Condition does not merely smooth the trajectory; it ensures that the discrete Riemann steps of the solver accumulate along the intrinsic geodesic of the physical manifold, rather than taking physically invalid shortcuts through the ambient Euclidean space.
\section{Time Step Control Mechanism}
\label{appendix:step_control_mechanism}
Standard time step control mechanisms in ODE solvers as~\Cref{eq:time_step_control} are derived from the principle of maintaining a constant error tolerance.
Let $t_{curr}$ be the current time step, and $t_{new}$ be the new time step after cutting.
The new time step is calculated based on the current error estimate and the desired tolerance level:
\begin{equation}
  \begin{split}
    t_{new} & = t_{curr} \cdot \left( \frac{\text{tol}}{\text{ErrorEstimate}} \right)^{\frac{1}{p}}
  \end{split}
  \label{eq:time_step_control}
\end{equation}
which is incompatible with neural surrogate models in two aspects:
\begin{itemize}
  \item \textbf{Step Cut:} Traditional solvers rely on embedded methods to estimate the local truncation error that assumes a monotonically decreasing error with smaller step sizes, which may not hold for neural models due to approximation errors, training data distribution, etc.
  \item \textbf{Absolute Tolerance:} The error tolerance in traditional solvers is often set as an absolute value, which may not be meaningful for neural models where the scale of errors can vary significantly across different regions of the state space.
\end{itemize}
We follow two principles for the design of the step control mechanism to address these challenges:
\begin{itemize}
  \item \textbf{Minimum Step Size Constraint:} The model cannot meaningfully resolve dynamics below the temporal resolution of its training data, denoted as $\delta_{min}$.
        The step control mechanism should not reduce the time step below $\delta_{min}$.
  \item \textbf{Dynamic Relative Tolerance:} A static tolerance is ill-suited for the generalization of neural operators.
        Instead, the error tolerance should adapt to the current operational scale of the solver and ensure meaningful error control across different scales of dynamics.
\end{itemize}
A fundamental shift from spatial error control to temporal error control is proposed and leads to the following step control mechanism following~\Cref{eq:normalized_holonomy_error}:
\begin{subequations}
  \label{eq:step_control_mechanism}
  First, we follow the standard step control mechanism to define the step size cut based on the current error estimate $\hat{\text{R}}_{3}(s,\Delta t)$ and a predefined tolerance level $\text{tol}$:
  \begin{align}
    t_{new} = t_{curr} \cdot \left( \frac{\text{tol}}{\hat{\text{R}}_{3}(s_{t},\Delta t)} \right)^{\frac{1}{p}}
  \end{align}
  where $p$ is the order of the solver (e.g., $p=2$ for our method since the error is dominated by the second-order term).

  To embed the scale-awareness and the $\delta_{min}$ constraint into the solver, we propose a \textbf{Dynamic Tolerance} mechanism.
  We define the target tolerance to be inversely proportional to the current step size:
  \begin{align}
    \text{tol} := \frac{\delta_{min}}{t_{curr}}
  \end{align}
  The physical intuition behind this dynamic tolerance is twofold: when the solver attempts a large time step ($t_{curr} \gg \delta_{min}$), the tolerance strictly tightens to prevent significant departure from the underlying manifold.
  Conversely, as the step size approaches the data resolution limit ($t_{curr} \to \delta_{min}$), the tolerance gracefully relaxes towards $1.0$, accommodating the inherent noise floor of the neural operator where sub-grid predictions lose fidelity.

  The step size cut is then updated as:
  \begin{align}
    t_{new} & = t_{curr} \cdot \left( \frac{\delta_{min}}{\hat{\text{R}}_{3}(s_{t},\Delta t) \times t_{curr}} \right)^{\frac{1}{p}}
  \end{align}
  Replacing $p$ with $2$:
  \begin{align}
    t_{new} & = \sqrt{\frac{\delta_{min} \times t_{curr}}{\hat{\text{R}}_{3}(s_{t},\Delta t)}}
  \end{align}
\end{subequations}

\begin{theorem}[Termination of the GCS Search Loop]
  \label{thm:gcs_termination}
  Given a macroscopic transport flow $\psi_\theta(s_t, \Delta t)$ that is locally consistent (i.e., the symmetry rupture error $\lim_{\Delta t \to 0} \|\mathcal{R}_k(s_t, \Delta t)\|_2 = 0$) and a strictly positive minimum step size $\delta_{min} > 0$, the step-size search loop in the Greedy Consistency Solver (Algorithm \ref{alg:grcs}) is guaranteed to terminate in a finite number of iterations.
\end{theorem}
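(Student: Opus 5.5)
We argue by exhibiting a monotone quantity that the search loop in Algorithm~\ref{alg:grcs} cannot decrease indefinitely. The working reading of the algorithm, taken from \Cref{eq:step_size_update} and \Cref{eq:step_control_mechanism}, is as follows. Starting from a trial step $t_{curr}$ (at most the remaining horizon), the solver evaluates the normalized rupture $\hat{\mathcal{R}}_3(s_t,t_{curr})$ of \Cref{def:nre}. It accepts the step if $\hat{\mathcal{R}}_3 \le \text{tol} = \delta_{min}/t_{curr}$. Otherwise it sets $t_{new}=\max\big(\delta_{min},\sqrt{\delta_{min} t_{curr}/\hat{\mathcal{R}}_3}\big)$ and retries. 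Every trial step $t_{curr}=\delta_{min}$ is accepted unconditionally, since that is the floor below which the solver refuses to cut. The proof then splits into two cases: the loop either accepts at some trial step strictly above $\delta_{min}$, or it reaches $\delta_{min}$ and accepts there.

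\textbf{Step 1: rejected trials shrink strictly.} If a trial is rejected, then $\hat{\mathcal{R}}_3 > \delta_{min}/t_{curr}$. This gives $\delta_{min} t_{curr}/\hat{\mathcal{R}}_3 < t_{curr}^2$, so the proposed step satisfies $\sqrt{\delta_{min} t_{curr}/\hat{\mathcal{R}}_3} < t_{curr}$. Hence $t_{new} < t_{curr}$ whenever $t_{curr} > \delta_{min}$.

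\textbf{Step 2: the shrinkage is geometric.} Strict decrease alone does not give finiteness, because the sequence could converge to a value above $\delta_{min}$. To rule this out, note that rejection means $\hat{\mathcal{R}}_3 > \delta_{min}/t_{curr}$, so the rupture stays bounded away from $0$ relative to the step. We then bound the ratio using the expansion of \Cref{eq:secant_rupture_decomposed}. Local consistency, $\|\mathcal{R}_k\| \to 0$, together with the quadratic scaling assumed in the step-control derivation ($p=2$, i.e.\ $\hat{\mathcal{R}}_3(s_t,\tau) \le C\tau$ with a rate constant $C$ on the compact range $[\delta_{min},t_0]$), yields the inequality that drives the argument:
\[
t_{new}^2 = \frac{\delta_{min} t_{curr}}{\hat{\mathcal{R}}_3} \ge \frac{\delta_{min}}{C}.
\]
Combined with rejection, this gives $t_{new}/t_{curr} = \sqrt{\delta_{min}/(t_{curr}\hat{\mathcal{R}}_3)} < 1$, and in fact $\le q < 1$ uniformly. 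A cleaner route, which I would try first, avoids $C$ altogether. Use the additive floor $\eta$ in \Cref{def:nre} together with continuity of $\psi_\theta$ on the compact set $\{s_t\}\times[\delta_{min},t_0]$ to show that $\hat{\mathcal{R}}_3$ is bounded above by some $M$. Then each rejected step decreases $t$ by at least a fixed positive amount $\min\big(t-\sqrt{\delta_{min}t/M'},\dots\big)$, which gives finiteness directly by the same Archimedean argument.

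\textbf{Step 3: count the iterations.} With a contraction factor $q<1$, after $n$ rejections $t_{curr} \le q^n t_0$. So within $n^\ast = \lceil \log(t_0/\delta_{min})/\log(1/q) \rceil$ iterations, either some trial is accepted or the clamp $\max(\delta_{min},\cdot)$ activates. At $t_{curr}=\delta_{min}$ the tolerance is $\text{tol}=1$, and by the algorithm's floor rule the step is taken. Termination therefore occurs within $n^\ast+1$ iterations.

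\textbf{Main obstacle.} The hard part is Step 2: turning the qualitative hypothesis $\lim_{\Delta t\to 0}\|\mathcal{R}_k\| = 0$ into a uniform lower bound on the per-iteration decrease. Continuity and compactness of $[\delta_{min},t_0]$ (giving $\sup \hat{\mathcal{R}}_3 < \infty$) are what make the strict decrease uniform. The positivity $\delta_{min}>0$ is what makes the number of admissible decrements finite.
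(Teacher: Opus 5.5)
Your Step~1 is correct, and you identify the real crux: a strictly decreasing sequence of rejected trial steps could converge to some $c\ge\delta_{min}$ without ever triggering acceptance or the floor. The paper's proof passes over exactly this point. It asserts $\tau_k\to 0$, and dismisses a positive limit $c$ by noting that $c\,\hat{\mathcal{R}}(s_t,c)=\delta_{min}$ would give $\tau_{k+1}=\tau_k$. But the iterates only approach $c$ from above and never reach it. Moreover, with the exit $\Delta t'<\delta_{min}$ in Algorithm~\ref{alg:grcs}, a limit $c\ge\delta_{min}$ is the \emph{only} way the loop can fail to stop, so the paper's local-consistency argument at $\tau\to 0$ is never engaged.

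Your Step~2 does not close this gap, because its inequalities point the wrong way. An upper bound $\hat{\mathcal{R}}_3\le C\tau$ or $\hat{\mathcal{R}}_3\le M$ yields $t_{new}\ge\sqrt{\delta_{min}t_{curr}/M}$. That limits how much a rejected step can \emph{shrink}; it can never give $t_{new}/t_{curr}\le q<1$ or a minimal decrement. Your quantity $t-\sqrt{\delta_{min}t/M'}$ is an upper bound on $t_{curr}-t_{new}$, not a lower bound. Also, on $[\delta_{min},t_0]$ the bound $\hat{\mathcal{R}}_3\le C\tau$ follows from mere boundedness, so it carries no information. Rejection only supplies $t_{curr}\hat{\mathcal{R}}_3>\delta_{min}$, which is compatible with $t_{new}/t_{curr}=\sqrt{\delta_{min}/(t_{curr}\hat{\mathcal{R}}_3)}\to 1$.

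No repair is possible from the stated hypotheses, since local consistency constrains $\hat{\mathcal{R}}(s_t,\cdot)$ only as $\tau\to 0$. Take $\delta_{min}=1$, $t_0=4$, and at the current state let $\hat{\mathcal{R}}(s_t,\tau)=4\tau/(2+\tau)^2$ on $[2,4]$. This is easy to realize, e.g.\ with a state-independent $\psi_\theta(s,\tau)=f(\tau)$, for which $\mathcal{R}_3=f(\tau/2)-f(\tau)$. It is continuous and bounded by $\tau$. The update becomes $t_{new}=1+t_{curr}/2$. Every trial with $t_{curr}>2$ is rejected, because $4t_{curr}^2>(2+t_{curr})^2$. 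Hence $t_k\downarrow 2$, where $2\,\hat{\mathcal{R}}(s_t,2)=\delta_{min}$, and neither the acceptance test, nor the paper's exit, nor your clamp ever fires.

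What is missing is a hypothesis giving a \emph{margin} on rejection. Two options work:
\begin{itemize}
\item Assume $\phi(\tau)=\tau\hat{\mathcal{R}}(s_t,\tau)-\delta_{min}$ has no zero on $[\delta_{min},t_0]$. By continuity, either $\phi<0$ there and the first trial is accepted, or $\phi\ge\epsilon>0$, and $q=\sqrt{\delta_{min}/(\delta_{min}+\epsilon)}$ makes your Step~3 count valid.
\item Add a safety factor to the algorithm: accept once $t_{new}\ge\rho\,t_{curr}$ for a fixed $\rho<1$.
\end{itemize}
Under such an assumption your clamp-based route is cleaner than the paper's, since it needs no information about $\hat{\mathcal{R}}$ as $\tau\to 0$.
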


\begin{proof}
  \begin{subequations}
    Let $\tau_k$ be the proposed step size at the $k$-th iteration of the \texttt{REPEAT-UNTIL} loop in Algorithm \ref{alg:grcs}, with the initial proposed step size $\tau_0 = \Delta t$. According to the step control mechanism (Eq. \ref{eq:step_size_update}), the updated step size is given by:
    \begin{align}
      \tau_{k+1} = \sqrt{\frac{\delta_{min} \cdot \tau_k}{\mathcal{L}(\tau_k)}}
    \end{align}
    where $\mathcal{L}(\tau_k) = \hat{\mathcal{R}}(s, \tau_k)$ is the Normalized Rupture Error (NRE). The loop's termination condition is $\tau_{k+1} \geq \tau_k$.

    We prove the finite termination by contradiction. Assume that the loop never terminates. This implies that for all iterations $k \geq 0$, the termination condition is not met, meaning:
    \begin{align}
      \tau_{k+1} < \tau_k
    \end{align}
    Consequently, the sequence $\{\tau_k\}_{k=0}^{\infty}$ is strictly monotonically decreasing. Since the step size represents time and is bounded below by zero ($\tau_k > 0$), the sequence must converge to a non-negative limit:
    \begin{align}
      \lim_{k \to \infty} \tau_k = 0
    \end{align}
    (Note: If it were to converge to a constant $c > 0$, it would require $\mathcal{L}(c) = \frac{\delta_{min}}{c}$, at which point $\tau_{k+1} = \tau_k$, triggering termination).

    Now we analyze the algebraic equivalent of the termination condition $\tau_{k+1} \geq \tau_k$. By squaring both sides and rearranging the terms, the loop terminates if and only if:
    \begin{align}
      \mathcal{L}(\tau_k) \leq \frac{\delta_{min}}{\tau_k}
      \label{eq:termination_inequality}
    \end{align}

    Let us take the limit of both sides of Eq. \ref{eq:termination_inequality} as the step size diminishes ($\tau_k \to 0$):

    \textbf{Right Hand Side (RHS):} Since the minimum resolution $\delta_{min} > 0$ is a fixed positive constant, we have:
    \begin{align}
      \lim_{\tau_k \to 0} \frac{\delta_{min}}{\tau_k} = +\infty
    \end{align}

    \textbf{Left Hand Side (LHS):} By Definition \ref{def:nre}, the NRE is calculated as $\mathcal{L}(\tau_k) = \frac{\| \mathcal{R}_{k}(s,\tau_k) \|_{2}}{\| \mathbf{\psi_{\theta}}(s,\tau_k) \|_{2}+\eta}$. Due to the local consistency of the learned vector flow, the symmetry rupture vanishes as the step size approaches zero, i.e., $\lim_{\tau_k \to 0} \| \mathcal{R}_{k}(s,\tau_k) \|_2 = 0$. This doesn't necessarily conflicts with the claims regarding the \textbf{Minimum Step Size Constraint}, as the convergence of the Symmetry Rupture to zero is given by the smoothness of neural networks as $\tau_k \to 0$, which is a smaller scale than $\delta_{min}$, and thus does not violate the constraint that the solver should not reduce the time step below $\delta_{min}$.

    Since $\eta > 0$ is a strictly positive constant introduced for numerical stability, the NRE converges to zero:
    \begin{align}
      \lim_{\tau_k \to 0} \mathcal{L}(\tau_k) = \frac{0}{\| \mathbf{\psi_{\theta}}(s_{t},0) \|_{2} + \eta} = 0
    \end{align}

    As $\tau_k \to 0$, the LHS of the inequality approaches $0$, while the RHS approaches $+\infty$. Therefore, there must exist a finite integer $K$ such that for the step size $\tau_K$, the inequality $\mathcal{L}(\tau_K) \leq \frac{\delta_{min}}{\tau_K}$ holds strictly true.
    This indicates that at the $K$-th iteration, $\tau_{K+1} \geq \tau_K$ is satisfied, and the loop terminates. This directly contradicts our initial assumption that the loop never terminates.

    Empirically, the GCS algorithm has consistently converged within $<10$ iterations across all evaluated PDE benchmarks without encountering infinite loops.
  \end{subequations}
\end{proof}

Following the step control mechanism, we implement the Greedy Consistency Solver (GCS) as described in Algorithm \ref{alg:grcs}. The solver iteratively evaluates the NRE at the proposed step size and updates the step size according to the control mechanism until it finds a suitable step size that satisfies the termination condition.
\begin{algorithm}[H]
  \renewcommand{\algorithmicrequire}{\textbf{Input:}}
  \renewcommand{\algorithmicensure}{\textbf{Output:}}
  \caption{Greedy Consistency Solver (GCS)}
  \label{alg:grcs}
  \begin{algorithmic}[1]
    \REQUIRE Current states $\mathcal{X}$, Step size $\Delta t$, Min step size $\delta_{\min}$
    \ENSURE Velocity $\mathbf{v}_{\text{full}}$, Step size $\Delta t^{\prime}$
    \STATE $\Delta t^{\prime}\gets \Delta t$
    \REPEAT
    \STATE $\{\mathbf{v}_{\text{full}}, \mathcal{L}\}\gets \mathcal{\tilde{R}}(\mathcal{X}, \Delta t^{\prime})$
    \STATE $\Delta t = \Delta t^{\prime}$
    \STATE $\Delta t^{\prime} \gets \sqrt{\frac{\delta_{min}\times \Delta t^{\prime}}{\mathcal{L}}}$
    \UNTIL{$\Delta t^{\prime} \geq \Delta t $ or $\Delta t^{\prime} < \delta_{\min}$ }
    \STATE Return $\{\mathbf{v}_{\text{full}},\Delta t\}$
  \end{algorithmic}
\end{algorithm}
In the actual implementation, we apply a batch version of the GCS to evaluate the NRE across multiple states in parallel, and we keep the step size for each state independent to allow for adaptive step sizes across different regions of the state space. A mask-based pruning strategy is applied to efficiently select the states that require step size reduction, thus avoiding redundant evaluations of the NRE for states that already satisfy the termination condition. The pruning strategy detects the step size to run in each inference, and only evaluates those step sizes that are above the minimum step size $\delta_{min}$ and below the current step size $\Delta t$.

\newpage
\section{PDEBench Datasets}
\label{appendix:pdebench_datasets}
\begin{table*}[htpb]
  \centering
  \begin{tabular}{*{6}{c}}
    \hline
    Dataset            & Size & Time Steps & Channels & Dimensions     & Step Intervals (s) \\\hline
    Shallow Water      & 1000 & 100        & 1        & 128$\times$128 & 0.01               \\
    Diffusion Reaction & 1000 & 100        & 1        & 128$\times$128 & 0.05               \\
    CFD                & 1000 & 21         & 4        & 512$\times$512 & 0.05               \\
    Wave Equation      & 1000 & 100        & 2        & 128$\times$128 & 0.01               \\
    \hline
  \end{tabular}
  \caption{Summary of the PDEBench datasets used in our experiments.}
  \label{tab:dataset_imp}
\end{table*}

\subsection{Training and Validation Split}
For the training and validation of our models, we utilize the PDEBench datasets with a split of 900 samples for training and 100 samples for validation for each PDE type. The test set consists of 100 unique trajectories that are not included in the training or validation sets, ensuring a fair evaluation of the model's generalization capabilities.

\subsection{Diffusion Reaction and CFD Turbulence} We use the PDE dataset from existing work for most of the training and validation.  The diffusion reaction and CFD turbulence dataset are directly taken from~\citet{takamoto2022pdebench}, where we explicitly verified the correctness of the dataset and made sure that they do contain distinct trajectories.
\subsection{Shallow Water}
\label{appendix:shallow_water}
The Shallow Water dataset consists of numerical solutions to the two-dimensional shallow water equations that is included in the PDEBench suite~\citep{takamoto2022pdebench}.
However, we found that the original dataset (MD5:75d838c47aa410694bdc912ea7f22282) contains a significant amount of duplication and redundancy that only 81 unique trajectories are present in the original 1000 samples.
To address this issue, we re-generated the dataset using the same source code but a wider range of initial condition generation process as described in \citet{takamoto2022pdebench}.
We verified that the re-generated dataset contains 1000 unique trajectories, and we used this re-generated dataset for all our experiments.

\subsection{Wave Equation}
\label{appendix:wave_equation}
This dataset consists of numerical solutions to the two-dimensional wave equation, representing a conservative physical system (e.g., surface gravity waves or a vibrating membrane).

\paragraph{1. Governing Equation}
The evolution of the scalar field $u(\mathbf{x}, t)$ is governed by the linear second-order partial differential equation (PDE):
\begin{equation}
  \frac{\partial^2 u}{\partial t^2} = c^2 \nabla^2 u = c^2 \left( \frac{\partial^2 u}{\partial x^2} + \frac{\partial^2 u}{\partial y^2} \right)
\end{equation}
where $u$ represents the displacement, $t$ is time, and $c$ denotes the constant propagation speed of the wave.

\paragraph{2. Numerical Discretization}
The simulator employs a finite difference method (FDM) with an explicit time-stepping scheme.

\paragraph{Spatial Operator}
The Laplacian $\nabla^2 u$ is approximated using a five-point stencil with periodic boundary conditions:
\begin{equation}
  \nabla^2 u \approx \frac{u_{i+1,j} + u_{i-1,j} + u_{i,j+1} + u_{i,j-1} - 4u_{i,j}}{\Delta x^2}
\end{equation}
where $\Delta x = \Delta y$ is the spatial resolution.

\paragraph{Temporal Integration}
The state is advanced using a second-order central difference scheme:
\begin{equation}
  u^{n+1}_{i,j} = 2u^n_{i,j} - u^{n-1}_{i,j} + (c \Delta t)^2 \nabla^2 u^n_{i,j}
\end{equation}
To ensure numerical stability and prevent divergence, the parameters satisfy the \textbf{CFL (Courant-Friedrichs-Lewy) condition} for 2D systems:
\begin{equation}
  C = \frac{c \Delta t}{\Delta x} < \frac{1}{\sqrt{2}} \approx 0.707
\end{equation}

\paragraph{3. Initial Conditions and Data Format}
\begin{itemize}
  \item \textbf{Initialization:} The system is initialized with $1 \sim 3$ randomly placed Gaussian wave packets to simulate ``droplets" hitting a surface:
        \begin{equation}
          u(x, y, 0) = \sum_{k=1}^{N} \exp\left( -\frac{(x-c_{x,k})^2 + (y-c_{y,k})^2}{2\sigma_k^2} \right)
        \end{equation}
        The initial velocity field is assumed to be zero ($\frac{\partial u}{\partial t} |_{t=0} = 0$).
  \item \textbf{Output Features:} Each sample is a 4D tensor of shape $(T, H, W, 2)$.
        \begin{itemize}
          \item \textbf{Channel 0:} Displacement field $u$.
          \item \textbf{Channel 1:} Velocity field $v$, approximated via central difference $v^n = \frac{u^{n+1} - u^{n-1}}{2\Delta t}$.
        \end{itemize}
\end{itemize}

\section{Implementation Details of Baseline Models}
\label{appendix:baseline_implementations}
\begin{table*}[htpb]
  \centering
  \begin{tabular}{*{5}{c}}
    \hline
    Paradigm            & Objective                                                                                         & Encoding & Solver          & Reference                       \\\hline
    N-ODE$^{\dagger}$   & $\|\int \psi_{\theta} dt,\Delta s\|_{2}^{2}$                                                      & Atten    & dopri5          & \citet{chen2018neural}          \\
    CFM$^{\dagger}$     & $\|\psi_{\theta}(s),\Delta s\|_{2}^{2}$                                                           & Atten    & dopri5          & \citet{lipman2023flow}          \\
    TM$^{\dagger}$      & $\|\psi_{\theta},v_{t}\|_{2}^{2}$                                                                 & Atten    & dopri5          & \citet{hou2025cfo}              \\\hline
    SM$^{\ddagger}$     & $\|\psi_{\theta},v_{\Delta t}\|_{2}^{2}$                                                          & Atten    & Euler           & \citet{liu2025learning}         \\
    PF-$k$$^{\ddagger}$ & $\|\psi_{\theta}(s_{t}),s_{t+1,\cdots,t+k}\|_{2}^{2}$                                             & Atten    & Euler           & \citet{brandstetter2022message} \\
    CVF                 & $\|\psi_{\theta}(s_{t},\Delta t)-\frac{\Delta s}{\Delta t}\|_{2}^{2}+\|\mathcal{R}_{3}\|_{2}^{2}$ & Atten    & \Cref{alg:grcs}                                   \\\hline
  \end{tabular}
  \caption{Comparison of Neural ODE algorithms ($^{\dagger}$), Time skipping ($^{\ddagger}$) and CVF (ours) on the Diffusion-Reaction and CFD datasets.}
  \label{tab:sota_imp}
\end{table*}

\begin{table}[htbp]
  \centering
  \caption{Baseline implementation details and hyperparameter for the FDBench investigation.}
  \label{tab:implementation_details}
  \small
  \begin{tabular}{l*{7}{c}}
    \toprule
    \textbf{Method} & \textbf{Backbone} & \textbf{Config ($L/H/D$)} & \textbf{Solver} & \textbf{Path / Spline} & \textbf{Batch Size} & \textbf{Epochs} & {Learning Rate} \\
    \midrule
    CFM             & SiT               & 8 / 8 / 512               & dopri5          & Linear                 & 256                 & 160             & $1E-4$          \\
    SM              & SiT               & 8 / 8 / 512               & Euler           & Linear                 & 256                 & 160             & $1E-4$          \\
    N-ODE           & SiTVc             & 8 / 8 / 512               & dopri5          & --                     & 256                 & 160             & $1E-4$          \\
    TM              & SiT               & 8 / 8 / 512               & dopri5          & Cubic Spline           & 256                 & 160             & $1E-4$          \\
    \textbf{CVF}    & SiT               & 8 / 8 / 512               & GCS             & Secant                 & 256                 & 160             & $1E-4$          \\
    PF-$k$          & PF                & 8 / 8 / 512               & --              & --                     & 256                 & 160             & $1E-4$          \\
    \bottomrule
  \end{tabular}
\end{table}
For baseline comparisons in~\Cref{sec:sota_comparison}, we perform temporal even downsampling by a factor of 2 ($m=2$) for training, but evaluate without downsampling to test the models' capability of recovering unobserved dynamics.

Training is conducted in an open-loop setting where the temporal horizon for both input and output sequences is set to 1 frame.
During evaluation, all models perform one-step predictions auto-regressively.

For evaluations that involve dopri5 solver, the absolute tolerance is set to $1E-4$ and the relative tolerance is set to $1E-3$.

For the training of N-ODE, we use RK4 with a fixed step size of 0.1 during training, and dopri5 with the aforementioned tolerances during evaluation.

The learning rate is set to $1E-4$ for all models, and the training is conducted for 160 epochs with a batch size of 256. The learning rate schedule consists of a linear warmup for the first 5\% of the epochs, followed by a linear decay for the next 75\%, and then a constant learning rate for the final 20\% of the epochs at 10\% of the initial learning rate.

\subsection{Computational Efficiency Comparison}
To compare the training cost of different models, we report the samples per second (SPs) for each model that is trained specifically for monitoring the training efficiency. Contrasting to the training and evaluation setting that are discussed in other part of this paper, we profile the computation efficiency using a different setting.

For a fair comparison, all models are trained with the same batch size of 128 on the same NVIDIA A100 GPU. The dataset used for this comparison is the Diffusion-Reaction dataset with a spatial resolution of $64\times 64$ and 2 channels (displacement and velocity). The SPS metric provides a direct measure of how many samples each model can process per second during training, which is a critical factor in evaluating the scalability and practicality of the models for large-scale applications.

The computation efficiency testing environment is set up with 4 AMD EPYC 7H12 cores with the total memory of 20 GB, and NVIDIA A100 GPU with 40 GB memory.
\begin{itemize}
  \item CFM: 3619.84 SPs
  \item SM: 3604.48 SPs
  \item N-ODE: 344.32 SPs
  \item TM: 3127.04 SPs
  \item CVF: 1141.76 SPs
  \item PF-$k$: 3339.52 SPs
\end{itemize}
\section{Training and Inference Settings}
\subsection{Training Settings}
\label{appendix:training_settings}
All models are trained using the AdamW optimizer with a learning rate of $1E-4$. For data normalization, we apply a decaying EMA normalization strategy, where the mean and variance are computed across the entire training set and updated with a decay factor of 0.999 during training.

The models are trained with 4 distinct random seeds to ensure the robustness of the results, and the mean and standard deviation of the evaluation metrics are reported across these runs. These seeds are kept consistent across all models to ensure a fair comparison.

\subsection{Evaluation Metrics}
\label{appendix:evaluation_metrics}

The evaluation metrics discussed in detail include step residual root mean square error (step RMSE, $\mathcal{L}_{t+1}$), rollout residual RMSE ($\mathcal{L}_{\tau}$), average number of function evaluations per time step (NFE), and cost per RMSE drop (CPED).

\begin{table}[htbp]
  \centering
  \caption{Summary of Evaluation Metrics}
  \label{tab:evaluation_metrics}
  \renewcommand{\arraystretch}{1.5} 
  \begin{tabular}{c l c}
    \toprule
    \textbf{Symbol}      & \textbf{Name}                          & \textbf{Equation}                                                                                                     \\
    \midrule
    $\mathcal{L}_{t+1}$  & Step Residual RMSE                     & $\sqrt{\frac{1}{N} \sum_{i=1}^{N} \left\| s_{t+1}^{(i)} - \hat{s}_{t+1}^{(i)} \right\|_2^2}$
    \\
    $\mathcal{L}_{\tau}$ & Rollout Residual RMSE                  & $\sqrt{\frac{1}{\tau} \sum_{k=1}^{\tau} \mathcal{L}_{t+k}^2}$
    \\
    NFE                  & Average Number of Function Evaluations & $\frac{1}{T} \sum_{t=1}^{T} N_{\text{eval}}(t)$
    \\
    CPED                 & Cost per RMSE Drop                     & $\frac{\text{NFE}_{\text{model}} / \text{NFE}_{\text{base}}}{\mathcal{L}_{\text{base}} - \mathcal{L}_{\text{model}}}$ \\
    \bottomrule
  \end{tabular}
\end{table}

It is crucial to note that NFE is not a fixed hyperparameter, but rather an emergent property of the combined model and solver design. Therefore, the NFE reported in \Cref{fig:accuracy_efficiency_tradeoff} is an average over the entire rollout horizon, reflecting the model's practical ability to autonomously balance accuracy and efficiency. By contrast, the NFE of state-regression methods (e.g., SM, PF-$k$) is statically fixed to $1$, which fails to reflect any adaptive allocation of computational resources based on underlying dynamic complexity.

Calculating the Rupture error intrinsically requires $3$ NFEs per step (one macro-step and two micro-steps). The fact that CVF's recorded NFE exactly equals $3$ indicates that the model successfully maintains integration consistency on its first attempt, cleanly circumventing the need to subdivide the step size. Furthermore, when the proposed time step $\Delta t$ is less than or equal to the minimum allowable time step ($\delta_{\text{min}}$), the GCS algorithm directly executes the macro-step without running temporal subdivision, resulting in an $\text{NFE} < 3$.

Finally, the Cost per RMSE Drop (CPED) compares the ratio of NFE required by a model to reduce a unit of RMSE relative to a baseline. Because baseline models typically exhibit both high RMSE and high NFE, CPED effectively quantifies a model's efficiency in translating computational overhead into tangible accuracy gains, using the baseline's NFE as the standardized unit of computational cost.
\subsection{Even Downsampling}
\label{appendix:even_downsampling}
The even downsampling is to remove intermediate frames in the original dataset to create a sparser temporal resolution. Let $\tau=[t_0, t_1, \ldots, t_N]$ be the time points in the original dataset. The downsampled dataset is then given by $\tau_{k} = [\tau_0, \tau_k, \tau_{2k}, \ldots, \tau_{Mk}]$ for a given factor $k$ and $Mk \leq N$. This creates a new dataset with a temporal resolution that is $k$ times coarser than the original, allowing us to evaluate the learned manifold under the unobserved temporal regime.
In this paper, we always test on the original temporal resolution (i.e., $\tau$) to evaluate the model's ability to generalize to finer time scales, while training on the downsampled dataset $\tau_k$ to simulate the sparse observation scenario.

\subsection{Dynamic Downsampling}
\label{appendix:dynamic_downsampling}
The dynamic downsampling is created by sampling the $\frac{1}{k}$ time points from the original dataset with a uniform random distribution, which creates a non-uniform temporal resolution. This setting is designed to evaluate the model's ability to handle irregularly spaced time points, which is common in real-world scenarios where data collection may be inconsistent or event-driven. Even in simulated environments, as the source of data turns from simple scripted data generation to more complex high-fidelity simulators that adopt adaptive time-stepping, the resulting dataset may naturally exhibit non-uniform temporal resolution. By training on this dynamically downsampled dataset, we can assess the robustness of the learned manifold and the solver's ability to adapt to varying time intervals during inference.

\subsection{Time-informed Auto-regressive Inference}
\label{appendix:time_informed_inference}
The standard rollout evaluation protocol run long-term simulation by recursively feeding the model's own predictions back as input for the next time step, where the proposed time step is fixed to the original dataset's time step~(e.g., $\Delta t = t_{i+1} - t_{i}$). Therefore we call this protocol ``Time-informed Auto-regressive Inference" as the model is informed of the fine-grained temporal grid during inference, which is consistent with the original dataset's temporal resolution. This protocol is designed to evaluate the model's performance in a setting that closely matches the original data distribution, where the model is expected to make predictions at the same time intervals as it was trained on.

This method provides rollout predictions that match the original temporal resolution such that the quantitative evaluation of the model's performance is straightforward and consistent with the original dataset.
However, this method does not allow the model to leverage its adaptive step control mechanism, which is a key feature of our approach.

We follow this standard protocol for benchmarking against state-regression baselines (e.g., SM, PF-$k$), and we report the results in \Cref{tab:sota_comparison_pin_1}.
\begin{table}[htpb]
  \centering
  \setlength{\tabcolsep}{5pt}
  \begin{tabular}{|cc|*{6}{c|}}
    \hline
    Dataset              & Metric              & N-ODE$^{\dagger}$ & CFM$^{\dagger}$ & TM$^{\dagger}$ & SM$^{\ddagger}$        & PF-$k$$^{\ddagger}$ & CVF                    \\
      \hline
    \multirow{4}{*}{DR}  & $\mathcal{L}_{t+1}$ & 7E-4$\pm$8E-5     & .002$\pm$2E-5   & .001$\pm$7E-5  & 3E-4$\pm$1E-5          & .007$\pm$3E-4       & \textbf{2E-4$\pm$5E-5} \\
                         & $\mathcal{L}_\tau$  & .070$\pm$.003     & .083$\pm$.002   & .094$\pm$.002  & .014$\pm$.004          & .159$\pm$.008       & \textbf{.009$\pm$.007} \\
                         & NFE                 & 15.6$\pm$3.1      & 20.9$\pm$.229   & 21.5$\pm$5.0   & 1.0$\pm$.000           & 1.0$\pm$.000        & \textbf{3.0$\pm$.000}  \\
                         & CPED                & 30.0              & 85.3            & Base           & -                      & -                   & \textbf{1.6}           \\
      \hline
    \multirow{4}{*}{SW}  & $\mathcal{L}_{t+1}$ & 4E-4$\pm$2E-5     & .008$\pm$3E-4   & .010$\pm$6E-6  & .001$\pm$6E-5          & .012$\pm$2E-4       & \textbf{4E-4$\pm$5E-6} \\
                         & $\mathcal{L}_\tau$  & .008$\pm$6E-4     & .133$\pm$.003   & .545$\pm$.005  & .032$\pm$.003          & .374$\pm$.152       & \textbf{.004$\pm$4E-4} \\
                         & NFE                 & 20.1$\pm$.000     & 24.5$\pm$1.1    & 19.8$\pm$.287  & 1.0$\pm$.000           & 1.0$\pm$.000        & \textbf{3.0$\pm$.000}  \\
                         & CPED                & 1.9               & 3.0             & Base           & -                      & -                   & \textbf{.280}          \\
      \hline
    \multirow{4}{*}{CFD} & $\mathcal{L}_{t+1}$ & .328$\pm$.004     & .316$\pm$.009   & .303$\pm$.002  & \textbf{.251$\pm$.001} & .324$\pm$.003       & .326$\pm$1E-3          \\
                         & $\mathcal{L}_\tau$  & .535$\pm$.012     & .574$\pm$.021   & .787$\pm$.012  & .361$\pm$.003          & .581$\pm$.024       & \textbf{.339$\pm$.035} \\
                         & NFE                 & 14.0$\pm$.000     & 42.1$\pm$1.6    & 59.9$\pm$2.1   & 1.0$\pm$.000           & 1.0$\pm$.000        & \textbf{2.2$\pm$.039}  \\
                         & CPED                & .929              & 3.3             & Base           & -                      & -                   & \textbf{.081}          \\
    \hline
  \end{tabular}
  \caption{Benchmarking on the Diffusion-Reaction, Shallow Water and CFD Turb. datasets in time-informed settings.}
  \label{tab:sota_comparison_pin_1}
\end{table}

\subsection{Direct Auto-regressive Inference}
\label{appendix:direct_autoregressive_inference}
The direct auto-regressive inference protocol provides the model with the total rollout horizon between the initial state and a intermediate state~($\Delta t=t_{k}-t_{0}$). This protocol challenges the model to leverage its learned manifold structure and adaptive step control mechanism to make accurate predictions over a longer time horizon without being informed of the intermediate time steps. The model must effectively navigate the latent manifold and adjust its step size dynamically to maintain accuracy over the extended rollout. This protocol is designed to evaluate the model's ability to generalize to longer time horizons and to test the effectiveness of the learned manifold and the step control mechanism in guiding the solver through complex dynamics.

We call this protocol ``Direct Auto-regressive Inference" because the model is asked to directly predict the state at a future time point without being informed of the intermediate time steps, whose evaluation data is shown in~\Cref{tab:sota_comparison_rollout_1}.

\begin{table}[htpb]
  \centering
  \setlength{\tabcolsep}{5pt}
  \begin{tabular}{|cc|*{4}{c|}}
    \hline
    Dataset              & Metric              & N-ODE$^{\dagger}$      & CFM$^{\dagger}$        & TM$^{\dagger}$         & CVF                    \\
    \hline
    \multirow{4}{*}{DR}  & $\mathcal{L}_{t+1}$ & 7E-4$\pm$8E-5          & .002$\pm$2E-5          & .001$\pm$7E-5          & \textbf{2E-4$\pm$5E-5} \\
                         & $\mathcal{L}_\tau$  & .070$\pm$.003          & .073$\pm$8E-5          & .094$\pm$.002          & \textbf{.016$\pm$.005} \\
                         & NFE                 & \textbf{31.9$\pm$35.9} & 53.0$\pm$19.9          & 202.6$\pm$132.5        & 70.8$\pm$7.5           \\
                         & CPED                & 6.5                    & 12.7                   & Base                   & \textbf{4.5}           \\
    \hline
    \multirow{4}{*}{SW}  & $\mathcal{L}_{t+1}$ & 4E-4$\pm$2E-5          & .008$\pm$3E-4          & .010$\pm$6E-6          & \textbf{4E-4$\pm$5E-6} \\
                         & $\mathcal{L}_\tau$  & \textbf{.008$\pm$5E-4} & .178$\pm$.005          & .545$\pm$.005          & .095$\pm$.012          \\
                         & NFE                 & 115.9$\pm$1.5          & \textbf{26.0$\pm$.000} & 97.0$\pm$5.0           & 51.2$\pm$3.7           \\
                         & CPED                & 2.2                    & \textbf{.730}          & Base                   & 1.2                    \\
    \hline
    \multirow{4}{*}{CFD} & $\mathcal{L}_{t+1}$ & .328$\pm$.004          & .316$\pm$.009          & \textbf{.303$\pm$.002} & .326$\pm$1E-3          \\
                         & $\mathcal{L}_\tau$  & .535$\pm$.012          & .655$\pm$.004          & .787$\pm$.011          & \textbf{.314$\pm$.022} \\
                         & NFE                 & \textbf{14.0$\pm$.000} & 50.0$\pm$.000          & 459.2$\pm$22.5         & 22.9$\pm$.749          \\
                         & CPED                & .121                   & .821                   & Base                   & \textbf{.105}          \\
    \hline
  \end{tabular}
  \caption{Benchmarking on the Diffusion-Reaction, Shallow Water and CFD Turb. datasets in direct auto-regressive settings.}
  \label{tab:sota_comparison_rollout_1}
\end{table}
In the turbulent CFD regime, CVF exhibits a unique self-correcting behavior where the rollout RMSE ($\mathcal{L}\tau$) remains lower than the single-step error ($\mathcal{L}{t+1}$).
This suggests a unique self-correcting behavior of CVF, where it captures the underlying physical manifold of the global flow, and thus corrects its trajectory over time, while baselines may achieve lower single-step error by overfitting to local noise but rapidly drift away from the true dynamics during long-horizon rollouts. This phenomenon highlights the importance of evaluating models not only on single-step accuracy but also on their ability to maintain physical consistency and stability over long-term predictions, especially in complex, turbulent regimes where local noise can be misleading.

\subsection{N-ODE vs. CVF: Discussion on the Performance Discrepancy in SW Dataset}
\label{appendix:n_ode_vs_cvf}
Further investigation into N-ODE's performance across the two settings reveals a critical insight into the nature of its learned dynamics. While its accrued RMSE remains comparable between the settings ($0.008$ vs. $0.008$), the required NFE explodes. We posit that N-ODE is deeply reliant on the microscopic auto-regressive prior injected during training (via the fixed-step RK4 solver). When deprived of this uniform grid during direct auto-regressive inference, N-ODE exhibits a pathological sensitivity to high-frequency stiffness and compounding errors. To compensate and artificially restore its train-time auto-regressive behavior, the adaptive solver is forced to aggressively truncate step sizes, operating in a brute-force micro-stepping regime. Consequently, N-ODE asymptotically purchases its macroscopic accuracy at the cost of severe computational overhead, highlighting the failure of unconstrained models to act as efficient, geometry-preserving surrogates.

The performance of N-ODE on the SW dataset proves a critical baseline capability: when N-ODE inadvertently learns a vector field that permits the restoration of the true dynamics, the adaptive \textit{dopri5} solver is fully capable of achieving high precision by brute-forcing massive computational resources (NFE $\approx 116$).
However, this makes N-ODE's failure on the DR and CFD datasets profoundly revealing. On these datasets, N-ODE exhibits a paradoxical ``false confidence'': despite failing to recover the true dynamics (producing drastically higher $\mathcal{L}_\tau$ errors), it chooses a computationally cheaper path, with its NFE dropping significantly to $31.9$ and $14.0$, respectively. It neither achieves high accuracy nor attempts to consume the resources required to do so.

This phenomenon exposes the fatal flaw of unconstrained continuous models. The adaptive solver determines its step size based on local truncation errors. Its reluctance to increase NFE in DR and CFD implies that the local errors are strictly within the rigorous $10^{-3}$ tolerance. We suspect that N-ODE provides the solver with a geometric landscape that is fundamentally misaligned macroscopically, yet smoothly deceptive locally. The \textit{dopri5} solver confidently executes large, efficient steps along this deceptive path, yet remains fundamentally powerless to salvage the trajectory because the underlying vector field itself is systematically pointing in the wrong direction.

Synthesizing the empirical results across both testing paradigms exposes a crucial limitation in existing frameworks: a sophisticated numerical solver cannot rescue a fundamentally biased vector field, nor can a rigid solver navigate complex dynamics without external temporal meta-information.

\section{Ablation Details}
\subsection{Downsampling Rate}
\label{appendix:downsampling}

\paragraph{Time-Informed Setting} In the Time-Informed setting, the model is trained with a fixed time step $k$ that is uniformly applied across the training data.
\begin{equation}
  \begin{split}
    \text{Uniform Downsampling}: & k<0, g(x)=-0.000822*x^2 + 0.000558*x^1 + 1.06e-02
  \end{split}
\end{equation}
\paragraph{Direct Auto-regressive Setting} In the Direct Auto-regressive setting, the model is trained with a random time step $k$ that is sampled from a uniform distribution for each training instance.
\begin{equation}
  \begin{split}
    \text{Random Downsampling}:  & k>0, f(x)=e^{0.015229*log(x)^2 + -0.285431*log(x)^1 + -3.506706} \\
    \text{Uniform Downsampling}: & k<0, g(x)=e^{0.118474*log(x)^2 + -0.635297*log(x)^1 + -4.129588}
  \end{split}
\end{equation}
The evaluation results~(\Cref{fig:ablation_dynamic_downsampling_rollout}) and the trends show a co-reduction in both the NFE and the RMSE as the downsampling rate increases, which indicates that the model learns a more accurate manifold with a wider range of time steps, thus leading to better generalization performance in the unobserved temporal regime.
However, a marginal reduction in RMSE is observed as the downsampling rate increases-- as the downsampling rate ($\|k\|$) increases exponentially, reduction in RMSE is not significant after a certain point; that is, the performance gain from learning with a wider range of time steps is marginal after a certain point.

While our benchmarking trains the model with a fixed downsampling rate of $k=-2$ (i.e., training on every other frame), the ablation results suggest that training with a wider range of time steps (e.g., $k=-4$ or $k=-16$) can further improve the model's generalization performance in the unobserved temporal regime, albeit with diminishing returns in terms of RMSE reduction.
\subsection{Tangent Supervision}
\label{sec:tangent_supervision_ablation}
\label{appendix:tangent_velocity_approximation}

\paragraph{Spline Approximation of Tangent Velocity}
By training the model to match spline tangent velocity, an error bound between the learned and the true tangent velocity $\mathcal{L} = \| v_{t} - \psi_{\theta}(s,0) \|_{\infty}$ can be established based on the approximation error of the spline interpolation.
According to \citet{hall1976optimal,deboor1978practical}, the error bound of first-order derivative from $k$-order spline interpolation is given as:
\begin{equation}
  \begin{split}
    \| v_{t} - v_{s} \|_{\infty} & \leq \epsilon \| s^{(k+1)} \|_{\infty} \Delta t^{k}
  \end{split}
\end{equation}
where $\epsilon$ is a constant depending on the spline basis functions, $s^{(k+1)}$ is the fourth-order derivative of the true trajectory, and $\Delta t$ is the maximum time step between consecutive observations.
Therefore, the expected error of learned tangent velocity $\mathcal{L}$ can be bounded as:
\begin{equation}
  \begin{split}
    \mathbb{E}\left[\mathcal{L}\right] & \leq \| v_{t} - v_{s} \|_{\infty} + \mathbb{E}\left[\| v_{s} - \psi_{\theta}(s,0) \|_{\infty}\right]                            \\
                                       & \leq \epsilon \| s^{(k+1)} \|_{\infty} \Delta t^{k} +                     \mathbb{E}\left[RMSE(v_{s},\psi_{\theta}(s,0))\right]
  \end{split}
\end{equation}
which is contrasting to learning a pure neural ODE without tangent velocity supervision, where no such error bound can be established due to the infinite possible tangent velocity fields that are consistent with the discrete-time observations.

\Cref{fig:tangent_supervision_ablation} evaluates the effect of tangent supervision in the Time-Informed and Direct Auto-regressive settings.
\citet{hou2025cfo} proposed learning first-order quintic Hermite derivative to ensure smoothness of acceleration field on the knots such that the learned velocity field aligns better with the true underlying physical dynamics.
Therefore, the model is tested with combining the cubic spline-based tangent supervision, quintic Hermite derivative supervision, and without tangent supervision.
\begin{equation}
  \begin{split}
    \mathcal{J}_{tangent} & = \mathbb{E}\bigg[ \|\psi_{\theta}(s_{t},0)-v_{t} \|_{2}^{2} + \| \psi_{\theta}(s_{t}, \Delta t) - v_{\Delta t} \|_{2}^{2}+ \|\mathcal{R}_{3}(s_{t},\Delta t)\|_{2}^{2}\bigg]
  \end{split}
\end{equation}
\begin{figure}[htpb]
  \centering
  \subfloat[Time-Informed]{
    \includegraphics[width=.5\linewidth]{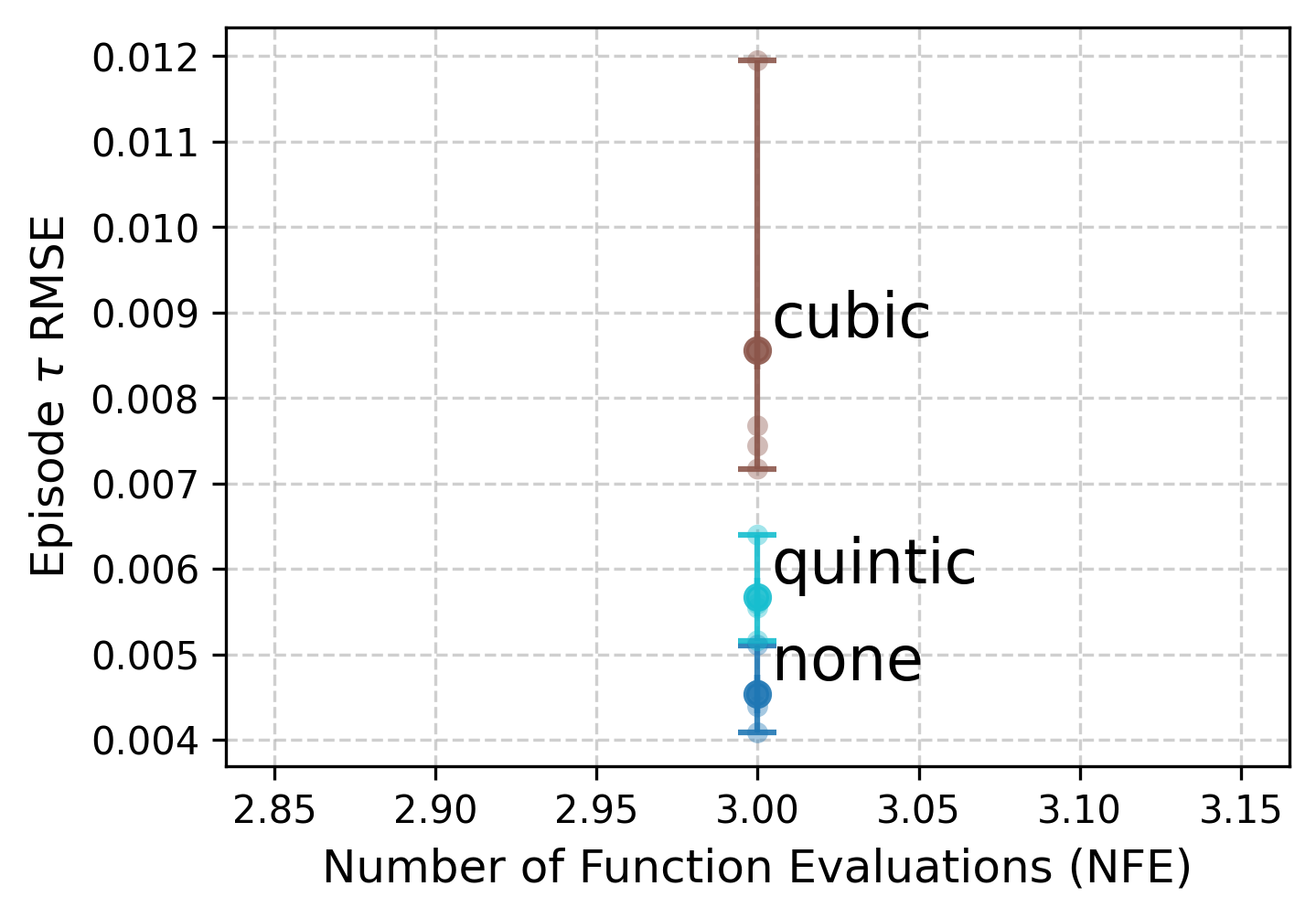}}
  \subfloat[Direct Auto-regressive]{\includegraphics[width=.5\linewidth]{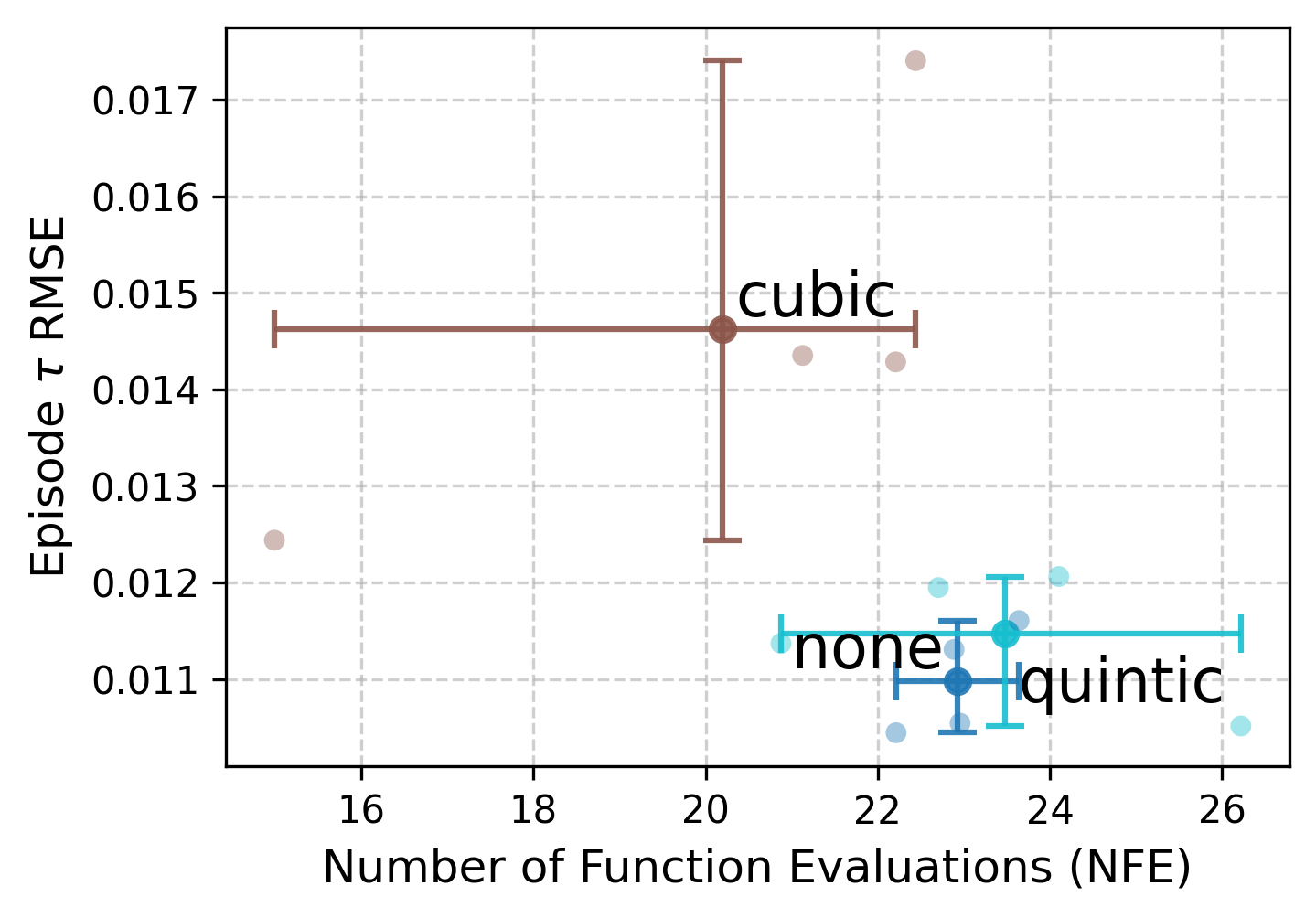}}
  \caption{Ablation Study on the Effect of Spline-Based Tangent Supervision.}
  \label{fig:tangent_supervision_ablation}
\end{figure}
The results in both settings show consistent trends that no tangent supervision gains the best performance, and the cubic spline-based tangent supervision performs worse than the quintic Hermite derivative supervision.
This reflects the biased nature of spline and Hermite supervision, which may not align perfectly with the semi-group or Lie group structure of the underlying physical dynamics, thus leading to worse generalization performance than learning from raw velocity data.
However, the performance difference between the quintic Hermite derivative supervision and no tangent supervision is not significant, which showcases again the importance of correct inductive bias in learning physical dynamics - learning with velocity smoothness bias (cubic spline) is worse than learning with acceleration smoothness bias (quintic Hermite), and learning with no bias is better than learning with the wrong bias.
The benefits of tangent supervision might be more significant in avoiding plateauing in the early stage of training or failing into plain solutions as linear interpolation, which is not reflected in the final performance after convergence.
Therefore, it is treated as an optional technique that can be applied when the training process is unstable or prone to linear interpolation degradation, and it can be safely removed when the training process is smooth and converges well.

\section{Inference Details}

\subsection{Visual Analysis of Failure Modes}
\begin{figure}[htpb]
  \centering
  \subfloat[Prediction]{\includegraphics[width=1\linewidth]{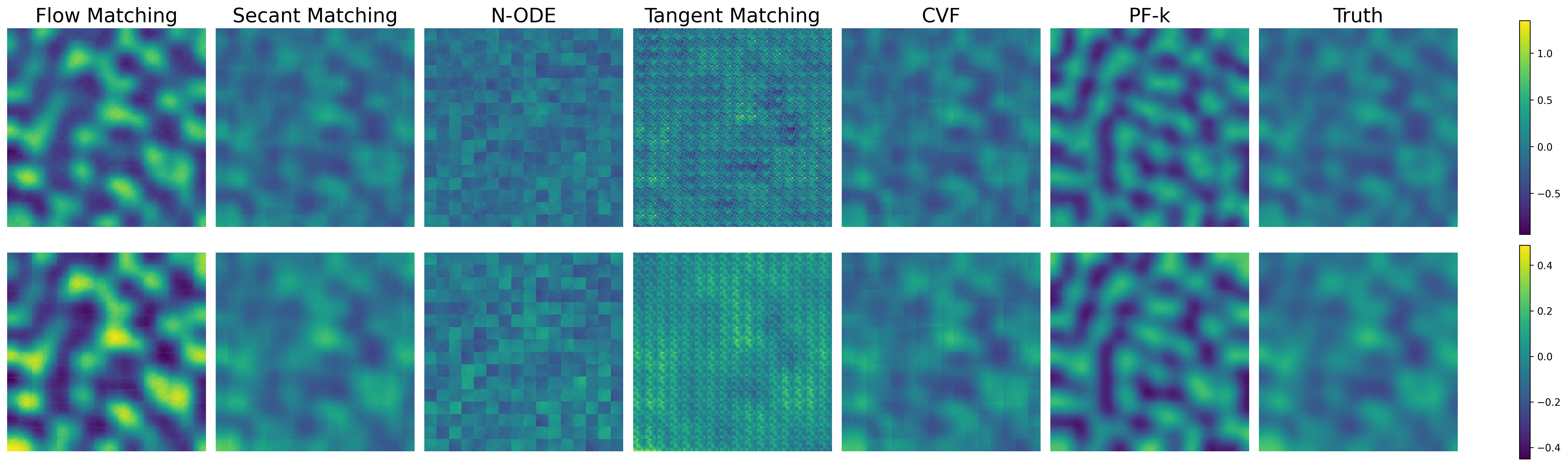}}

  \subfloat[Mean Absolute Error]{\includegraphics[width=1\linewidth]{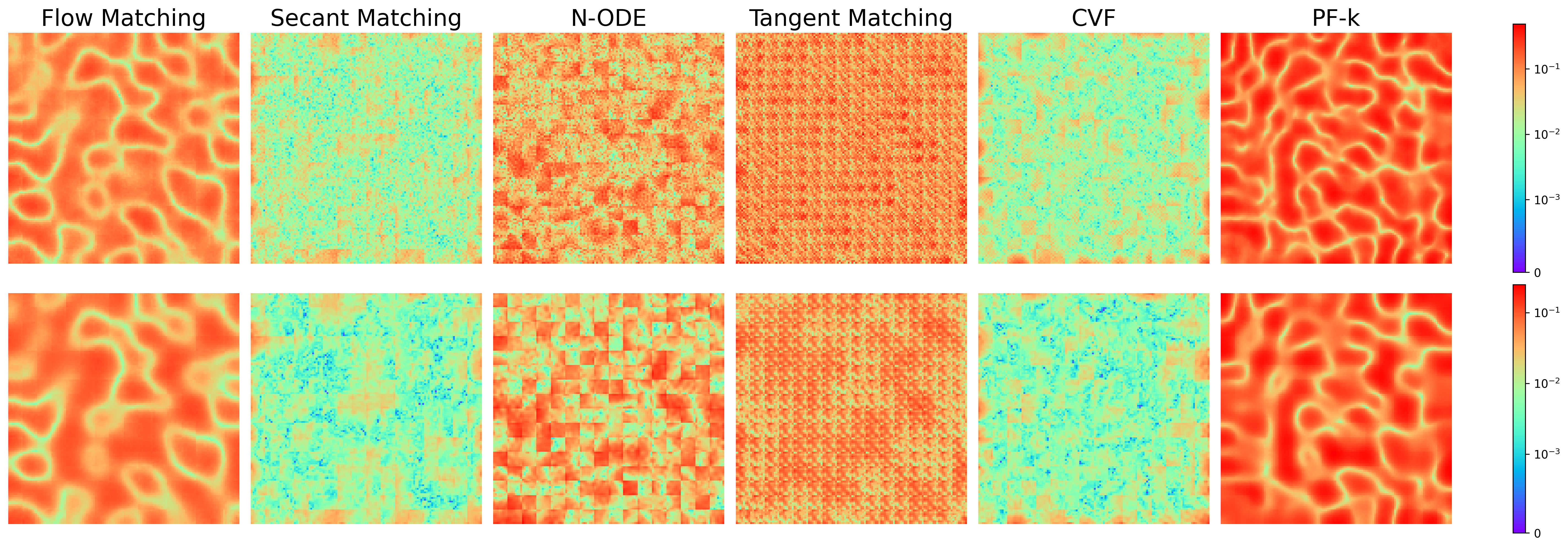}}
  \caption{Example Final Frame of Rollout Predictions on the DR Dataset.}
  \label{fig:final_frame_rollout_dr}
\end{figure}
Figures~\ref{fig:final_frame_rollout_dr} and~\ref{fig:final_frame_rollout_sw} visualize the examples of final frame prediction and the corresponding spatial error distribution (pixel-wise mean absolute error) from the two datasets, respectively.
The qualitative comparison reveals distinct failure modes in the baseline methods that our CVF framework effectively overcomes.
\paragraph{Visualizing Error Dynamics: The ``Blur" vs. The ``Noise"}
The error heatmaps (Figure \ref{fig:error_maps_sw}) reveal distinct failure modes across different methodologies, offering a qualitative explanation for the quantitative gap in Table \ref{tab:sota_comparison_rollout_1}.

\textbf{1. The ``Temporal Smearing" of Flow Matching:}
As observed in the Shallow Water error maps (Figure \ref{fig:error_maps_sw}, column 1), Flow Matching (CFM) exhibits a characteristic \textit{ring-like error pattern}.
While CFM captures the global topology, it suffers from severe spectral bias, effectively acting as a low-pass filter.
Physically, this manifests as ``temporal smearing": instead of resolving the sharp shock front at $t+1$, the model predicts a diffused average of the wave's propagation path.
This suggests that without the explicit tangent anchor, the continuous probability flow takes a ``shortcut" through the manifold curvature, underestimating the stiffness of the wavefront and resulting in a blurred, low-amplitude prediction that fails to conserve energy (amplitude decay).
\begin{figure}[htpb]
  \centering
  \subfloat[Prediction]{\includegraphics[width=1\linewidth]{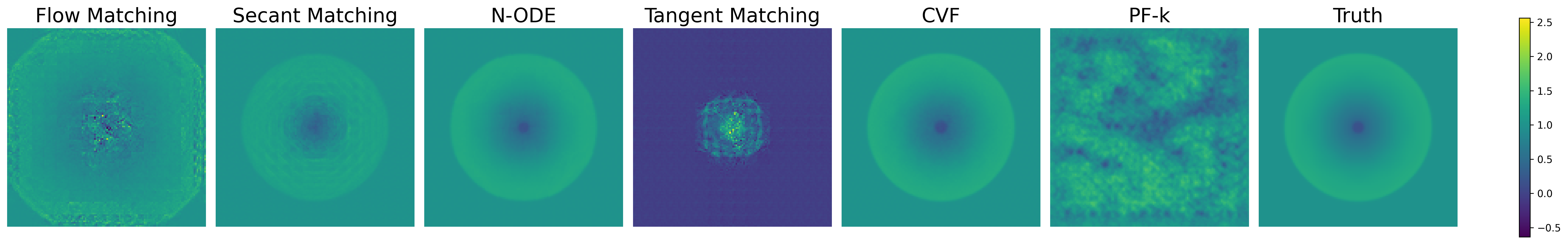}}

  \subfloat[Mean Absolute Error]{\includegraphics[width=1\linewidth]{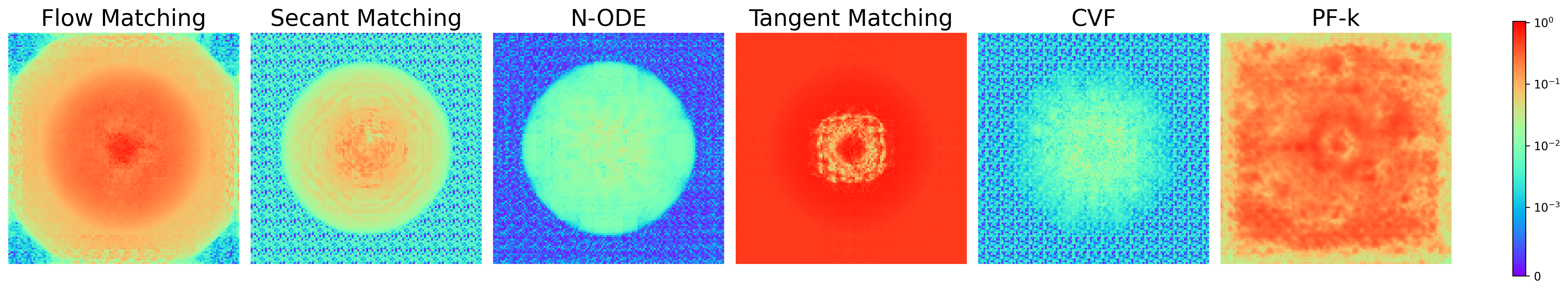}\label{fig:error_maps_sw}}
  \caption{Example Final Frame of Rollout Predictions on the SW Dataset.}
  \label{fig:final_frame_rollout_sw}
\end{figure}

\textbf{2. The Unbounded Divergence of Unconstrained Extrapolation (PF-$k$):}
In stark contrast, the Latent Extrapolation baseline (PF-$k$) demonstrates the diverging consequences of removing the bidirectional topological constraint.
As seen in the \Cref{fig:error_maps_sw} column PF-$k$, the error pattern transforms from structural mismatch into unstructured high-frequency noise, which manifests as rapid collapsing of the wave structure just after half of the inference duration~(\Cref{fig:multi_frame_lek}).
This confirms that purely extrapolating the latent vector field is an unstable process; strictly dissipative or dispersive errors accumulate exponentially ($ \epsilon_{t+k} \propto e^{\lambda k} \epsilon_0 $).
The rapid disintegration of the PF-$k$ predictions validates that our Symmetry Condition is not merely an auxiliary loss, but a necessary condition to bound the accumulation of error on the latent manifold.

\textbf{3. The Precision of CVF:}
CVF (Figure \ref{fig:error_maps_sw}, column 5) uniquely avoids both extremes.
The error map is uniformly low (deep blue), with no visible ring artifacts (implying correct phase velocity) and no noise accumulation (implying topological stability).
By anchoring the local derivative (Tangent) while enforcing macroscopic reversibility (Secant), CVF successfully resolves the sharp wavefront of the shallow water system without the diffusive blurring of CFM or the chaotic divergence of PF-$k$.

\begin{figure}[htpb]
  \centering
  \subfloat[PF-$k$ 0.0, 0.2, 0.4, 0.6, 0.8s]{
    \includegraphics[width=.195\linewidth]{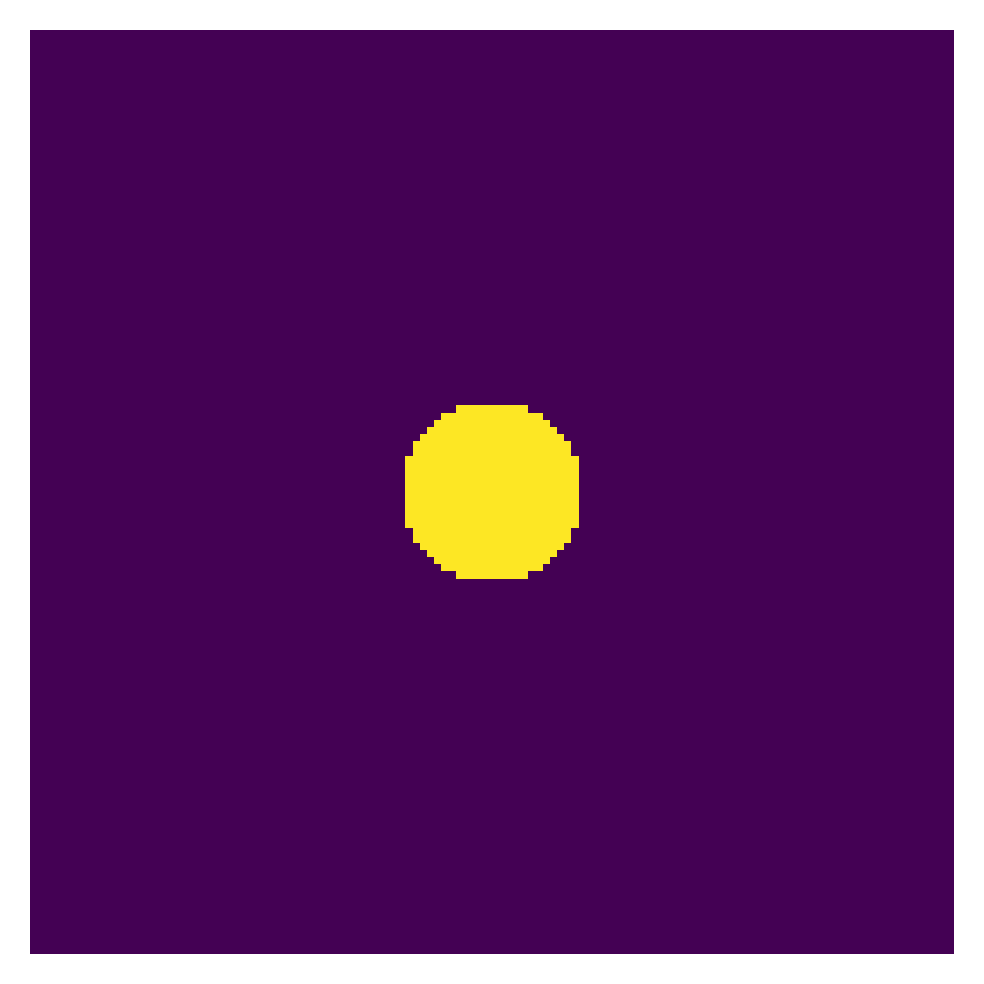}
    \includegraphics[width=.195\linewidth]{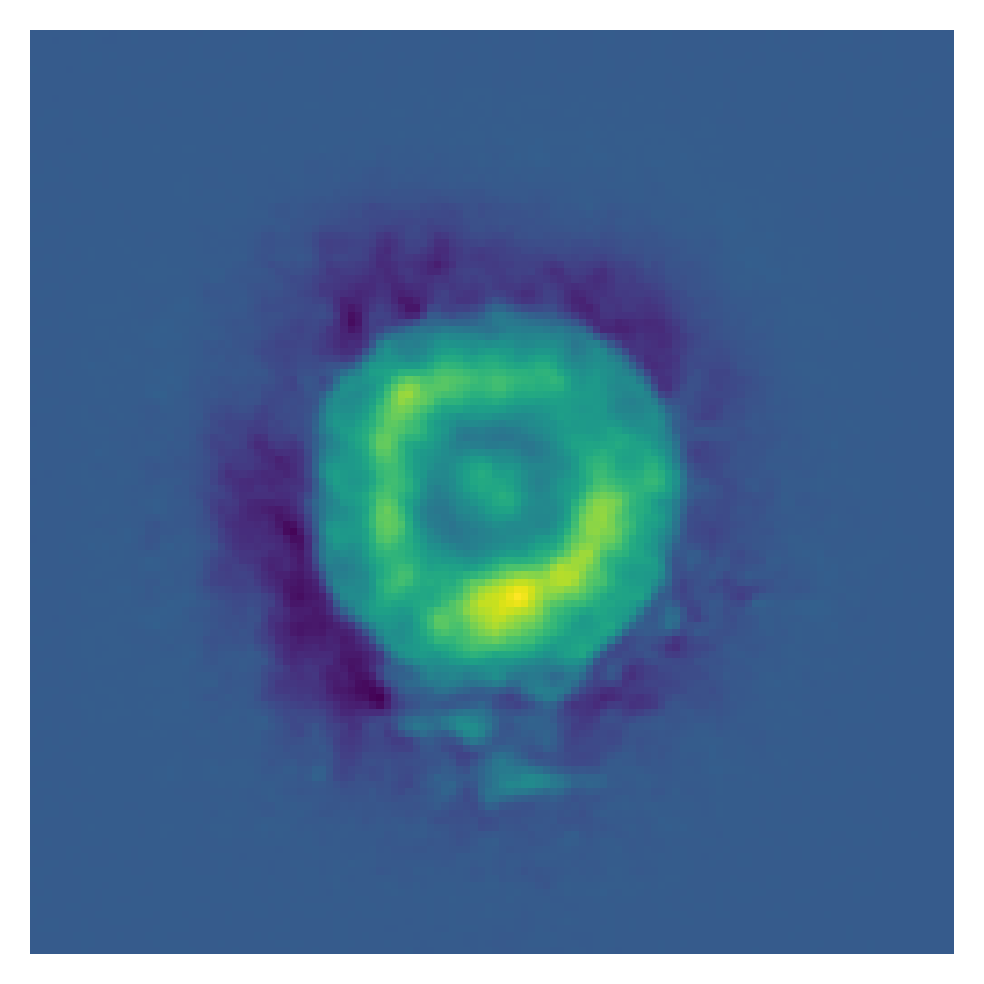}
    \includegraphics[width=.195\linewidth]{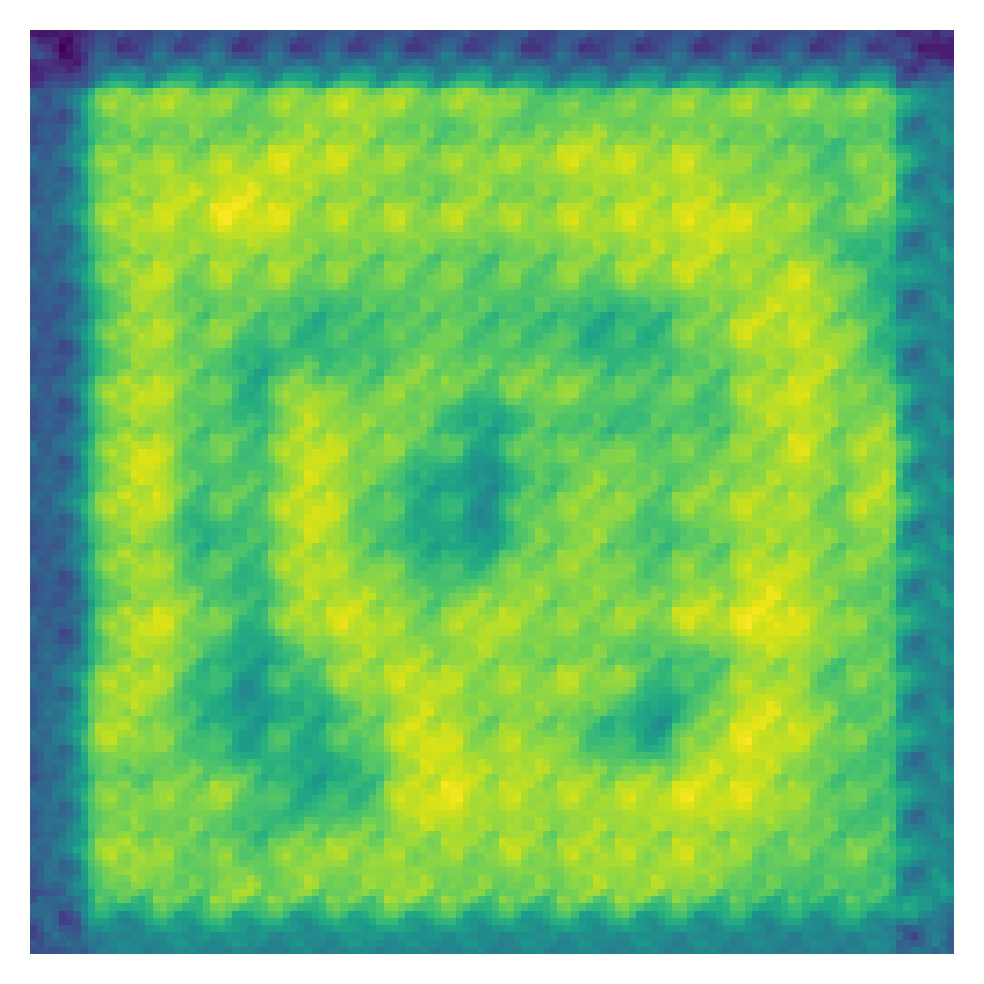}
    \includegraphics[width=.195\linewidth]{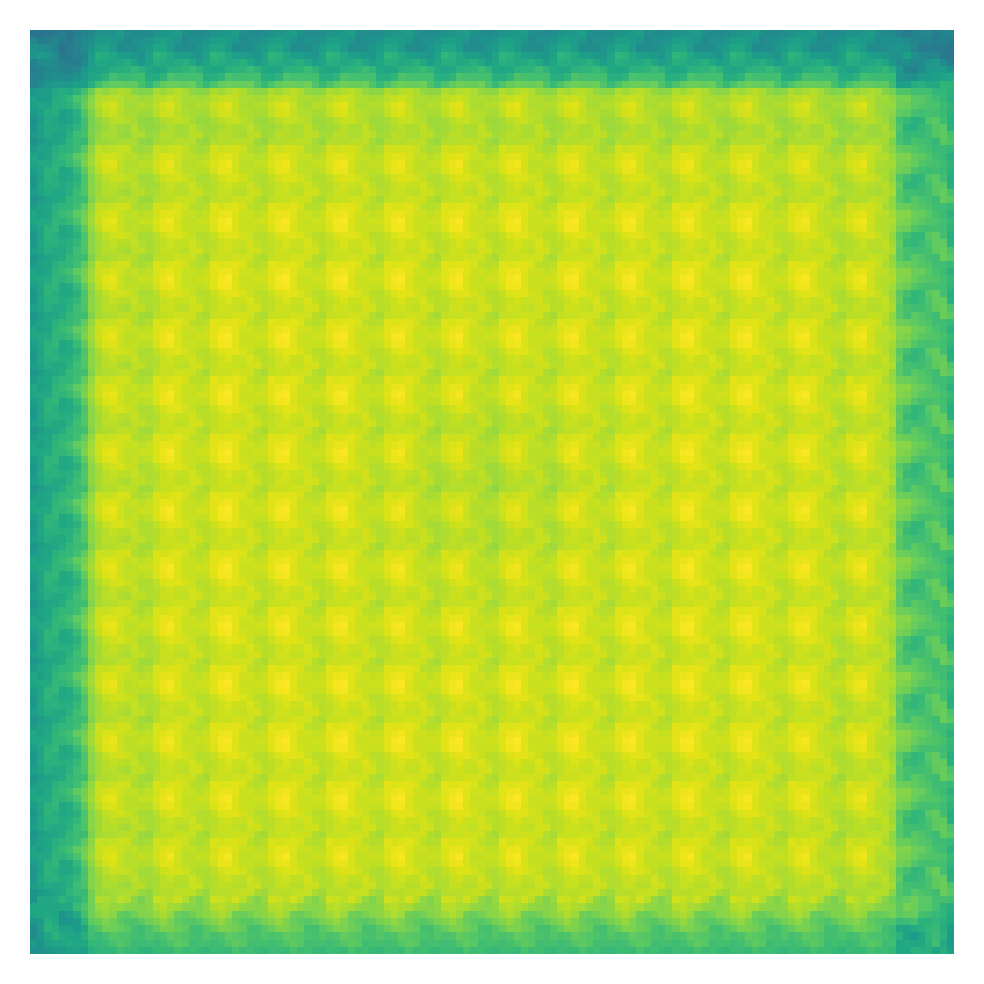}
    \includegraphics[width=.195\linewidth]{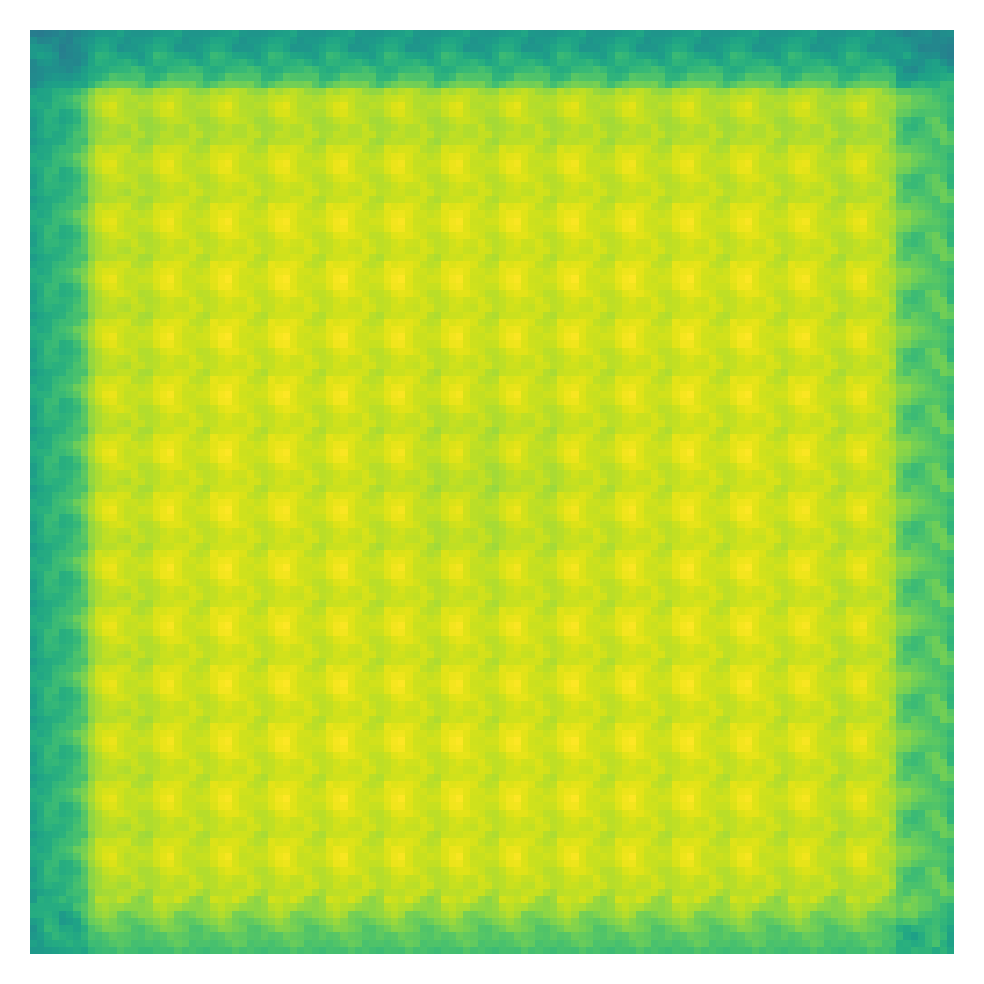}
    \label{fig:multi_frame_lek}
  }

  \subfloat[Flow Matching 0.0, 0.2, 0.4, 0.6, 0.8s]{
    \includegraphics[width=.195\linewidth]{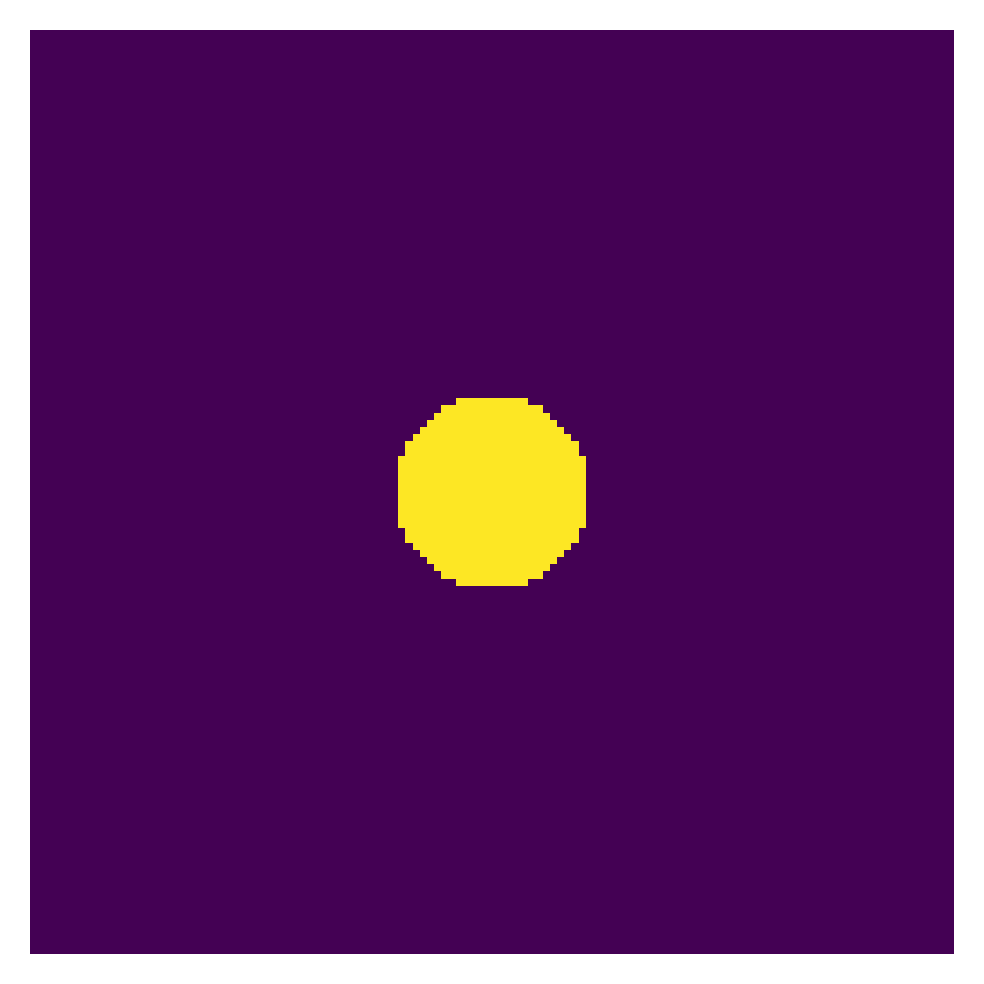}
    \includegraphics[width=.195\linewidth]{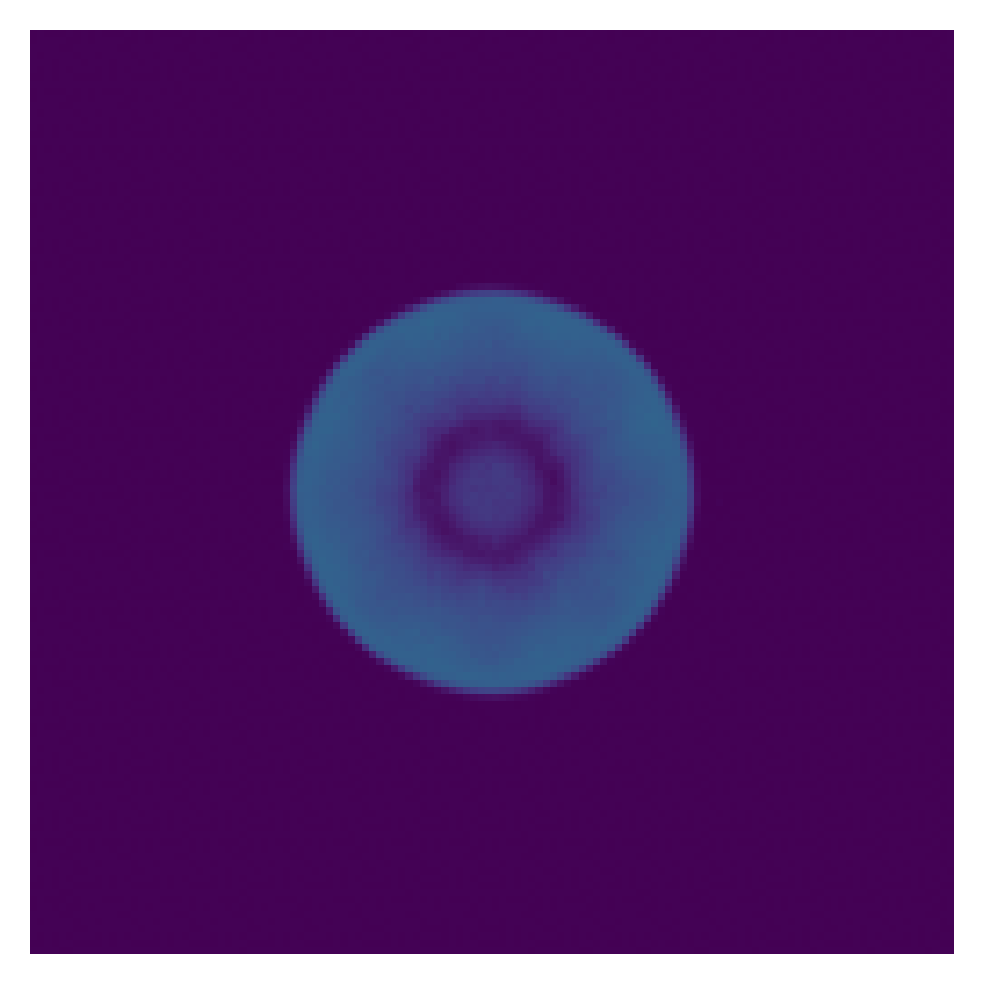}
    \includegraphics[width=.195\linewidth]{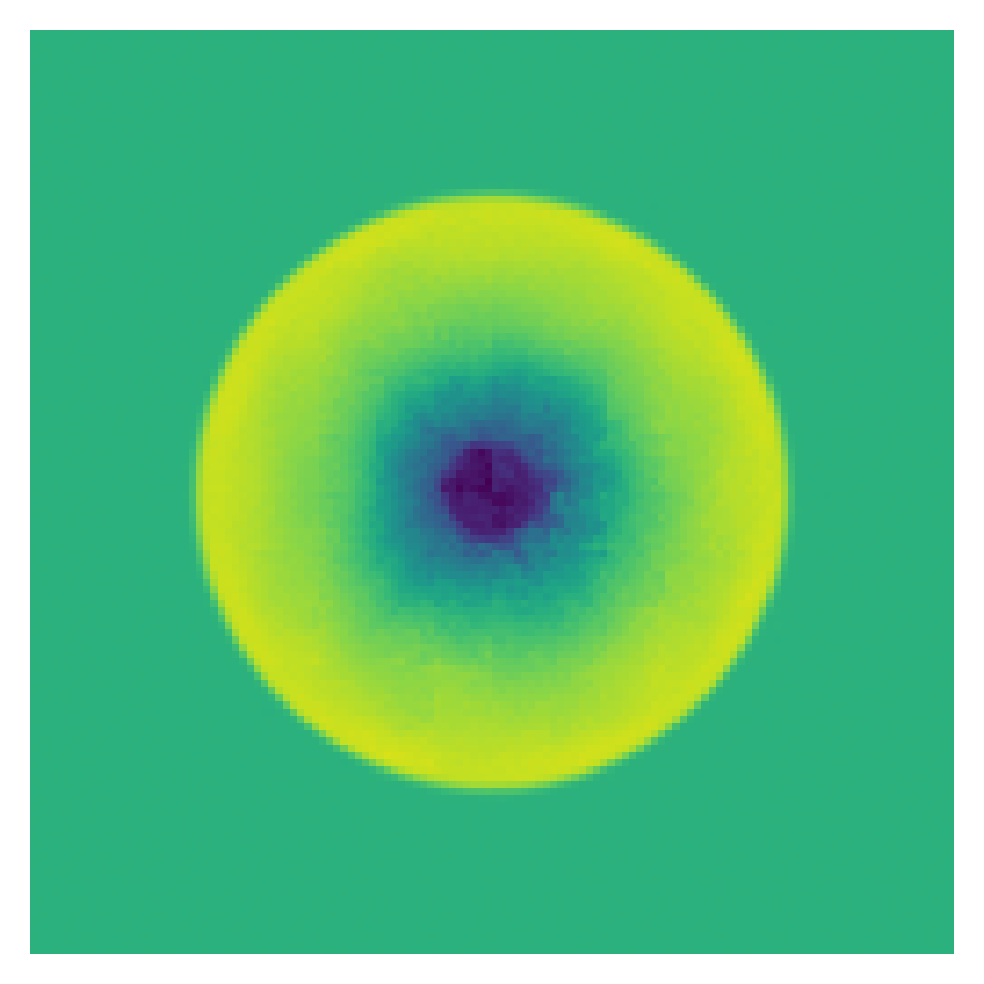}
    \includegraphics[width=.195\linewidth]{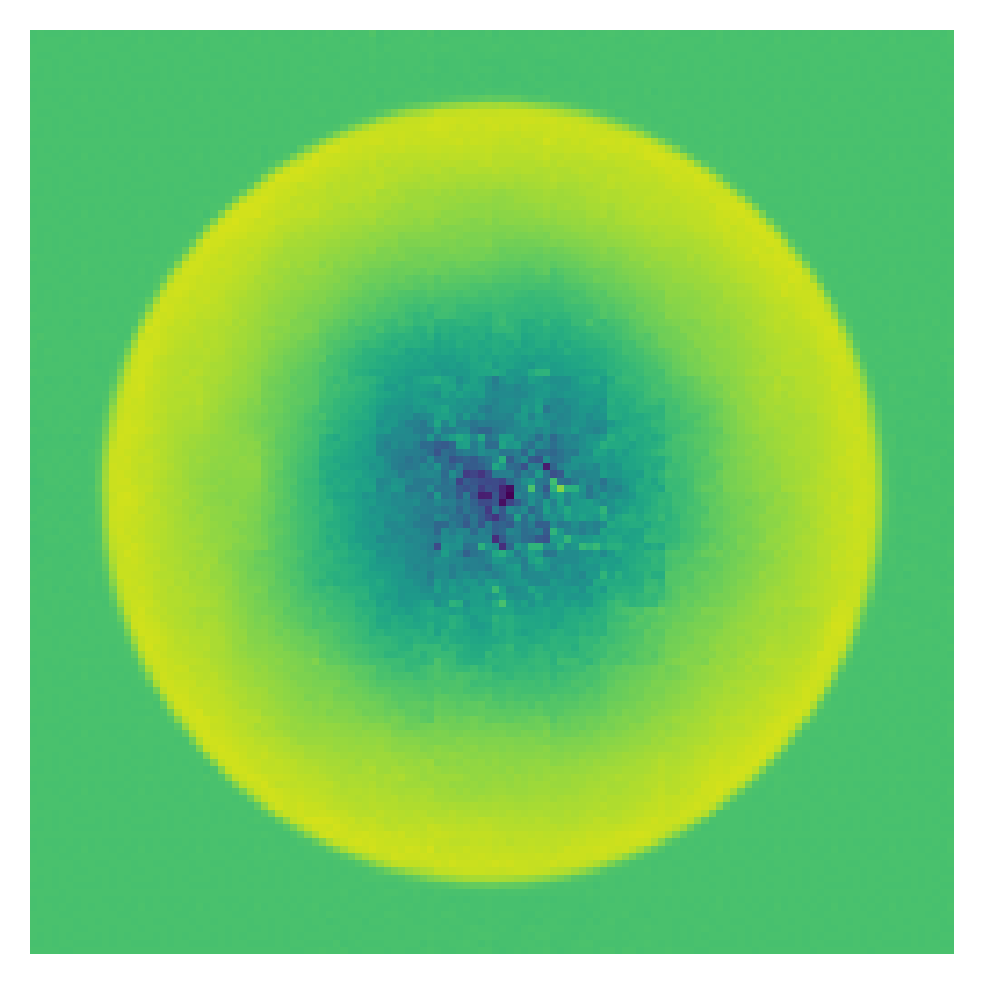}
    \includegraphics[width=.195\linewidth]{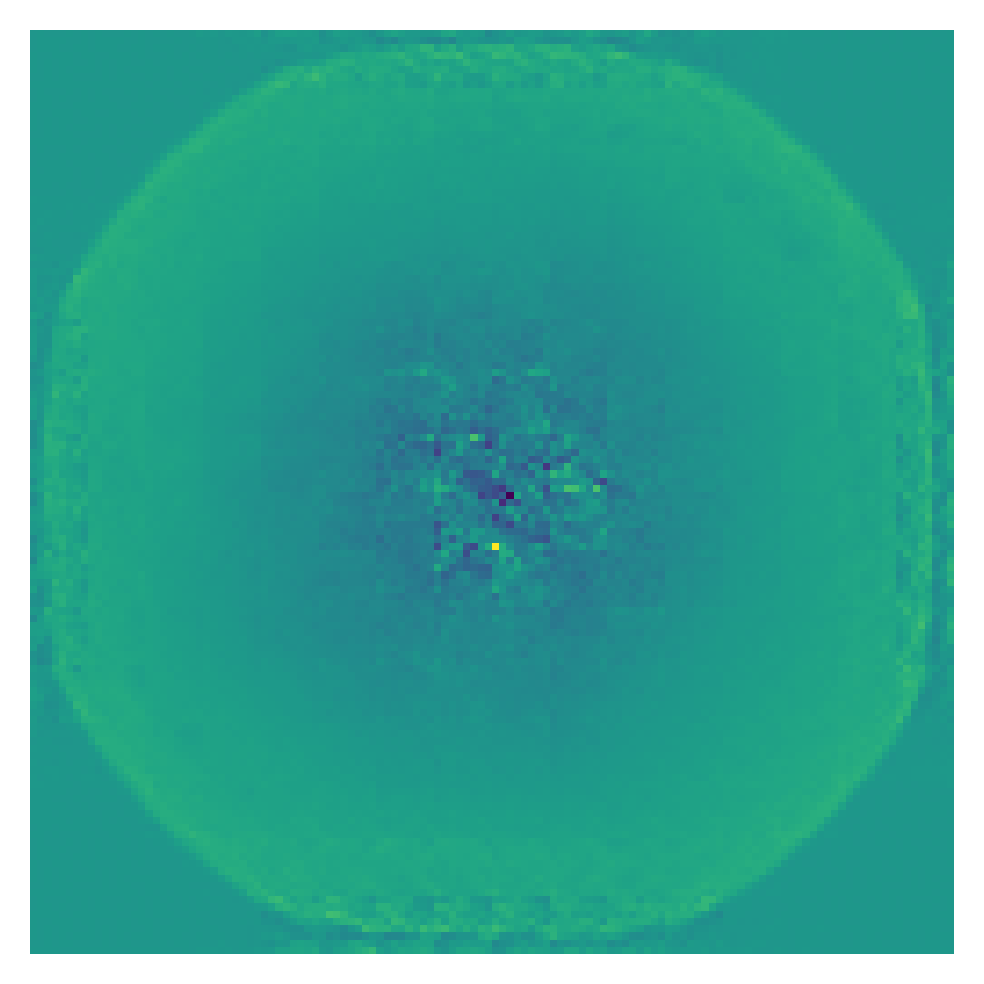}
  }
  \caption{The Rollout Predictions of PF-$k$ and CFM on the SW Dataset, showing significant amplitude decay and wave distortion over time.}
  \label{fig:rollout_predictions_lek_sw}
\end{figure}
\begin{figure}[htpb]
  \centering
  \includegraphics[width=1\linewidth]{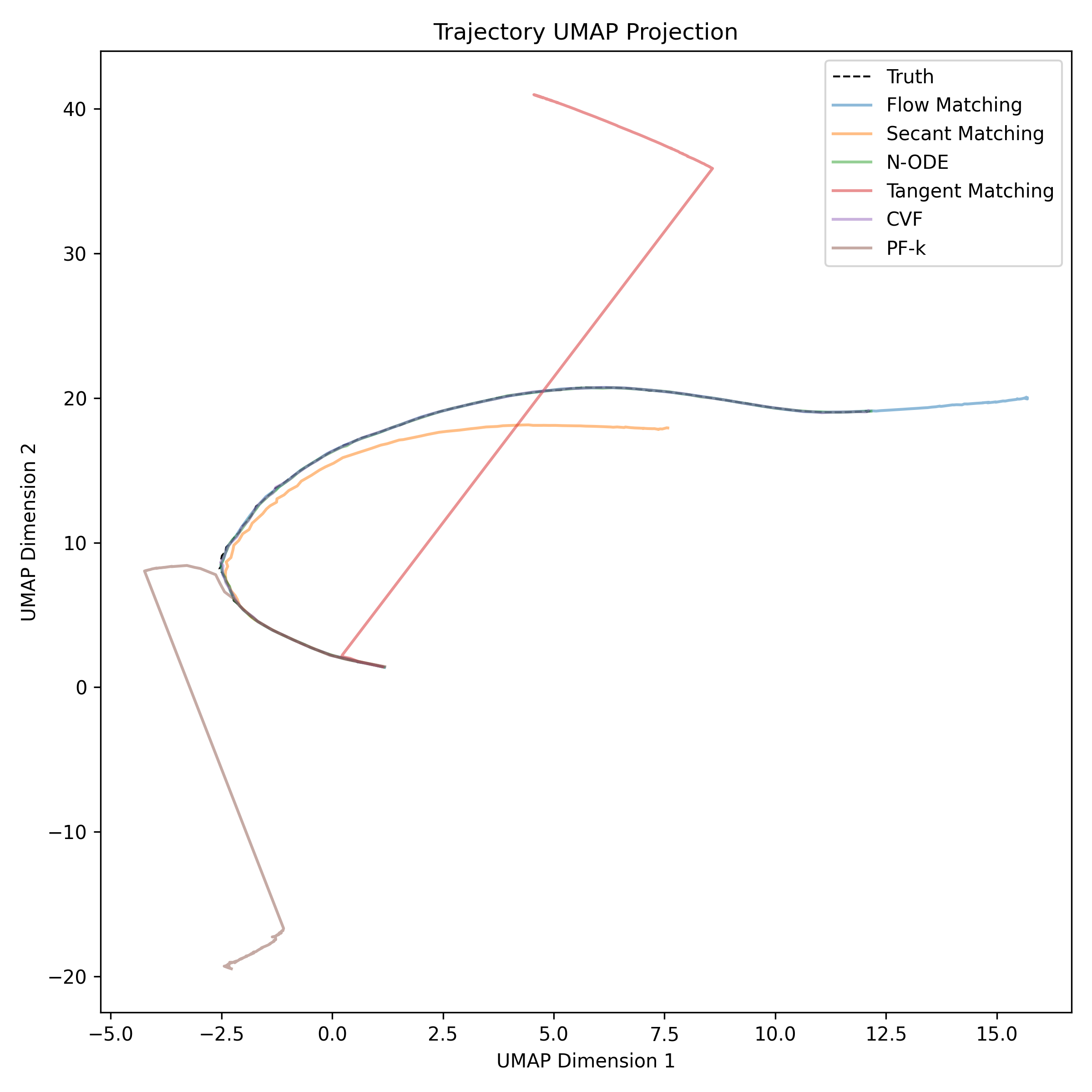}
  \caption{Comparison of Example Rollout Trajectory of the Benchmarking Methods on the SW Dataset on 2D U-MAP Visualization, showing significant deviation from the true trajectory.}
  \label{fig:rollout_trajectory_sw}
\end{figure}
\end{document}